\newtheorem{theorem}{Theorem}
\newtheorem{lemma}[theorem]{Lemma}
\newtheorem{proposition}[theorem]{Proposition}
\newtheorem{corollary}[theorem]{Corollary}
\newtheorem{problem}[theorem]{Problem}
\newcommand{\reals}{\ensuremath{\mathbb{R}}}
\newcommand{\naturals}{\ensuremath{\mathbb{N}}}
\newcommand{\integers}{\ensuremath{\mathbb{Z}}}
\newcommand{\plane}{\ensuremath{\reals^2}}       
\newcommand{\lattice}{\ensuremath{\integers^2}}
\newcommand{\ball}[2]{\ensuremath{{B_{#2}(#1)}}} 
\newcommand{\BoxRegion}[2]{\ensuremath{{B(l_1,l_2)}}}
\newcommand{\countables}[1]{\ensuremath{{\cal C}_{#1}}} 
\newcommand{\PP}{\ensuremath{\mathbb{P}}}
\newcommand{\Event}{{E}} 
\newcommand{\BorelSets}[1]{{{\cal B}(#1)}}   
\newcommand{\PointProcess}{P}
\newcommand{\PoissonProcess}[1]{{\Pi_{#1}}}
\newcommand{\BorelMeasure}{{\xi}}         
\newcommand{\CountingMeasure}{{n}}   
\newcommand{\SpaceCountingMeasure}{{{\cal N}}} 
\newcommand{\EventCountingMeasure}{{N}} 
\newcommand{\MarkSpace}{{\cal K}}
\newcommand{\MarkMeasure}{K}
\newcommand{\MarkedCountingMeasure}{{m}}
\newcommand{\SpaceMarkedCountingMeasure}{{{\cal M}_{\MarkSpace}}}
\newcommand{\EventMarkedCountingMeasure}{{M}}
\newcommand{\MarkedPointProcess}{{Q}}
\newcommand{\MarkedPointProcessMeasure}[1]{{{\cal P}_{#1}}}
\newcommand{\MarkedInvariantSigmaAlgebra}{{\cal I}}
\newcommand{\GroundPointProcess}[1]{{{#1}_g}}
\newcommand{\Translate}[2]{{T_{#1}{#2}}}
\newcommand{\TranslateMeasure}[2]{{S_{#1}{#2}}}
\newcommand{\InverseTranslateMeasure}[2]{{S^{-1}_{#1}{#2}}}
\newcommand{\TranslateVector}{{v}}
\newcommand{\ElementPlane}{{y}}
\newcommand{\ElementMark}{{k}}
\newcommand{\Radius}{{r}}
\newcommand{\ElementLattice}{{y}}
\newcommand{\SubsetPlane}{{A}}
\newcommand{\SubsetMark}{{K}}
\newcommand{\SubsetPlaneMark}{{B}}
\newcommand{\Probability}{\ensuremath{p}}
\newcommand{\CriticalProbability}{\ensuremath{p_\mathrm{crit}^\mathrm{b}}}
\newcommand{\OpenCluster}[1]{\ensuremath{W_{#1}^0}}
\newcommand{\InfiniteOpenCluster}[1]{\ensuremath{W_{#1}}}
\newcommand{\OpenClusterSite}[1]{\ensuremath{V_{#1}^0}}
\newcommand{\CriticalProbabilitySite}{\ensuremath{p_\mathrm{crit}^\mathrm{s}}}
\newcommand{\OccupiedRegion}[2]{W_{#1, #2}} 
\newcommand{\OccupiedComponent}[3]{W_{#1,#2}^{#3}}
\newcommand{\VacantRegion}[2]{V_{#1, #2}} 
\newcommand{\VacantComponent}[3]{V_{#1,#2}^{#3}}
\newcommand{\criticaldegree}{d_c}  
\newcommand{\w}{\ensuremath{w}} 
\newcommand{\wmax}{\ensuremath{w_\mathrm{max}}} 
\newcommand{\speedratio}{\ensuremath{\nu}} 
\newcommand{\speed}{\ensuremath{\nu}}
\newcommand{\Maneuverability}{\ensuremath{\alpha}}
\newcommand{\anglespeed}{\ensuremath{\alpha}}
\newcommand{\U}{\ensuremath{{\cal U}}} 
\newcommand{\X}{\ensuremath{{\cal X}}} 
\newcommand{\SetTrajectories}[1]{\ensuremath{\ensuremath{\Gamma_{#1}}}}
\newcommand{\Trajectory}{\ensuremath{y}}
\newcommand{\Reparametrization}{\ensuremath{\sigma}}
\newcommand{\inputvar}[1]{\ensuremath{u_{#1}}} 
\newcommand{\Inputvar}[1]{\ensuremath{U_{#1}}}
\newcommand{\Statevar}[2]{\ensuremath{X_{#1,#2}}}
\newcommand{\Outputvar}[2]{\ensuremath{Y_{#1,#2}}}
\newcommand{\ReachableStates}[3]{{{\cal R}_{#1}(#2,#3)}}
\newcommand{\LeftPrimaryShadow}[3]{{\ensuremath{V_{#1,#3}(#2)}}}
\newcommand{\RightPrimaryShadow}[3]{{\ensuremath{W_{#1,#3}(#2)}}}
\newcommand{\LeftShadow}[3]{\ensuremath{{\cal V}_{#1,#3}(#2)}}
\newcommand{\PrimaryShadow}[3]{{\ensuremath{S_{#1,#3}(#2)}}}
\newcommand{\ShadowDummy}{\ensuremath{S}}
\newcommand{\InducedShadow}[2]{{\ensuremath{{\tt Ind}(#1,#2)}}}
\newcommand{\TimeInterval}{\ensuremath{I}}
\newcommand{\Time}{\ensuremath{T}}
\newcommand{\InputSequence}{\ensuremath{\sigma}}
\newcommand{\OutputTrajectories}[1]{\ensuremath{\Psi_{#1}}}
\newcommand{\OutputSwitchPoints}[1]{\ensuremath{\psi}_{#1}}
\newcommand{\poissonforest}[1]{\ensuremath{\Pi_{#1}}}
\newcommand{\RandomTreeRadius}{{R}}   
\newcommand{\randomTreeRadius}[1]{{r_{#1}}}
\newcommand{\ForestRealizations}{\Phi}
\newcommand{\ForestGeneratingProcess}{F}
\newcommand{\Width}{w}
\newcommand{\Length}{l}
\newcommand{\Region}[2]{\ensuremath{{\cal R}(l,w)}}
\newcommand{\planner}{\ensuremath{p}} 
\newcommand{\plannerControl}{\ensuremath{g}} 
\newcommand{\plannerInitState}{\ensuremath{h}} 
\newcommand{\SetTreeLocations}{\ensuremath{{\cal Y}}}
\newcommand{\FreeGeneral}[1]{{{\cal X}_\mathrm{free}^{#1}}}  
\newcommand{\TreesGeneral}[1]{{{\cal X}_\mathrm{trees}^{#1}}}
\newcommand{\Card}[1]{\ensuremath{\vert #1 \vert}}
\newcommand{\LOne}[1]{\ensuremath{\Vert #1 \Vert_1}}
\begin{document}

\title{High-speed Flight in an Ergodic Forest}

\author{Sertac~Karaman \and \qquad\qquad\qquad\qquad\qquad
        Emilio~Frazzoli
\thanks{The authors are with the Laboratory for Information and Decision Systems, Massachusetts Institute of Technology, Cambridge, MA.}
}

\markboth{}{{Karaman and Frazzoli}: High-speed Flight in an Ergodic Forest}

\maketitle

\begin{abstract}

Inspired by birds flying through cluttered environments such as dense forests, this paper studies the theoretical foundations of a novel motion planning problem: high-speed navigation through a randomly-generated obstacle field when only the statistics of the obstacle generating process are known {\em a priori}. Resembling a planar forest environment, the obstacle generating process is assumed to determine the locations and sizes of disk-shaped obstacles. When this process is ergodic, and under mild technical conditions on the dynamics of the bird, it is shown that the existence of an infinite collision-free trajectory through the forest exhibits a phase transition. On one hand, if the bird flies faster than a certain critical speed, then, with probability one, there is no infinite collision-free trajectory, i.e., the bird will eventually collide with some tree, almost surely, regardless of the planning algorithm governing the bird's motion. On the other hand, if the bird flies slower than this critical speed, then there exists at least one infinite collision-free trajectory, almost surely. Lower and upper bounds on the critical speed are derived for the special case of a homogeneous Poisson forest considering a simple model for the bird's dynamics. For the same case, an equivalent percolation model is provided. Using this model, the phase diagram is approximated in Monte-Carlo simulations.
This paper also establishes novel connections between robot motion planning and statistical physics through ergodic theory and percolation theory, which may be of independent interest.
\end{abstract}

\begin{IEEEkeywords}
Motion planning, ergodic theory, percolation.
\end{IEEEkeywords}

\section{INTRODUCTION}

Flying or land-based, high-performance robots that can quickly navigate through cluttered environments, such as dense forests and urban canyons, have long been an objective of robotics research~\cite{Lentik:2010ef,Selekwa:2008jy,KwangjinYang:2008uu,Shim:2006uq,Langelaan:2005ua,Shim:2005tu,Brock:1999wm}, although very little could have been established in realizing them so far. 
Yet, nature is home to several species of birds that are capable of quickly flying through cluttered environments such as dense forests, swiftly maneuvering among trees as they are encountered~\cite{Flyingwiththefast:L9TlqcNh}. 

Biologist have studied many species of birds for decades~\cite{Warner:1931uh,Brown:1948vv,Brown:1953uu}, establishing a good understanding of their behavior during various activities, such as foraging, migration, and food transport~\cite{Hedenstrom:1995hz}. In particular, the optimal speed at which birds should fly during steady flight, e.g., to minimize the dissipated energy, has been studied extensively~\cite{Hedenstrom:1995hz,Ellington:1991vs,Tobalske:2003kl,Biewener:1998wl,Hedrick:2007ht,Hedrick:2007ex}. 
However, their flight through cluttered environments has received relatively little attention among biologists, although many bird species inhabit dense forests. 
In a recent paper, Hedrick and Biewener argued that the historical focus on steady flight may be due to its theoretical and experimental tractability, rather than its importance~\cite{Hedrick:2007ht,Hedrick:2007ex}. 

Inspired by various applications in robotics and biology, this paper studies the theoretical foundations of a novel motion planning problem: high-speed navigation through a randomly-generated obstacle field with known statistics. Throughout the paper, we motivate this problem by a bird's flight in a densely cluttered planar forest environment. We represent the forest by a suitable spatial stochastic point process, also called the forest-generating process, which determines the locations and the sizes of the trees. We model the bird as a dynamical system described by an ordinary differential equation parametrized by a speed variable. 
Then, we ask the following question: {\em what is the maximum speed at which flight can be maintained indefinitely with probabilistic guarantees, e.g., almost surely?}

In the case in which the forest-generating process is ergodic, the answer turns out to be tied to a novel phase transition result, namely: there exists a critical speed such that, on one hand, for any speed above it there is no infinite collision-free trajectory, with probability one; on the other hand, for any speed below it, there exists at least one 
infinite collision-free trajectory, with probability one. 
In this context, an infinite trajectory is one along which the distance of the bird from its starting point grows unbounded.
In other words, a trajectory is infinite, if it can not be contained in any bounded subset of the plane. Intuitively, on an infinite trajectory the bird keeps exploring new territories in the forest.

Roughly speaking, our assumption of ergodicity implies the absence of long-distance dependencies in the locations and the sizes of the trees. Further implications of the ergodicity assumption are discussed in detail later in the paper.
It is also shown that this assumption is essential for the phase transition results to hold. More precisely, there exists a forest-generating process that is stationary, but not ergodic, such that the probability that there exists an infinite collision-free trajectory through this forest is strictly between zero and one.

For the special case in which the locations of the trees are generated by a homogeneous Poisson process, and their sizes are same, and assuming a simple model for the bird's dynamics, we also derive explicit lower and upper bounds on the critical speed. The proof techniques employed in deriving such bounds can be extended, in principle, to cover a much larger class of forest-generating processes and bird dynamics, although the calculations would be  more complicated.

It is worth noting at this point that spatial point processes are widely used in the context of forestry~\cite{Husch:2003vx}. In fact, Stoyan~\cite{Stoyan:2000vo} noted that no other application area ``uses point process methods so intensively, and stimulated the theory so much, as has forestry." 
In particular, the Poisson process has been used extensively to represent the distribution of trees in a forest stand~\cite{Stoyan:2000vo}. Indeed, Tomppo found that around 30\% of the inventory plots in Finland could be considered as a realization of a spatial Poisson point process~\cite{Tomppo:1986ux}. 
Apart from Poisson processes, forestry literature has also employed vairous cluster processes~\cite{Boyden:2005bla}, Cox processes of several kinds~\cite{moller.ea:scan_jour_stat98}, and Gibbs processes~\cite{Stoyan:1998tp}, most of which are already ergodic or have ergodic variants.

The contributions of this paper can be listed as follows. First, it is shown that, under mild technical assumptions on the dynamics governing the motion of the bird, the existence of infinite collision-free trajectories traversing an ergodic forest exhibits a phase transition.
That is, for such forests there exists a critical speed, $\speed_\mathrm{crit}$, such that the following hold: (i) for any speed $\speed < \speed_\mathrm{crit}$, collision-free flight with speed $\speed$ can be maintained indefinitely, while increasingly getting away from some initial condition, with probability one; (ii)  for any speed $\speed > \speed_\mathrm{crit}$, the bird will eventually collide with a tree, almost surely, despite complete knowledge of all trees and regardless of the algorithm used to plan the bird's motion. 
Second, the special case in which the bird is governed by a simple dynamic model and the forest is generated by a homogeneous spatial Poisson point process with intensity $\rho$, and is such that all trees have a fixed radius $\Radius$ is thoroughly studied. 
Both lower and upper bounds for the critical speed are derived for any pair of $\rho$ and $\Radius$. 
It is shown that, when the bird flies with a speed that is above the upper bound, the probability that the bird can maintain collision-free flight for at least $T$ time units converges to zero exponentially fast with increasing $T$. On the other hand, when the bird flies with a speed that is below the lower bound, then the probability that there exists an infinite collision-free trajectory starting from a location that is within a distance $l$ to the origin converges to one exponentially fast with increasing $l$.
Third, for the same special case, an equivalent novel percolation model is given, and some unique properties of the model are discussed. Fourth, the phase diagram is approximately constructed through Monte-Carlo simulations.

Moreover, the results presented in this paper lay down the theoretical foundations of a novel class of motion planning problems involving high-speed motion in a randomly-generated obstacle field, where the statistics of the obstacle generation process are known, but, for instance, the precise location and the shape of the obstacles are not known {\em a priori}. 
This paper explores the fundamental limits of planning algorithms tailored to solve this problem.
Our analysis implies strong negative results showing that under certain conditions collision-free motion at speeds above a critical speed is impossible to maintain indefinitely, with probability one. 

Finally, this paper establishes novel connections between robot motion planning and statistical physics, in particular ergodic theory~\cite{Walters:2000vc} and percolation theory~\cite{Grimmett:1999ur,Meester:1996ue,Bollobas:2006ur}, which have been used to study many scientific problems in a diverse set of fields~\cite{Sahimi:1994wt} including condensed matter physics~\cite{Halbrutter:1989th}, materials science~\cite{Cahn:1997vf}, social networks~\cite{Castellano:2009ce,Solomon:2000td}, traffic congestion~\cite{Angel:2005vm}, political science~\cite{Galam:2003dv}, ad-hoc communication networks~\cite{Franceschetti:2007tt}, the Internet~\cite{Cohen:2000uw}, economics~\cite{Cowan:1997wz}, and finance~\cite{Stauffer:1999uc}.
Although statistical physics has been used for studying animal behavior before, e.g., L\'evy-type random walks have been the driving force behind optimal foraging theory~\cite{Bartumeus:2009fv}, to the best of our knowledge, our application of ergodic theory and percolation theory in the context of both biology and robotics is novel. Moreover, this application constitutes one of the rare examples where the same theory is used not in the context of science, e.g., for understanding physical, social, economic, or political phenomena, but rather in the context of engineering, e.g., to design high-performance robotic vehicles.

This paper is organized as follows. In Section~\ref{section:bird_forest}, the forest-generating process and the dynamics governing the bird are described, and a formal problem definition is provided. The main result on the phase transitions in ergodic forests is stated and proven in Section~\ref{section:zero_one_laws}. In Sections~\ref{section:subcritical_poisson_regime} and \ref{section:supercritical_poisson_regime}, the Poisson forest model is studied. In particular, lower and upper bounds on the critical speed are derived using discrete and continuum percolation theory. 
In Section~\ref{section:equivalent_model}, an equivalent percolation model for high-speed navigation in a Poisson forest is provided.
In Section~\ref{section:computational}, using this model, the phase diagram is approximately computed experimentally verifying our theoretical bounds on the critical speed. 
Finally, the paper is concluded with remarks in Section~\ref{section:conclusion}.
A preliminary version of this paper is submitted as a conference paper~\cite{Karaman:2011td}.

\section{The Forest, The Bird, and The Problem} \label{section:bird_forest}

In this section, the problem of high-speed motion in a randomly-generated obstacle field is formulated. 
Along the way, fairly general models of the forest-generating process and the bird dynamics are given. An important special case involving a Poisson process generating the locations of the trees and a simple model for the bird's dynamics is introduced.

\subsection{The forest-generating Process}

Throughout the paper, it is assumed that the forest is composed of trees with a circular cross section,  with possibly different random radii. The locations of the trees are assumed to be generated by a spatial point process on the infinite plane, and the radius of each tree is assumed to be determined by a ``mark'' associated with each point of the same process. We postpone a more formal treatment of such point processes until Section~\ref{section:zero_one_laws}. Instead, in this section, we provide a simple definition of forest-generating processes that leads to a problem definition free of sophisticated measure-theoretic concepts. 

Let $\countables{\plane}$ denote the set of all countable subsets of $\plane$. Let $(\Omega, {\cal F}, \PP)$ be a probability space, where $\Omega$ is a sample space, ${\cal F}$ is a sigma-algebra, and $\PP$ is a probability measure. Then, a (spatial) point process is a measurable function $\PointProcess : \Omega \to \countables{\plane}$. Intuitively, a point process assigns a probability measure over the locations of countably many sets of points on the plane. 

Let $\MarkSpace$, also called the {\em mark space}, be a set given along with a suitable topology. A {\em (spatial) marked point process} is a pair $(\PointProcess,\MarkMeasure)$, where $\PointProcess$ is a point process and $\MarkMeasure (y, \cdot)$ is a probability  distribution over the mark space $\MarkSpace$ such that $\MarkMeasure(\cdot, B)$ is a measurable function on $\plane$ for any Borel set $B \subset \MarkSpace$.

A {\em forest-generating process}, usually denoted by $\ForestGeneratingProcess$, is a pair $\ForestGeneratingProcess = (\PointProcess,\RandomTreeRadius)$, where $\RandomTreeRadius = \{\randomTreeRadius{y} : y \in \plane\}$ is a collection of random variables with probability measure $\MarkMeasure(y,\cdot)$ such that $(\PointProcess,\MarkMeasure)$ is a marked point process $(\PointProcess,\MarkMeasure)$ with mark space $\MarkSpace = \reals_{> 0}$. The locations of the trees in the forest are described by $\PointProcess$ whereas the radius of a tree located at $y \in \plane$ is distributed according to the probability measure $\MarkMeasure(y,\cdot)$. 

Notice that this definition of a forest-generating process immediately implies that the radius $r_{y}$ of a tree located $y$ is independent of the locations of all the other trees. 
A more general class of forest-generating processes that take such dependencies into account will be introduced in Section~\ref{section:zero_one_laws} using the notion of counting measures.

\subsection{The Model of the Bird} \label{section:model:bird}

\subsubsection{The dynamics of the bird}
Let $\X \subseteq \reals^n$ and $\U \subset \reals^m$ be measurable sets, where $n , m > 0$ are integers. To model the dynamics governing the motion of the bird, consider the following collection of dynamical systems parametrized by a speed parameter denoted by $\nu \in \reals_{>0}$:
\begin{align}
\begin{array}{c}
\dot{x}(t) = f_{\speedratio} (x(t),u(t)), \quad x(0) = x_0, \\[0.5em]
y(t) = h_\speedratio (x(t)),
\end{array}
\label{eqn:system}
\end{align}
where $f_\speed : \X \times \U \to \reals^n$ and $h_\speed : \X \to \plane$ are Lipschitz continuous in both variables for all $\speed$, and $x(t) \in \X$ and $u(t) \in \U$ for all $t$. A function $y : \reals_{\ge 0} \to \plane$ is said to be a trajectory of the system described by Equation~\eqref{eqn:system} at speed $\speed$ if there exists a measurable function $u : \reals_{\ge 0} \to \U$ such that $y (t)$ and $u(t)$ satisfy Equation~\eqref{eqn:system} for all $t \ge 0$. 

Intuitively, $y(t)$ of is the bird's position in the planar forest at time $t$.
Note, however, that the dynamics of the bird may be fairly complicated involving several state variables. Moreover, the case then the bird is flying in a three-dimensional forest environment is also captured if one assumes that the (cylindrical) trees have no branches.

\subsubsection{The planner governing the bird} 
Recall that $\ForestGeneratingProcess = (\PointProcess,\RandomTreeRadius)$ denotes a forest-generating process.
A realization of the forest-generating process, often denoted by $\ForestGeneratingProcess(\omega) =  (\PointProcess,\RandomTreeRadius)(\omega)$ for a given sample path $\omega \in \Omega$, can be though of as a pair $(\ElementPlane,\Radius) = (\{\ElementPlane_i\}_{i \in \naturals},\{\Radius_i\}_{i \in \naturals})$ of countable sets, where $\ElementPlane_i \in \plane$ and $\Radius_i \in \reals_{>0}$. Let $\ForestRealizations$ denote the set of all such realizations.

A {\em planner} is a pair $\planner = (\plannerControl, \plannerInitState)$, where $\plannerControl : \reals_{\ge 0} \times \ForestRealizations \to U$ and $\plannerInitState : \ForestRealizations \to \plane$.
In this setting, given a realization $(\ElementPlane,\Radius)$ of the forest-generating process, the function $\plannerInitState ((\ElementPlane,\Radius))$ returns an initial state for the bird to start its (constant-speed) flight, and the function $\plannerControl(t; (\ElementPlane,\Radius))$ is the ``motion planner'' that determines the input signal driving the bird through the forest. 

\subsection{Problem Formulation} \label{section:model:problem}

Note that the planner $\planner$ is a random mapping, since it is a function of the realizations of the (random) forest-generating process.\footnote{In general the planner can be randomized, in which case $\plannerControl$ is defined as a probability measure over $\reals_{\ge 0} \times F$ and the sample space is extended suitably.} The {\em input process}, denoted by $\{\Inputvar{\planner}(t) : t \in \reals_{\ge 0}\}$, is a stochastic process, i.e., a collection of random variables indexed by $t$, with the following realizations
$$
\inputvar{\planner}(t, \omega) := \plannerControl (t; \ForestGeneratingProcess(\omega)), \quad \mbox{ for all }\omega \in \Omega. 
$$
Similarly, the {\em state process} and {\em output process}, denoted by $\{\Statevar{\planner}{\speed}(t) : t \ge 0 \}$ and $\{\Outputvar{\planner}{\speed}(t) : t \ge 0\}$, respectively, are the solutions to Equation~\eqref{eqn:system}, when the input $u(t)$ is equal to the input process described above.

Given a realization $(\ElementPlane,\Radius) = (\{\ElementPlane_i\}_{i \in \naturals},\{\Radius_i\}_{i \in \naturals})$ of the forest-generating process, 
define the {\em region occupied by the trees} as 
$$
\TreesGeneral{\ForestGeneratingProcess} := \bigcup_{i \in \naturals} \ball{y_i}{r_i},
$$
where $\ball{y}{r} \subset \plane$ denotes the disk of radius $r$ centered at $y$. Define the {\em free region} by $\FreeGeneral{\ForestGeneratingProcess} := \plane \setminus\TreesGeneral{\ForestGeneratingProcess}$. 

Finally, the following is a formal statement of the problem of high-speed navigation through a randomly-generated obstacle field, specialized to the case of a planar stochastic forest.
\begin{problem} \label{problem:main}
Given the dynamics of the bird described by Equation~\eqref{eqn:system}, a forest-generating process $\ForestGeneratingProcess$, and a speed $\nu$, design a planner $\planner$ such that the bird governed by $\planner$ 
\begin{itemize}
\item[(i)] flies indefinitely towards new regions of the forest with probability one, i.e., 
$$
\PP\Big(\big\{\liminf\nolimits_{t \to \infty} \Vert \Outputvar{\planner}{\speed}(t) \Vert_2 = \infty\big\}\Big) = 1,
$$
\item[(ii)] avoids collision with trees almost surely, i.e.,
$$
\PP\Big(\big\{\Outputvar{\planner}{\speed}(t) \in \FreeGeneral{\ForestGeneratingProcess} \mbox{ for  all } t \in \reals_{\ge 0}\big\}\Big) = 1.
$$
\end{itemize}
\end{problem}

Problem~\ref{problem:main} resembles the classical motion planning problem (see, e.g.,~\cite{LaValle:2006wu}) in the following way.
The first item in the problem definition is a condition that is similar to the requirement of reaching a goal region, which is situated at infinity in this case. The second item, on the other hand, is the same as the usual ``avoid collision with obstacles'' requirement of classical motion planning problems, except with a probabilistic guarantee. 
Thus, Problem~\ref{problem:main} describes the motion planning problem of diverging towards a ``goal region'' situated at infinity while avoiding collision with obstacles almost surely, in a stochastically-generated environment.

In the sequel, a planner is said be {\em reaching}, if it satisfies the first condition in the statement of Problem 1. It is said to {\em almost surely maintain collision-free flight indefinitely at speed $\speed$} if it satisfies the second condition. A planner that satisfies both conditions is said to solve Problem~\ref{problem:main} for speed $\speed$.

In this paper, we report a series of negative results showing the non-existence of planners that solve Problem~\ref{problem:main} for speeds above some critical speed $\speed_\mathrm{crit}$ despite the knowledge of all the trees in the forest, although such planners are guaranteed to exist when the speed is below the very same threshold, whenever the forest-generating process is ergodic (see Section~\ref{section:zero_one_laws}). Moreover, we carry out a thorough analysis of an important special case, in particular by providing upper and lower bounds on the critical speed. This special case is introduced in the next section and analyzed both theoretically in Sections~\ref{section:subcritical_poisson_regime}, \ref{section:supercritical_poisson_regime}, and \ref{section:equivalent_model} and experimentally in Section~\ref{section:computational}.

Throughout the paper, we tacitly assume that the planner has the knowledge of the locations and the sizes of all the trees in the forest once the forest is realized. Since we are mainly concerned with negative results in this paper, this assumption is not limiting, but rather leads to stronger negative results. 

It is also worth noting that, under this assumption, Problem~\ref{problem:main} reduces to the almost-sure existence of infinite collision-free trajectories in the following sense. A trajectory $y : [0,\infty) \to \plane$ is said to be {\em infinite} if $\liminf_{t \to \infty} \Vert y(t) \Vert_2 = \infty$, i.e., the trajectory diverges towards infinity increasingly getting away from any initial condition $y(0)$. The trajectory $y$ is said to be {\em collision-free} for a particular realization $\ForestGeneratingProcess(\omega)$ of the forest-generating process, if $y(t) \in \FreeGeneral{\ForestGeneratingProcess(\omega)}$ for all $t \in \reals_{\ge 0}$. 
Then, clearly, the Ê(non-)existence of planners that solve Problem~\ref{problem:main} for a particular speed is equivalent to the almost-sure \mbox{(non-)}existence of infinite collision-free trajectories at the same speed, when the bird is aware the locations and the sizes of all the trees in the forest once the forest is realized.\footnote{In a more general setting, the planner may be constrained, for instance, with the knowledge of nearby trees only, i.e., the bird may have a limited perception range, which can be formulated by slightly modifying our problem definition to impose a certain measurability condition on the planner $\planner = (\plannerControl, \plannerInitState)$ with respect to a suitable filtration. In that case, the existence of an infinite collision-free trajectory does not necessarily guarantee the existence of a planner that can navigate with a limited perception range.}

From now on, we formulate all our major results in terms of the existence or non-existence of infinite collision-free trajectories and subsequently discuss their implications in the context of Problem~\ref{problem:main}.

\subsection{A Single-integrator Bird Flying in a Poisson Forest} \label{section:model:sing_bird_poisson_forest}

Recall that the dynamics of the bird was described by Equation~\eqref{eqn:system}. Consider the following special case:
\begin{align}
\begin{array}{c}
\dot{x}(t) = f_\nu(x(t), u(t)) =  \left(\begin{array}{c} \nu \\ u(t) \end{array}\right), \\[0.75em]
y(t) =  h_\nu(x(t)) = x(t),
\end{array} \label{eqn:sing_bird}
\end{align}
where $x(t), y(t) \in \plane$ and $u(t) \in [-\wmax, \wmax]$ for all $t \ge 0$. According to this model, the bird is flying at a constant speed, denoted by $\nu$, in the longitudinal direction, while it can maneuver with bounded speed (but unbounded acceleration) in the lateral direction. 
From now on, without loss of any generality, we assume that $\wmax = 1$. 

The equation describing the dynamics prescribed by Equation~\eqref{eqn:sing_bird} is parametrized by $\nu$, which denotes the ``speed'' parameter, although in this case $\speed$ is not precisely the speed of the bird, but rather its speed in the longitudinal direction.

The set of states reachable at or before time $t$ for this system is the cone-shaped region; see Figure~\ref{figure:single_integrator_bird_reachable_set}. 
An important parameter in the analysis of this model turns out to be the aperture of this cone, denoted by $\Maneuverability$, defined as $\Maneuverability := 2 \, \tan^{-1}(1/\speed)$. Roughly speaking, $\Maneuverability$ corresponds to the ``maneuverability'' of the bird; increasing values of $\Maneuverability$ corresponds to larger reachable sets, thus more maneuvering ability for the bird. 

Given a spatial point process $\PointProcess$ and a Borel set $\SubsetPlane \subset \plane$, with a slight abuse of notation, let $\PointProcess(\SubsetPlane)$ denote the number of points that fall into $\SubsetPlane$. Then, $\PointProcess$ is said to be a {\em (homogeneous) Poisson process} with intensity $\lambda$, if (i) for all pairwise disjoint Borel sets $\SubsetPlane_1, \SubsetPlane_2, \dots, \SubsetPlane_n$, the random variables $\PointProcess(\SubsetPlane_1), \PointProcess(\SubsetPlane_2), \dots, \PointProcess(\SubsetPlane_n)$ are mutually independent, and (ii) for every bounded Borel set $\SubsetPlane$, the random variable $\PointProcess(\SubsetPlane)$ is a Poisson random variable with mean $\lambda \, \mu(\SubsetPlane)$, where $\mu(\cdot)$ is the usual Lebesgue measure. A Poisson point process with intensity $\lambda$ is denoted by $\PoissonProcess{\lambda}$.

In Sections~\ref{section:subcritical_poisson_regime}-\ref{section:computational}, we focus on the special case when the bird described by Equation~\eqref{eqn:sing_bird} is flying in a forest generated by the process $(\PoissonProcess{\rho}, \RandomTreeRadius)$, where $\PoissonProcess{\rho}$ is a Poisson point process with intensity $\rho$ and $\RandomTreeRadius$ is equal to some constant, say $r \in \reals_{>0}$.
Throughout the paper, this forest-generating process is called the {\em Poisson forest-generating process} with tree density $\rho$ and the tree radius $r$, and the bird model given by Equation~\eqref{eqn:sing_bird} is called the {\em single-integrator bird} flying at speed $\speed$.

\begin{figure}[b]
\centering
\includegraphics[width = 3cm]{./reachable_set.pdf}
\caption{The set of states reachable from $y \in \plane$ within time $t$.}
\label{figure:single_integrator_bird_reachable_set}
\end{figure}

\section{Phase Transitions in Ergodic Forests} \label{section:zero_one_laws}

In this section, first the notion of an ergodic forest-generating process is formalized. Subsequently, it is shown that the existence of an infinite collision-free trajectory exhibits a phase transition if the forest-generating process is ergodic.

\subsection{Ergodic forest-generating Processes}

Recall that a forest-generating process was defined as a pair $(\PointProcess, \RandomTreeRadius)$, where $\PointProcess$ is a point process generating the locations of the trees and $\RandomTreeRadius$ represents a family $\{\RandomTreeRadius_z\}_{z \in \plane}$ of random variables, indexed by $z$, that assigns each tree a (random) radius.
This definition imposes the condition that the radius of each tree depends only on the location of that particular tree, but not the locations of any of the others. Such a definition fails to model an important class of realistic forests, e.g., those in which a particular tree may have a larger radius if there are not many other trees around it. 
In this section, the definition of the forest-generating process is generalized to capture such cases using the theory of marked point processes. Along the way, point processes are defined in terms of their counting measures rather than functions from a sample space to a countable subset of the plane, which was adopted in Section~\ref{section:bird_forest} for simplicity. 
Finally, the ergodicity assumption for point processes is also formulated, and its implications are discussed. 

Let $\BorelSets{\reals}$ denote the set of all Borel subsets of $\reals$. The set $\BorelSets{\plane}$ is defined similarly. 
A {\em Borel measure} on $\plane$ is a function, usually denoted by $\BorelMeasure$, that maps $\BorelSets{\plane}$ to $\reals$ such that it is nonnegative, i.e., $\BorelMeasure(A) \ge 0$ for all $A \in \BorelSets{\plane}$, and $\sigma$-additive, i.e., $\BorelMeasure(\cup_{i \in \naturals}A_i) = \sum_{i \in\naturals}\BorelMeasure(A_i)$ for any sequence $\{A_i\}_{i \in \naturals}$ of disjoint Borel sets.
A Borel measure $\BorelMeasure$ on $\plane$ is said to be {\em boundedly finite} if $\BorelMeasure(A) < \infty$ for all bounded $A \in \BorelSets{\plane}$.
A {\em counting measure} $\CountingMeasure$ is a boundedly finite Borel measure that is integer valued, i.e., $\CountingMeasure(A) \in \mathbb{Z}$ for all $A \in \BorelSets{\plane}$. The space of all counting measures on $\plane$ is denoted by $\SpaceCountingMeasure$. The set of all Borel subsets of $\SpaceCountingMeasure$ is denoted by $\BorelSets{\SpaceCountingMeasure}$.

A (spatial) {\em point process} $\PointProcess$ is a measurable mapping from a probability space $(\Omega, {\cal F}, \PP)$ to the space $\SpaceCountingMeasure$ of all counting measures on $\plane$, i.e., for all $\EventCountingMeasure \in \BorelSets{\SpaceCountingMeasure}$, $\PointProcess^{-1}(\EventCountingMeasure) := \{ \omega \in \Omega : \PointProcess(\omega) \in \EventCountingMeasure\} \in {\cal F}$.
Defined in this way, $\PointProcess$ assigns a counting measure $\CountingMeasure = \PointProcess(\omega) \in \SpaceCountingMeasure$ to any sample path $\omega \in \Omega$, such that $\CountingMeasure(A)$ is the number of points that fall into $A \in \BorelSets{\plane}$. 
Thus, $\PointProcess$ randomly generates a counting measure that counts the number of points in any given Borel subset of the plane. 
Given any Borel set $\SubsetPlane \in \BorelSets{\plane}$, the number of points that fall into $A$ in the point process $\PointProcess$, denoted by $\PointProcess(\SubsetPlane)$ with a slight abuse of notation, is an integer-valued random variable.\footnote{More precisely, $(\PointProcess(\cdot))(\SubsetPlane) : \Omega \to \reals$ defines a random variable by mapping the sample space $\Omega$ to the set of real numbers in a measurable way. To indicate this random variable we simply write $\PointProcess(\SubsetPlane)$, while to indicate the counting measure for the sample path $\omega\in \Omega$, we write $\PointProcess(\omega)$, when there is no ambiguity in notation. }

Let $\MarkSpace$ be a mark space endowed with a suitable topology. Let $\SpaceMarkedCountingMeasure$ denote the set of counting measures on the space $\plane \times \MarkSpace$ of point-mark pairs. 
Then, a (spatial) {\em marked point process} $\MarkedPointProcess$ is a measurable mapping from a probability space $(\Omega, {\cal F}, \PP)$ to $\SpaceMarkedCountingMeasure$ such that the {\em ground measure}, defined by 
$$
\GroundPointProcess{\MarkedPointProcess} (\SubsetPlane) = \MarkedPointProcess(\SubsetPlane \times \SubsetMark) \mbox { for all } \SubsetPlane \in \BorelSets{\plane} 
$$
is a boundedly finite counting measure for all $\SubsetMark \in \BorelSets{\MarkSpace}$.
Defined in this way, a marked point process assigns a counting measure $\MarkedCountingMeasure = \MarkedPointProcess (\omega) \in \SpaceMarkedCountingMeasure$ such that $\MarkedCountingMeasure(\SubsetPlaneMark)$ denotes the the number of points $\ElementPlane$ with mark $\ElementMark$ such that $(\ElementPlane,\ElementMark) \in \SubsetPlaneMark$, for all $\SubsetPlaneMark \in \BorelSets{\plane \times \MarkSpace}$.
In particular, $\MarkedCountingMeasure(\SubsetPlane \times \SubsetMark)$ denotes the number of points in $\SubsetPlane \in \BorelSets{\plane}$ with marks in $\SubsetMark \in \BorelSets{\MarkSpace}$. 

Finally, a {\em forest-generating process}, usually denoted by $\ForestGeneratingProcess$, is a marked point process defined on a mark space $\MarkSpace = \reals_{>0}$, where marks denote the radii of the trees.

For all $\TranslateVector, \ElementPlane \in \plane$, all $\ElementMark \in \MarkSpace$, and all $\SubsetPlaneMark \in \BorelSets{\plane \times \MarkSpace}$, define the {\em translation operator} $\Translate{\TranslateVector}{ }$ as $\Translate{\TranslateVector}{(\ElementPlane,\ElementMark)} := (\ElementPlane + \TranslateVector, \ElementMark)$ and $\Translate{\TranslateVector}{\SubsetPlaneMark} := \{(\ElementPlane + \TranslateVector, \ElementMark) : (\ElementPlane,\ElementMark) \in \SubsetPlaneMark\}$. 
Intuitively, given a set $\SubsetPlaneMark$ of marked points in the plane, the translation operator $\Translate{\TranslateVector}{ }$ applied to $\SubsetPlaneMark$ translates the location of all points by a vector $\TranslateVector$ while keeping their marks the same. 
The operator $\Translate{\TranslateVector}{ }$ induces a transformation $\TranslateMeasure{\TranslateVector}{ }$ on $\SpaceMarkedCountingMeasure$, defined as $(\TranslateMeasure{\TranslateVector}{\MarkedCountingMeasure})(\SubsetPlaneMark) = \MarkedCountingMeasure(\Translate{\TranslateVector}{\SubsetPlaneMark})$ for all $\TranslateVector \in \plane$, all $\SubsetPlaneMark \in \BorelSets{\plane \times \MarkSpace}$, and all $\MarkedCountingMeasure \in \SpaceMarkedCountingMeasure$.
Clearly, the space $\SpaceMarkedCountingMeasure$ of all counting measures on $\plane \times \MarkSpace$ is closed under this transformation, i.e., $\TranslateMeasure{\TranslateVector}{\MarkedCountingMeasure} \in \SpaceMarkedCountingMeasure$ for all $\MarkedCountingMeasure \in \SpaceMarkedCountingMeasure$.
Thus, this transformation is naturally generalized to any marked point process by defining $(\TranslateMeasure{\TranslateVector}{\MarkedPointProcess})(\omega) = \TranslateMeasure{\TranslateVector}{(\MarkedPointProcess(\omega))}$ for all $\omega \in \Omega$, which makes $\TranslateMeasure{\TranslateVector}{\MarkedPointProcess}: \Omega \to \SpaceMarkedCountingMeasure$ a measurable mapping for any $\TranslateVector \in \plane$. Hence, the translated marked point process, $\TranslateMeasure{\TranslateVector}{\MarkedPointProcess}$, is a marked point process on its own right.

Every marked point process $\MarkedPointProcess$ induces a probability measure on the measure space $(\SpaceMarkedCountingMeasure, \BorelSets{\SpaceMarkedCountingMeasure})$, defined as $\MarkedPointProcessMeasure{\MarkedPointProcess}(\EventMarkedCountingMeasure) := \PP (\MarkedPointProcess^{-1}(\EventMarkedCountingMeasure))$ for all $\EventMarkedCountingMeasure \in \BorelSets{\SpaceMarkedCountingMeasure}$. The probability measure $\MarkedPointProcessMeasure{\MarkedPointProcess}$ is called the {\em distribution} of $\MarkedPointProcess$. 
A marked point process $\MarkedPointProcess$ is said to be {\em stationary} if its distribution is invariant under the set $\{\TranslateMeasure{\TranslateVector}{ } : \TranslateVector \in \plane \}$ of transformations, i.e., $\MarkedPointProcessMeasure{\MarkedPointProcess} (\EventMarkedCountingMeasure) = \MarkedPointProcessMeasure{\TranslateMeasure{\TranslateVector}{\MarkedPointProcess}} (\EventMarkedCountingMeasure)$ for all $\EventMarkedCountingMeasure \in \BorelSets{\SpaceMarkedCountingMeasure}$. In other words, $\MarkedPointProcess$ is said to be stationary if the family $\{ \TranslateMeasure{\TranslateVector}{\MarkedPointProcess} : \TranslateVector \in \plane \}$ of marked point processes have the same distribution. 
Finally, a stationary marked point process $\MarkedPointProcess$ is said to be {\em ergodic} if 
$$
\lim_{a \to \infty} \frac{1}{a^{2}} \int_{[0,a]^{2}} \MarkedPointProcessMeasure{\MarkedPointProcess} ((\TranslateMeasure{\TranslateVector}{\EventMarkedCountingMeasure_1}) \cap \EventMarkedCountingMeasure_2) d\TranslateVector = \MarkedPointProcessMeasure{\MarkedPointProcess} ( \EventMarkedCountingMeasure_1) \, \MarkedPointProcessMeasure{\MarkedPointProcess} ( \EventMarkedCountingMeasure_2),
$$
for all $\EventMarkedCountingMeasure_1, \EventMarkedCountingMeasure_2 \in \BorelSets{\SpaceMarkedCountingMeasure}$.
A forest-generating process is {\em ergodic}, if its corresponding marked point process is ergodic.

Roughly speaking, an ergodic marked point process is one for which any two events are ``almost independent'' whenever they can be described by locations and marks of two sets of points such that the points in different sets are far away from each other. 
To provide some examples, first note that the definitions of the translation operator, distribution, stationarity, and ergodicity naturally specialize to point processes (that are not marked). The interested reader is referred to~\cite{Daley:2007wu} for the details.
Then, for example, a (homogeneous) Poisson process is an ergodic point process trivially satisfying the equation above. Moreover, several types of Cox processes and a variety of cluster processes are known to be ergodic (see, e.g.,~\cite{Daley:2007wu}).

There are many examples of ergodic marked point processes also. 
Firstly, if the all the trees have the same radius, then clearly a forest-generating process is ergodic if and only if the locations of the trees are generated by an ergodic point process.
A more general class of ergodic forest-generating processes consists of those in which the locations of the trees are generated by an ergodic point process and their radii are independent and identically distributed random variables~\cite{Meester:1996ue}, a model that is widely used in the context of forestry~\cite{Husch:2003vx}.
However, the class of all ergodic forest-generating processes is much larger, in particular allowing ``short-range dependencies'' between the radii of the trees (see, e.g.,~\cite{Smythe:2005uf}).

Yet, not every stationary forest-generating process is ergodic. A canonical example is the one inÊ which all trees have the same radius and their locations are generated by a mixed Poisson processes that has intensity $\rho_{1}$ with probability $p$ and $\rho_{2}$ with probability $1-p$. This forest-generating process fails to be ergodic whenever $p \in (0,1)$ (see, e.g.,~\cite{Meester:1996ue} for a proof).

Ergodic point processes, marked or unmarked, admit an important characterization through their invariant sigma-algebra being trivial. In this context, an event $\EventMarkedCountingMeasure \in \BorelSets{\SpaceMarkedCountingMeasure}$ is said to be {\em invariant} under the transformation $\TranslateMeasure{u}{ }$, if $\InverseTranslateMeasure{u}{\EventMarkedCountingMeasure} = \EventMarkedCountingMeasure$. It can be shown that the set $\MarkedInvariantSigmaAlgebra$ of all events in $\BorelSets{\SpaceMarkedCountingMeasure}$ that are invariant under the family $\{\TranslateMeasure{u}{ } : u \in \plane\}$ of transformations is indeed a $\sigma$-algebra~\cite{Daley:2007wu}. The set $\MarkedInvariantSigmaAlgebra$ is said to be {\em trivial} if $\MarkedPointProcessMeasure{\MarkedPointProcess} (\EventMarkedCountingMeasure) \in  \{0,1\}$  for all $\EventMarkedCountingMeasure \in \MarkedInvariantSigmaAlgebra$.
The following theorem is a central result in the theory of ergodic point processes.
\begin{theorem}[\cite{Daley:2007wu}] \label{theorem:invariant-sigma-algebra}
A marked point process $\MarkedPointProcess$ is ergodic if and only if $\MarkedInvariantSigmaAlgebra$
, the set of events invariant under transformations $\{\TranslateMeasure{\TranslateVector}{ } : \TranslateVector \in \plane\}$, 
is trivial under the distribution of $\MarkedPointProcess$.
\end{theorem}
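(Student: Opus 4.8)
The plan is to recognize this as the classical equivalence, for a measure-preserving action of the group $(\plane,+)$, between the spatial-averaging definition of ergodicity and triviality of the invariant $\sigma$-algebra, and to derive it from the von~Neumann mean ergodic theorem applied to the two-parameter flow $\{\TranslateMeasure{\TranslateVector}{}\}$ on the probability space $(\SpaceMarkedCountingMeasure, \BorelSets{\SpaceMarkedCountingMeasure}, \MarkedPointProcessMeasure{\MarkedPointProcess})$. The first step I would record is that stationarity of $\MarkedPointProcess$ says exactly that each $\TranslateMeasure{\TranslateVector}{}$ preserves $\MarkedPointProcessMeasure{\MarkedPointProcess}$, since $\MarkedPointProcessMeasure{\MarkedPointProcess}(\EventMarkedCountingMeasure) = \MarkedPointProcessMeasure{\TranslateMeasure{\TranslateVector}{\MarkedPointProcess}}(\EventMarkedCountingMeasure) = \MarkedPointProcessMeasure{\MarkedPointProcess}(\InverseTranslateMeasure{\TranslateVector}{\EventMarkedCountingMeasure})$. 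Consequently the operators $(U_\TranslateVector f)(\MarkedCountingMeasure) := f(\InverseTranslateMeasure{\TranslateVector}{\MarkedCountingMeasure})$ form a group of unitary operators on $L^2(\MarkedPointProcessMeasure{\MarkedPointProcess})$, with $U_\TranslateVector \mathbf{1}_{\EventMarkedCountingMeasure_1} = \mathbf{1}_{\TranslateMeasure{\TranslateVector}{\EventMarkedCountingMeasure_1}}$, so that $\langle U_\TranslateVector \mathbf{1}_{\EventMarkedCountingMeasure_1}, \mathbf{1}_{\EventMarkedCountingMeasure_2}\rangle = \MarkedPointProcessMeasure{\MarkedPointProcess}((\TranslateMeasure{\TranslateVector}{\EventMarkedCountingMeasure_1}) \cap \EventMarkedCountingMeasure_2)$ is precisely the integrand appearing in the definition of ergodicity.

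The direction ``ergodic $\Rightarrow$ trivial'' I would handle by hand. Let $\EventMarkedCountingMeasure \in \MarkedInvariantSigmaAlgebra$, so $\InverseTranslateMeasure{\TranslateVector}{\EventMarkedCountingMeasure} = \EventMarkedCountingMeasure$ for every $\TranslateVector$; since the translations form a group, this also gives $\TranslateMeasure{\TranslateVector}{\EventMarkedCountingMeasure} = \EventMarkedCountingMeasure$, whence $\MarkedPointProcessMeasure{\MarkedPointProcess}((\TranslateMeasure{\TranslateVector}{\EventMarkedCountingMeasure}) \cap \EventMarkedCountingMeasure) = \MarkedPointProcessMeasure{\MarkedPointProcess}(\EventMarkedCountingMeasure)$ for all $\TranslateVector$. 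Substituting $\EventMarkedCountingMeasure_1 = \EventMarkedCountingMeasure_2 = \EventMarkedCountingMeasure$ into the averaging condition collapses the spatial average on the left to the constant $\MarkedPointProcessMeasure{\MarkedPointProcess}(\EventMarkedCountingMeasure)$, while the right-hand side is $\MarkedPointProcessMeasure{\MarkedPointProcess}(\EventMarkedCountingMeasure)^2$; hence $\MarkedPointProcessMeasure{\MarkedPointProcess}(\EventMarkedCountingMeasure) = \MarkedPointProcessMeasure{\MarkedPointProcess}(\EventMarkedCountingMeasure)^2 \in \{0,1\}$, so $\MarkedInvariantSigmaAlgebra$ is trivial.

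For the converse ``trivial $\Rightarrow$ ergodic'', I would invoke the mean ergodic theorem for the amenable group $\plane$ along the F\o lner sequence of squares $[0,a]^2$:
\[
A_a f := \frac{1}{a^{2}}\int_{[0,a]^{2}} U_\TranslateVector f \, d\TranslateVector \longrightarrow \Pi f \quad \text{in } L^2(\MarkedPointProcessMeasure{\MarkedPointProcess}),
\]
where $\Pi$ is the orthogonal projection onto the closed subspace of $U$-invariant functions. Triviality of $\MarkedInvariantSigmaAlgebra$ is equivalent to this subspace consisting only of the constants: a function fixed by every $U_\TranslateVector$ can be taken $\MarkedInvariantSigmaAlgebra$-measurable, and then each of its superlevel sets lies in $\MarkedInvariantSigmaAlgebra$ and so has measure $0$ or $1$, forcing the function to be a.e. constant. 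Hence $\Pi f = \EE[f]$. Applying this with $f = \mathbf{1}_{\EventMarkedCountingMeasure_1}$, pairing with $\mathbf{1}_{\EventMarkedCountingMeasure_2}$, and interchanging inner product with integral by Fubini yields $\frac{1}{a^{2}}\int_{[0,a]^{2}} \MarkedPointProcessMeasure{\MarkedPointProcess}((\TranslateMeasure{\TranslateVector}{\EventMarkedCountingMeasure_1}) \cap \EventMarkedCountingMeasure_2)\, d\TranslateVector \to \MarkedPointProcessMeasure{\MarkedPointProcess}(\EventMarkedCountingMeasure_1)\,\MarkedPointProcessMeasure{\MarkedPointProcess}(\EventMarkedCountingMeasure_2)$, which is exactly the ergodicity condition.

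The main obstacle is analytic rather than combinatorial. To make the mean ergodic theorem applicable, one must verify that $\TranslateVector \mapsto U_\TranslateVector$ is a strongly continuous (hence jointly measurable) unitary representation, which I would establish from continuity of the translation action on $\SpaceMarkedCountingMeasure$ together with density of well-behaved functions in $L^2(\MarkedPointProcessMeasure{\MarkedPointProcess})$; this is what makes the vector-valued average $A_a f$ well defined and legitimizes the final Fubini interchange. The second delicate point is the identification of the $L^2$-fixed subspace with the $\MarkedInvariantSigmaAlgebra$-measurable functions, which requires the standard null-set care of upgrading ``$U_\TranslateVector f = f$ a.e. for each $\TranslateVector$'' to a genuinely invariant representative. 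Once these two technical facts are in place, the remainder of the argument is the routine bookkeeping sketched above.
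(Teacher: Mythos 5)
The paper offers no proof of this statement; it is quoted directly from Daley and Vere--Jones as a known result, so there is nothing internal to compare against. Your argument is the standard textbook proof of that result and is sound: the ``ergodic $\Rightarrow$ trivial'' direction by substituting an invariant event into the averaging identity, and the converse via the mean ergodic theorem for the translation action of $\plane$ along the squares $[0,a]^2$, with the fixed subspace identified with the constants under triviality of $\MarkedInvariantSigmaAlgebra$. You also correctly flag the two places where real work is hidden --- strong continuity (or at least Bochner measurability) of $\TranslateVector \mapsto U_\TranslateVector f$ so that the vector-valued averages and the final Fubini step are legitimate, and the upgrade from ``$U_\TranslateVector f = f$ a.e.\ for each $\TranslateVector$'' to a strictly invariant representative --- which is exactly where a fully rigorous write-up would need to cite or reprove the corresponding lemmas for actions on the space of counting measures. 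Nothing further is required.
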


\subsection{The Monotonic Zero-One Law of the Ergodic Forest} \label{section:phase_transition:zero_one_law}

In this section, the first main result of this paper is stated and proven.
Before stating the  result, a set of useful definitions are noted, and two intermediate results are established.

Recall that Equation~\eqref{eqn:system} describes the dynamics of the bird. 
A trajectory $y : [0,\Time] \to \plane$, where $\Time > 0$, is said to be {\em dynamically feasible} at speed $\speed$, if there exists $x: [0,\Time] \to \reals^n$ and $u: [0,\Time]\to \reals^m$ such that $u,x,y$ satisfy Equation~\eqref{eqn:system} for all $t \in [0,\Time]$ and some $x_0 \in \reals^n$.
Let $\SetTrajectories{\speed}$ denote the family of all dynamically feasible trajectories at speed $\nu$.

The dynamics of the is said to be {\em translation invariant}, if any dynamically feasible trajectory is still dynamically feasible when translated as a whole. More precisely, the the bird has translation-invariant dynamics, if for all $\Trajectory \in \SetTrajectories{\speed}$ and all $\ElementPlane_0 \in \plane$, the translated trajectory $\Trajectory'$, defined by $\Trajectory'(t) := \Trajectory(t) + \ElementPlane_0$ for all $t \in [0,\Time]$, satisfies $\Trajectory' \in \SetTrajectories{\speed}$.
Roughly speaking, this assumption implies that the dynamics of the bird does not depend on a particular location in the forest. 

This section is concerned with the existence of an infinite collision-free trajectory for the bird described by the event
\begin{align*}
\Event_\ForestGeneratingProcess(\speed) &  := \big\{ \mbox{there exists a trajectory } y \in \SetTrajectories{\speed} \mbox{ such that } \\ & \lim_{t \to \infty} \Vert y(t) \Vert_2 = \infty\mbox{ and }   y(t) \in \FreeGeneral{\ForestGeneratingProcess} \mbox{ for all }t \in \reals_{>0} \big\}.
\end{align*}

The following intermediate result states that a certain type of zero-one law holds if the forest-generating process is ergodic.

\begin{lemma} \label{theorem:ergodic_forest:zero_one_law}
Let $\ForestGeneratingProcess$ be a forest-generating process and suppose that the dynamics governing the bird is translation invariant.
Then, $\PP(\Event_\ForestGeneratingProcess(\speed)) \in \{0,1\}$ for all $\speed$, whenever the forest-generating process is ergodic.
\end{lemma}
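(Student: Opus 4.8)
The plan is to realize $\Event_\ForestGeneratingProcess(\speed)$ as the preimage under $\ForestGeneratingProcess$ of an event living in the space $\SpaceMarkedCountingMeasure$ of counting measures, to show that this event belongs to the invariant $\sigma$-algebra $\MarkedInvariantSigmaAlgebra$, and then to invoke Theorem~\ref{theorem:invariant-sigma-algebra}. Concretely, let $N_\speed \subseteq \SpaceMarkedCountingMeasure$ be the set of all counting measures $\MarkedCountingMeasure$ whose associated forest admits an infinite collision-free trajectory in $\SetTrajectories{\speed}$; that is, $\MarkedCountingMeasure \in N_\speed$ if and only if there exists $\Trajectory \in \SetTrajectories{\speed}$ with $\lim_{t\to\infty}\Vert \Trajectory(t)\Vert_2 = \infty$ that avoids every disk encoded by $\MarkedCountingMeasure$. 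By construction $\Event_\ForestGeneratingProcess(\speed) = \ForestGeneratingProcess^{-1}(N_\speed)$, so $\PP(\Event_\ForestGeneratingProcess(\speed)) = \MarkedPointProcessMeasure{\ForestGeneratingProcess}(N_\speed)$; it therefore suffices to prove $N_\speed \in \MarkedInvariantSigmaAlgebra$, since ergodicity of $\ForestGeneratingProcess$ together with Theorem~\ref{theorem:invariant-sigma-algebra} then forces $\MarkedPointProcessMeasure{\ForestGeneratingProcess}(N_\speed) \in \{0,1\}$ for each fixed $\speed$.

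The core of the argument is invariance, i.e., $\InverseTranslateMeasure{\TranslateVector}{N_\speed} = N_\speed$ for every $\TranslateVector \in \plane$. The key observation is that, by the defining relation $(\TranslateMeasure{\TranslateVector}{\MarkedCountingMeasure})(\SubsetPlaneMark) = \MarkedCountingMeasure(\Translate{\TranslateVector}{\SubsetPlaneMark})$, the measure $\TranslateMeasure{\TranslateVector}{\MarkedCountingMeasure}$ describes exactly the forest of $\MarkedCountingMeasure$ with every tree location shifted by $-\TranslateVector$ and its radius unchanged; equivalently, its free region is that of $\MarkedCountingMeasure$ translated by $-\TranslateVector$. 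Now suppose $\MarkedCountingMeasure \in N_\speed$, witnessed by an infinite collision-free trajectory $\Trajectory$, and define $\Trajectory'(t) := \Trajectory(t) - \TranslateVector$. Because the bird's dynamics are translation invariant, $\Trajectory' \in \SetTrajectories{\speed}$; because $\Vert \Trajectory'(t)\Vert_2 \ge \Vert \Trajectory(t)\Vert_2 - \Vert\TranslateVector\Vert_2 \to \infty$, the trajectory $\Trajectory'$ is infinite; and since each tree of $\TranslateMeasure{\TranslateVector}{\MarkedCountingMeasure}$ sits at $\ElementPlane_i - \TranslateVector$ with radius $\Radius_i$, the distance from $\Trajectory'$ to it equals the distance from $\Trajectory$ to the original tree at $\ElementPlane_i$, so $\Trajectory'$ is collision-free for $\TranslateMeasure{\TranslateVector}{\MarkedCountingMeasure}$. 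Hence $\TranslateMeasure{\TranslateVector}{\MarkedCountingMeasure} \in N_\speed$, i.e. $\MarkedCountingMeasure \in \InverseTranslateMeasure{\TranslateVector}{N_\speed}$. Running the same construction with $-\TranslateVector$ gives the reverse inclusion, establishing $\InverseTranslateMeasure{\TranslateVector}{N_\speed} = N_\speed$ for all $\TranslateVector$.

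The step I expect to be the main obstacle is measurability: one must verify $N_\speed \in \BorelSets{\SpaceMarkedCountingMeasure}$, so that the invariant-$\sigma$-algebra machinery applies. The difficulty is that membership in $N_\speed$ is an existential statement over the \emph{uncountable} family $\SetTrajectories{\speed}$, which amounts to a projection and need not preserve Borel measurability in general. I would handle this by exploiting the Lipschitz regularity assumed in Equation~\eqref{eqn:system}: trajectories depend continuously on their finite-dimensional initial state and on their control input, the free region $\FreeGeneral{\ForestGeneratingProcess}$ depends measurably on the finitely many trees meeting any bounded window, and the predicate ``infinite and collision-free'' can be rewritten as a countable intersection, over bounded time-and-space windows, of events each constrained by only finitely many trees. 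Reducing the existential quantifier to a countable dense family of admissible inputs (or, if a genuine projection remains, invoking a measurable-selection/projection theorem and the universal measurability of the resulting analytic set) then yields measurability of $N_\speed$. With $N_\speed \in \MarkedInvariantSigmaAlgebra$ thereby confirmed, Theorem~\ref{theorem:invariant-sigma-algebra} completes the proof, giving $\PP(\Event_\ForestGeneratingProcess(\speed)) = \MarkedPointProcessMeasure{\ForestGeneratingProcess}(N_\speed) \in \{0,1\}$.
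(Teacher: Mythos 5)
Your proposal is correct and follows essentially the same route as the paper's proof: both reduce the claim to showing that the event of existence of an infinite collision-free trajectory is invariant under the translations $\TranslateMeasure{\TranslateVector}{}$ (by translating a witnessing trajectory, using translation invariance of the dynamics) and then invoke Theorem~\ref{theorem:invariant-sigma-algebra}. Your version is somewhat more careful than the paper's --- you work explicitly in the space $\SpaceMarkedCountingMeasure$, get the sign of the shift right, and flag the measurability of $N_\speed$, which the paper passes over in silence --- but the underlying argument is the same.
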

\begin{proof}
By Theorem~\ref{theorem:invariant-sigma-algebra}, it is enough to show that the event $\Event_{\ForestGeneratingProcess}(\speed)$ is invariant under the family $\{\TranslateMeasure{\TranslateVector}{ }: \TranslateVector \in \plane\}$ of transformations, i.e., $\TranslateMeasure{\TranslateVector}{ }^{-1} \Event_{\ForestGeneratingProcess}(\speed) = \Event_{\ForestGeneratingProcess}(\speed)$ for all $\TranslateVector \in \plane$.

Fix some $\TranslateVector \in \plane$. For any sample path $\omega \in \Event_{\ForestGeneratingProcess}(\speed)$, by the definition of $\Event_{\ForestGeneratingProcess}(\speed)$, there exists an infinite trajectory $y \in \SetTrajectories{\speed}$ that satisfies $y(t) \in \FreeGeneral{\ForestGeneratingProcess(\omega)}$ for all $t \in \reals_{\ge 0}$. Whenever the locations of all the trees are shifted by a vector $\TranslateVector$ without changing their radii, i.e., the transformation $\TranslateMeasure{\TranslateVector}{ }$ is applied to $\ForestGeneratingProcess$, using the translation invariance of the dynamics, one can construct a new infinite trajectory $y' \in \SetTrajectories{\speed}$, defined as $y'(t) := y(t) + \TranslateVector$ for all $t \in \reals_{\ge 0}$. Clearly, $y'$ satisfies $y'(t) \in \FreeGeneral{\ForestGeneratingProcess(\TranslateMeasure{\TranslateVector}{\omega})}$ for all $t \ge 0$. 
Thus, $\TranslateMeasure{\TranslateVector}{ }\omega \in \Event_{\ForestGeneratingProcess}(\speed)$, or alternatively $\omega \in \TranslateMeasure{\TranslateVector}{ }^{-1} \Event_{\ForestGeneratingProcess}(\speed)$. Hence, $\Event_{\ForestGeneratingProcess}(\speed) \subseteq \TranslateMeasure{\TranslateVector}{ }^{-1} \Event_{\ForestGeneratingProcess}(\speed)$.
A similar argument shows $\TranslateMeasure{\TranslateVector}{ }^{-1}\Event_{\ForestGeneratingProcess} (\speed) \subseteq \Event_{\ForestGeneratingProcess}(\speed)$ also.
\end{proof}

The bird is said to have {\em non-decreasing path sets with decreasing speed}, if any dynamically feasible trajectory can be retraced, albeit with a reparametrization of time, at a lower speed. That is, for any $\speed, \speed' \in \reals_{>0}$ with $\speed' < \speed$, and any $\Trajectory \in \SetTrajectories{\speed}$, thee exists a continuous function $\Reparametrization : \reals_{\ge 0} \to \reals_{\ge 0}$ such that the time-reparametrized trajectory $\Trajectory'$, defined by $\Trajectory'(t) := \Trajectory(\Reparametrization(t))$, satisfies $\Trajectory' \in \SetTrajectories{\speed'}$.
Roughly speaking, this assumption implies that the system becomes only more ``maneuverable" with decreasing speed, which is arguably the case for most agile dynamical systems.
A direct consequence of this assumption is stated in the following lemma.
\begin{lemma} \label{theorem:ergodic_forest:monotonicity}
Suppose that the dynamics governing the bird has non-decreasing path sets with decreasing speed. Then, 
$
\PP\big( \Event_\ForestGeneratingProcess(\speed) \big)
$
is a non-increasing function of $\nu$.
\end{lemma}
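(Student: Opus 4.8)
The plan is to reduce the statement to a monotone set inclusion between the relevant events. Concretely, I would fix two speeds $\speed' < \speed$ and aim to show
$\Event_{\ForestGeneratingProcess}(\speed) \subseteq \Event_{\ForestGeneratingProcess}(\speed')$ as subsets of the sample space. Once this inclusion is in hand, monotonicity of the probability measure $\PP$ immediately gives $\PP(\Event_{\ForestGeneratingProcess}(\speed)) \le \PP(\Event_{\ForestGeneratingProcess}(\speed'))$, and since $\speed' < \speed$ were arbitrary, this is exactly the assertion that $\PP(\Event_{\ForestGeneratingProcess}(\speed))$ is non-increasing in $\speed$. Note that this argument is entirely pathwise and uses only the structural hypothesis on the dynamics; no properties of $\ForestGeneratingProcess$ beyond the definition of $\Event_{\ForestGeneratingProcess}$ are needed.

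To prove the inclusion, I would take an arbitrary sample path $\omega \in \Event_{\ForestGeneratingProcess}(\speed)$. By definition of the event there exists $\Trajectory \in \SetTrajectories{\speed}$ that is collision-free, i.e.\ $\Trajectory(t) \in \FreeGeneral{\ForestGeneratingProcess(\omega)}$ for all $t \ge 0$, and infinite, i.e.\ $\lim_{t \to \infty} \Vert \Trajectory(t) \Vert_2 = \infty$. Invoking the hypothesis that the bird has non-decreasing path sets with decreasing speed, there is a continuous reparametrization $\Reparametrization : \reals_{\ge 0} \to \reals_{\ge 0}$ such that the trajectory $\Trajectory'$ defined by $\Trajectory'(t) := \Trajectory(\Reparametrization(t))$ satisfies $\Trajectory' \in \SetTrajectories{\speed'}$. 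I would then verify that $\Trajectory'$ witnesses membership of $\omega$ in $\Event_{\ForestGeneratingProcess}(\speed')$. Collision-freeness is immediate and requires no work: for every $t \ge 0$ the point $\Trajectory'(t) = \Trajectory(\Reparametrization(t))$ lies in the image of $\Trajectory$, and every point of that image belongs to $\FreeGeneral{\ForestGeneratingProcess(\omega)}$, so $\Trajectory'(t) \in \FreeGeneral{\ForestGeneratingProcess(\omega)}$ for all $t$.

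The main obstacle is the remaining requirement, namely that $\Trajectory'$ is infinite, $\lim_{t \to \infty} \Vert \Trajectory'(t) \Vert_2 = \infty$. Because $\Vert \Trajectory(s) \Vert_2 \to \infty$ as $s \to \infty$, it suffices to show that $\Reparametrization(t) \to \infty$ as $t \to \infty$, for then $\Vert \Trajectory'(t) \Vert_2 = \Vert \Trajectory(\Reparametrization(t)) \Vert_2 \to \infty$. However, the bare statement of the hypothesis only asserts dynamical feasibility of $\Trajectory'$; it does not by itself prevent $\Reparametrization$ from being bounded (a constant reparametrization, for instance, would produce a stationary and hence bounded $\Trajectory'$). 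The resolution is that the notion of \emph{retracing} built into the assumption is meant to cover the original trajectory in its entirety, so one may take $\Reparametrization$ to be non-decreasing and unbounded (equivalently, surjective onto $[0,\infty)$), meaning $\Trajectory'$ traverses all of the spatial curve of $\Trajectory$ at the slower speed without backtracking; this forces $\Reparametrization(t) \to \infty$. I would therefore make this interpretation explicit—it is the crux of the argument—and for concreteness point to the single-integrator bird of Equation~\eqref{eqn:sing_bird}, where a smaller $\speed$ widens the reachable cone (larger $\Maneuverability$) and the identical spatial path can literally be followed more slowly, giving such a $\Reparametrization$ directly.

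With $\Reparametrization(t) \to \infty$ established, $\Trajectory'$ is both collision-free and infinite, so $\omega \in \Event_{\ForestGeneratingProcess}(\speed')$. As $\omega \in \Event_{\ForestGeneratingProcess}(\speed)$ was arbitrary, the inclusion $\Event_{\ForestGeneratingProcess}(\speed) \subseteq \Event_{\ForestGeneratingProcess}(\speed')$ holds, and the claimed monotonicity of $\PP(\Event_{\ForestGeneratingProcess}(\speed))$ in $\speed$ follows.
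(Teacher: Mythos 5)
Your proof is correct and follows essentially the same route as the paper's: establish the pathwise inclusion $\Event_{\ForestGeneratingProcess}(\speed) \subseteq \Event_{\ForestGeneratingProcess}(\speed')$ for $\speed' < \speed$ by reparametrizing the witnessing trajectory, then apply monotonicity of $\PP$. Your explicit treatment of why $\Reparametrization(t) \to \infty$ is needed (and must be read into the ``retracing'' hypothesis) is a point the paper's proof passes over silently, and it is handled correctly.
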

\begin{proof}
Let $\speed, \speed' \in \reals_{>0}$ be two speeds such that $\speed<\speed'$.
It is enough to show that $\Event_{\ForestGeneratingProcess}(\nu') \subseteq \Event_{\ForestGeneratingProcess}(\nu)$, which implies $\PP(\Event_{\ForestGeneratingProcess}(\nu')) \le \PP(\Event_{\ForestGeneratingProcess}(\nu))$ by the monotonicity of probability measures.
Consider any sample path $\omega \in \Event_{\ForestGeneratingProcess}(\speed')$. Then, there exists an infinite trajectory $y' \in \SetTrajectories{\speed'}$ such that $y'(t) \in \FreeGeneral{\ForestGeneratingProcess(\omega)}$ for all $t \ge 0$. Since the dynamics of the bird has non-decreasing path sets with decreasing speed, the same trajectory, up to a suitable reparametrization of time, can be generated when the bird is flying slower. That is, there exists an infinite trajectory $y \in \SetTrajectories{\speed}$ such that $y(t) \in \FreeGeneral{\ForestGeneratingProcess(\omega)}$ for all $t \ge 0$. Thus, $\omega \in \Event_{\ForestGeneratingProcess}(\speed)$. 
Hence, $\Event_{\ForestGeneratingProcess}(\nu') \subseteq \Event_{\ForestGeneratingProcess}(\nu)$.
\end{proof}

Finally, the main result of this section%
, which follows directly from Lemmas~\ref{theorem:ergodic_forest:zero_one_law} and \ref{theorem:ergodic_forest:monotonicity}, 
is stated below.

\begin{theorem}\label{corollary:ergodic_forest}
Suppose that the dynamics governing the bird, described by Equation~\eqref{eqn:system}, is translation invariant and it has decreasing reachable sets with increasing speed. 
Then, there exists a critical speed $\speed_\mathrm{crit}$, possibly zero or infinity, such that 
\begin{itemize}
\item for any speed $\speed > \speed_\mathrm{crit}$, there exists no infinite trajectory for the bird that is collision free, with probability one, 
\item for any speed $\speed < \speed_\mathrm{crit}$, there exists at least one infinite collision-free trajectory for the bird, with probability one.
\end{itemize}
\end{theorem}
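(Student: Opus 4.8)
The plan is to combine the two lemmas just established. Lemma~\ref{theorem:ergodic_forest:zero_one_law} tells us that, under ergodicity, the map $\speed \mapsto \PP(\Event_\ForestGeneratingProcess(\speed))$ takes only the values $0$ and $1$, while Lemma~\ref{theorem:ergodic_forest:monotonicity} tells us that this same map is non-increasing in $\speed$. A non-increasing $\{0,1\}$-valued function on $\reals_{>0}$ is necessarily a reversed step function: it equals $1$ on an initial interval of speeds and $0$ thereafter. The entire argument therefore amounts to locating the single point at which the jump occurs and reading off the two claimed regimes.

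Concretely, I would define the critical speed as the threshold between the two regimes,
$$
\speed_\mathrm{crit} := \sup\{\speed \in \reals_{>0} : \PP(\Event_\ForestGeneratingProcess(\speed)) = 1\},
$$
with the conventions $\sup\emptyset = 0$ (yielding $\speed_\mathrm{crit}=0$ when the event has probability $1$ for no speed) and $\speed_\mathrm{crit} = \infty$ when the set is unbounded above. This directly accounts for the ``possibly zero or infinity'' clause in the statement.

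The two bullets then follow from elementary properties of the supremum. For $\speed > \speed_\mathrm{crit}$, the speed $\speed$ lies strictly above the supremum, so it cannot belong to the set $\{\speed : \PP(\Event_\ForestGeneratingProcess(\speed)) = 1\}$; hence $\PP(\Event_\ForestGeneratingProcess(\speed)) \ne 1$, and the dichotomy of Lemma~\ref{theorem:ergodic_forest:zero_one_law} forces $\PP(\Event_\ForestGeneratingProcess(\speed)) = 0$, i.e.\ no infinite collision-free trajectory exists, with probability one. For $\speed < \speed_\mathrm{crit}$, the defining property of the supremum guarantees some $\speed'$ with $\speed < \speed' \le \speed_\mathrm{crit}$ and $\PP(\Event_\ForestGeneratingProcess(\speed')) = 1$; applying the monotonicity of Lemma~\ref{theorem:ergodic_forest:monotonicity} across $\speed < \speed'$ gives $\PP(\Event_\ForestGeneratingProcess(\speed)) \ge \PP(\Event_\ForestGeneratingProcess(\speed')) = 1$, so an infinite collision-free trajectory exists, with probability one.

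I do not expect any genuine obstacle: once the two lemmas are in hand the result is essentially immediate, and the only points requiring care are bookkeeping rather than mathematics. One must fix the boundary conventions so that the degenerate cases $\speed_\mathrm{crit} \in \{0,\infty\}$ are covered, and observe that the statement deliberately says nothing about the borderline speed $\speed = \speed_\mathrm{crit}$, where $\PP(\Event_\ForestGeneratingProcess(\speed))$ may be either $0$ or $1$ depending on the forest-generating process and the bird's dynamics. It is also worth flagging that the ergodicity hypothesis enters implicitly through Lemma~\ref{theorem:ergodic_forest:zero_one_law}: without it the map need not be $\{0,1\}$-valued, and no sharp threshold separating the two regimes would be guaranteed.
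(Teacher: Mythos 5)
Your proposal is correct and matches the paper's intent exactly: the paper states that the theorem "follows directly" from Lemmas~\ref{theorem:ergodic_forest:zero_one_law} and~\ref{theorem:ergodic_forest:monotonicity}, and your argument—defining $\speed_\mathrm{crit}$ as the supremum of speeds with probability one and combining the zero-one dichotomy with monotonicity—is precisely the omitted routine deduction. The care you take with the conventions for $\speed_\mathrm{crit}\in\{0,\infty\}$ and the unaddressed boundary case $\speed=\speed_\mathrm{crit}$ is appropriate and consistent with the statement.
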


Theorem~\ref{corollary:ergodic_forest} implies the following regarding the existence of reaching planners that can almost surely maintain high-speed flight indefinitely. On one hand, when the bird flying with speed that is above $\speed_\mathrm{crit}$, it is doomed to crash into a tree eventually, with probability one, under the guidance of any reaching planner despite the knowledge of all the trees in the forest. On the other hand, there exists a planner that can indefinitely maintain collision-free flight with any speed below $\speed_\mathrm{crit}$, almost surely. 
Note that the proof is not constructive in the sense it does not explicitly provide such a planner, but it only shows its existence.
We construct an explicit planner that can navigate a single-integrator bird through a sparse-enough Poisson forest in the next section.

Let us note that assuming stationarity alone does not necessarily lead to the phase transition stated in Theorem~\ref{corollary:ergodic_forest}; the ergodicity assumption is essential. In other words, there exists a forest-generating process that is stationary, but not ergodic, such that the conditions of Theorem~\ref{corollary:ergodic_forest} do not hold. A slightly more general statement is given in the theorem below, the proof of which is delayed until Section~\ref{section:equivalent_model}.
Recall that the dynamics of the single-integrator bird is given by Equation~\eqref{eqn:sing_bird}.

\begin{theorem} \label{theorem:non_ergodic_counter_example}
There exists a family $\{\ForestGeneratingProcess_q : q \in (0,1)\}$ of stationary, but not ergodic, forest-generating processes parametrized by $q$, and a speed $\speed$ such that the probability that there exists an infinite collision-free trajectory for the single integrator bird flying with speed $\speed$ is equal to $q$, i.e., $\PP(\Event_{\ForestGeneratingProcess_q}(\speed)) = q$.
\end{theorem}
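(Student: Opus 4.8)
The plan is to build $\ForestGeneratingProcess_q$ as a mixture of two homogeneous Poisson forests, one lying in the subcritical regime and one in the supercritical regime for the single-integrator bird at a common speed $\speed$, following the canonical non-ergodic mixed-Poisson construction mentioned right after Theorem~\ref{theorem:invariant-sigma-algebra}. First I would fix the tree radius $r$ and the speed $\speed$, and invoke the Poisson-forest analysis of Sections~\ref{section:subcritical_poisson_regime} and~\ref{section:supercritical_poisson_regime}: the upper bound on the critical speed decreases as the density grows, so there is a density $\rho_2$ large enough that $\speed$ exceeds that upper bound, giving $\PP(\Event_{(\PoissonProcess{\rho_2}, r)}(\speed)) = 0$; symmetrically, the lower bound on the critical speed grows as the density shrinks, so there is a density $\rho_1 < \rho_2$ small enough that $\speed$ lies below the lower bound, giving $\PP(\Event_{(\PoissonProcess{\rho_1}, r)}(\speed)) = 1$. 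Both probabilities are exactly $0$ or $1$ rather than merely extremal-in-the-limit because each homogeneous Poisson forest is ergodic, so Lemma~\ref{theorem:ergodic_forest:zero_one_law} forces $\PP(\Event) \in \{0,1\}$.

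Next I would define, for each $q \in (0,1)$, the process $\ForestGeneratingProcess_q$ by drawing an auxiliary Bernoulli variable $\chi$ with $\PP(\chi = 1) = q$, independent of everything else, and setting $\ForestGeneratingProcess_q$ equal in law to $(\PoissonProcess{\rho_1}, r)$ on the event $\{\chi = 1\}$ and to $(\PoissonProcess{\rho_2}, r)$ on $\{\chi = 0\}$; equivalently, its distribution is the mixture $\MarkedPointProcessMeasure{\ForestGeneratingProcess_q} = q\,\MarkedPointProcessMeasure{(\PoissonProcess{\rho_1},r)} + (1-q)\,\MarkedPointProcessMeasure{(\PoissonProcess{\rho_2},r)}$. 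Stationarity is then immediate: each component distribution is invariant under every $\TranslateMeasure{\TranslateVector}{ }$, and a convex combination of translation-invariant measures is again translation-invariant.

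For the non-ergodicity I would exhibit a nontrivial invariant event. The asymptotic point density is a translation-invariant functional, so the event $\EventMarkedCountingMeasure^\star$ that the empirical density, namely the limit of $\MarkedCountingMeasure([0,a]^2 \times \MarkSpace)/a^2$ as $a \to \infty$, equals $\rho_1$ lies in the invariant $\sigma$-algebra $\MarkedInvariantSigmaAlgebra$. Under the sparse component this density is $\rho_1$ almost surely, while under the dense component it is $\rho_2 \neq \rho_1$ almost surely, so $\MarkedPointProcessMeasure{\ForestGeneratingProcess_q}(\EventMarkedCountingMeasure^\star) = q \in (0,1)$. Hence $\MarkedInvariantSigmaAlgebra$ is not trivial, and $\ForestGeneratingProcess_q$ fails to be ergodic by Theorem~\ref{theorem:invariant-sigma-algebra}.

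Finally, conditioning on $\chi$ and using that $\Event_{\ForestGeneratingProcess_q}(\speed)$ depends only on the realized forest, the law of total probability yields $\PP(\Event_{\ForestGeneratingProcess_q}(\speed)) = q\,\PP(\Event_{(\PoissonProcess{\rho_1},r)}(\speed)) + (1-q)\,\PP(\Event_{(\PoissonProcess{\rho_2},r)}(\speed)) = q \cdot 1 + (1-q)\cdot 0 = q$, as claimed. The one substantive step is the first: guaranteeing that a single speed $\speed$ is simultaneously subcritical for $\rho_1$ and supercritical for $\rho_2$. I expect this to be the main obstacle, but it is delivered directly by the explicit lower and upper bounds on the critical speed established for the Poisson forest, whose dependence on the density opens the required gap; everything else is routine bookkeeping with mixtures.
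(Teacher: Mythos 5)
Your proposal is correct and follows essentially the same route as the paper: mix a sub-critical Poisson forest (density below the Theorem~\ref{theorem:subcritical_poisson_flight:threshold_bound} bound, so $\PP(\Event)=1$) with a super-critical one (density above the Theorem~\ref{theorem:poisson_regime:super_criticality} bound, so $\PP(\Event)=0$) using a Bernoulli$(q)$ switch, and conclude $\PP(\Event_{\ForestGeneratingProcess_q}(\speed))=q$ by total probability. The only difference is that you explicitly verify stationarity and exhibit the nontrivial invariant event (the empirical density), whereas the paper asserts the non-ergodicity of the mixed Poisson process earlier with a citation to~\cite{Meester:1996ue}; your added detail is sound.
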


\section{Sub-critical Flight in a Poisson Forest} \label{section:subcritical_poisson_regime}

In the previous section, under mild technical assumptions on the dynamics of the bird, it was shown that existence of infinite collision-free trajectories in a randomly-generated forest environment exhibits a phase transition whenever the forest-generating process is ergodic. However, the proof is not constructive in the sense that the threshold speed, denoted by $\speed_\mathrm{crit}$, is neither explicitly computed nor bounded by any means. In this section, we focus on the special case of a single-integrator bird flying in a Poisson forest and compute a lower bound on $\speed_\mathrm{crit}$ using discrete percolation theory. Moreover, we analyze the effect of the starting point on the possibility of high-speed flight for the same case. That is, we try to answer the following question: {\em Does there exist a nearby starting point that leads to an infinite collision-free trajectory?}
We show that the probability that there exists such a starting point within a distance of $l$ to any given point on the plane 
converges to one exponentially fast with increasing $l$.

This section is organized as follows. First, we interrupt the flow of the paper and introduce the theory of discrete percolation, which is used heavily throughout this section. Second, we show that infinite collision-free trajectories exist for the single integrator bird flying in a Poisson forest, almost surely, whenever a certain type of percolation occurs in a novel percolation model that we describe subsequently. Finally, by carefully analyzing this percolation model, we derive a non-trivial lower bound on the critical speed.

\subsection{The Theory of Discrete Percolation}

A canonical model in percolation theory, called the {\em bond percolation on the square lattice}, is described as follows.
Let $\integers$ denote the set of all integers and $\LOne{\cdot}$ denote the usual $L_1$ norm that gives rise to the Manhattan distance.
Consider the graph that has the vertex set $\lattice$ and has an edge between any two vertices $\ElementLattice_1, \ElementLattice_2 \in \lattice$ whenever $\LOne{\ElementLattice_1 - \ElementLattice _2} = 1$. This graph, called the {\em square lattice}, is illustrated in Figure~\ref{figure:lattice}.(a).

Suppose that each edge in the square lattice is declared {\em open} with probability $\Probability$ and {\em closed} otherwise, independently from every other edge. 
A sequence of vertices $(\ElementLattice_0, \ElementLattice_1, \dots, \ElementLattice_k)$ is called an {\em open path}, if $\ElementLattice_i$ and $\ElementLattice_{i+1}$ share an open edge for all $i \in \{1,2,\dots,k\}$. 
An {\em open cluster} is a maximal set of vertices connected by open paths. 
An {\em infinite open cluster} is an open cluster with infinitely many vertices. 
Finally, {\em bond percolation} is said to occur, if the lattice contains an infinite open cluster.

Let $\OpenCluster{p}$ denote the open cluster that contains the origin. See Figure~\ref{figure:lattice}.(b) for an illustration. Note that the choice of origin is irrelevant, since any vertex is statistically indistinguishable from any other vertex. 
Clearly, $\OpenCluster{p}$ is a non-empty set, since it definitely contains the origin itself.
The {\em critical probability for bond percolation} (on the lattice) is defined as the smallest $p$ such that the open cluster that contains the origin has infinitely many vertices with non-zero probability, i.e.,  
$$
\CriticalProbability := \inf\left\{ p \, : \, \PP (\{\Card{\OpenCluster{p}} = \infty\}) > 0 \right\}.
$$

\begin{figure}[b]
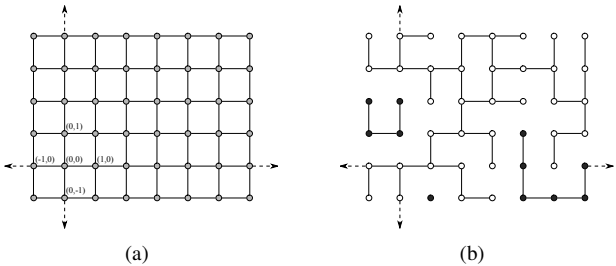

\centering
\subfigure[]{\includegraphics[width=0.41\linewidth]{./lattice.pdf}} \qquad
\subfigure[]{\includegraphics[width=0.41\linewidth]{./lattice_realization.pdf}}
\caption{A part of the square lattice is shown Figure (a). Coordinates of the origin and its neighbors are also shown. A particular realization of the bond percolation model is illustrated in Figure (b), where only open edges are drawn. The open cluster containing the origin is shown with white vertices. All other vertices are shown in black.}
\label{figure:lattice}
\end{figure}

Clearly, $\PP (\{\Card{\OpenCluster{p}} = \infty\})$ is equal to zero for $p = 0$ and one for $p =1$. 
It is rather easy to show that $\PP (\{\Card{\OpenCluster{p}} = \infty\})$ is a non-decreasing function of $p$. Moreover, it can be shown that $\CriticalProbability$ takes a non-trivial value, i.e., $\CriticalProbability \in (0,1)$. 
As intuitive as it may seem, it has taken mathematicians twenty years to produce a formal proof that $\CriticalProbability = 1 / 2$ (see, e.g.,~\cite{Bollobas:2006ur}). What is more striking, however, is stated in the following theorem.
\begin{theorem}[\cite{Bollobas:2006ur}] \label{theorem:discrete_percolation}
Consider the bond percolation model on the square lattice where each bond is open with probability $\Probability$ independently of all other bonds. Then, there is no infinite open cluster almost surely when $\Probability < \CriticalProbability$, while a unique infinite open cluster exists with probability one when $\Probability > \CriticalProbability$. 
\end{theorem}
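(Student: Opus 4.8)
The plan is to prove the statement in three parts: almost-sure non-existence of an infinite open cluster when $\Probability < \CriticalProbability$; almost-sure existence of at least one infinite open cluster when $\Probability > \CriticalProbability$; and almost-sure uniqueness of the infinite cluster in the latter regime. The first two parts follow from soft probabilistic arguments, while uniqueness is the substantive part and calls for the Burton--Keane argument.

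For the subcritical part, I would note that the event that some infinite open cluster exists equals the countable union $\bigcup_{\ElementLattice \in \lattice} \{\ElementLattice \text{ lies in an infinite open cluster}\}$, since any infinite cluster contains at least one vertex. Translation invariance of the independent edge measure gives $\PP(\{\ElementLattice \text{ lies in an infinite cluster}\}) = \PP(\{\Card{\OpenCluster{\Probability}} = \infty\})$ for every $\ElementLattice \in \lattice$, and the definition of $\CriticalProbability$ makes this quantity zero when $\Probability < \CriticalProbability$; a union bound over the countable lattice then gives probability zero. For the supercritical existence part, the event $\{\exists\ \text{infinite cluster}\}$ is a tail event, as altering finitely many edges can neither create nor destroy an infinite cluster, so Kolmogorov's zero-one law confines its probability to $\{0,1\}$; since it contains $\{\Card{\OpenCluster{\Probability}} = \infty\}$, which has positive probability for $\Probability > \CriticalProbability$ by monotonicity of $\PP(\{\Card{\OpenCluster{\Probability}} = \infty\})$, the probability must equal one.

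Uniqueness is the heart of the proof. Let $N$ be the number of infinite open clusters. Because $N$ is invariant under all lattice translations and the independent edge measure is ergodic under this translation group, $N$ equals a deterministic constant $k \in \{0,1,2,\dots,\infty\}$ almost surely; the existence part excludes $k = 0$. To exclude finite $k \ge 2$, I would invoke the finite-energy property of Bernoulli percolation (valid for $0 < \Probability < 1$; the case $\Probability = 1$ is trivial): on the positive-probability event that at least two of the infinite clusters meet a large box $[-n,n]^2$, forcing every edge inside the box to be open merges them, and this local modification has positive conditional probability, contradicting that $N$ is the constant $k \ge 2$. To exclude $k = \infty$, I would run the trifurcation count: calling a vertex a trifurcation point if it lies in an infinite cluster whose deletion breaks that cluster into three distinct infinite pieces, finite energy together with $k = \infty$ forces the per-vertex probability of being a trifurcation to be a constant $\delta > 0$, so translation invariance makes the expected number of trifurcations in $[-n,n]^2$ of order $\delta\,n^2$. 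The combinatorial core is that trifurcation points inside the box act as interior branch vertices of a macroscopic forest and hence number at most a constant times the box boundary, i.e.\ order $n$; comparing the $\Theta(n^2)$ lower bound against the $O(n)$ upper bound is a contradiction for large $n$, leaving $k = 1$.

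The main obstacle is this final combinatorial lemma: bounding the number of trifurcation points by the boundary size of the box requires encoding each trifurcation as a distinct branching of a tree-like structure spanning the box and charging it injectively to a boundary vertex. The companion delicate point is upgrading the mere existence of trifurcations under $k = \infty$ to a strictly positive density, which is exactly where the finite-energy property is used quantitatively. Everything else reduces to the zero-one laws and the ergodicity of the product edge measure already implicit in the model's definition.
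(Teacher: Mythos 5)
This theorem is imported by citation from the percolation literature; the paper supplies no proof of its own, deferring entirely to \cite{Bollobas:2006ur}. Your sketch reproduces the standard argument found there — the union bound over translates plus the definition of $\CriticalProbability$ for the subcritical part, Kolmogorov's zero-one law for supercritical existence, and the Burton--Keane finite-energy/trifurcation argument for uniqueness — and is correct in outline, with the genuinely hard steps (positive trifurcation density and the $O(n)$ boundary bound) correctly identified.
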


In other words, on one hand, bond percolation fails to occur, almost surely, when $\Probability < \CriticalProbability$; on the other hand, when $\Probability > \CriticalProbability$, not only bond percolation occurs with probability one, but also the infinite open cluster is {\em unique}.

A bond percolation model with $\Probability < \CriticalProbability$ is said to be in the {\em sub-critical regime} whereas one with $\Probability > \CriticalProbability$ is said to be in the {\em super-critical regime}. It is well known that in the sub-critical regime, the probability that the size of a particular open cluster is larger than $l$ is exponentially small in $l$~\cite{Bollobas:2006ur}. A similar statement for the super-critical regime is given below.

Let $\ball{\ElementLattice}{l}$ denote the ball of radius $l$ centered at $\ElementLattice$ defined as $\ball{\ElementLattice}{l} := \{\ElementLattice' \in \lattice : \LOne{\ElementLattice' - \ElementLattice} \le l\}$. Let $\InfiniteOpenCluster{\Probability}$ denote the unique open cluster in the super-critical regime when $\Probability > \CriticalProbability$.
\begin{theorem}[\cite{Bollobas:2006ur}] \label{theorem:sub_exponential_closed_cluster}
Consider the bond percolation model on the lattice with $\Probability > \CriticalProbability$. Then, there exists constants $c_1, c_2 \in \reals_{>0}$ such that for any $\ElementLattice \in \lattice$ and any $l \in \reals_{>0}$, 
$$
\PP(\{\ball{\ElementLattice}{l} \cap \InfiniteOpenCluster{p} = \emptyset\}) \le c_1 \, e^{-c_2 \, l},
$$
\end{theorem}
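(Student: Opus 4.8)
The plan is to prove the bound by a duality (contour) argument that turns the geometric event $\{\ball{\ElementLattice}{l} \cap \InfiniteOpenCluster{\Probability} = \emptyset\}$ into the occurrence of a long open circuit in the \emph{dual} percolation model, which is rare precisely because that dual model is sub-critical. By translation invariance of the bond percolation measure it suffices to treat $\ElementLattice = \LatticeOrigin$; write $A_l := \{\ball{\LatticeOrigin}{l} \cap \InfiniteOpenCluster{\Probability} = \emptyset\}$. I would set up planar duality in the usual way: place the dual lattice $\lattice + (1/2,1/2)$, pair each primal edge with the unique dual edge crossing it, and declare a dual edge \emph{open} exactly when the primal edge it crosses is closed. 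Since primal edges are closed independently with probability $1-\Probability$, the dual is a bond percolation model with parameter $1-\Probability$; and since $\CriticalProbability = 1/2$ by Theorem~\ref{theorem:discrete_percolation}, the hypothesis $\Probability > \CriticalProbability$ forces $1-\Probability < \CriticalProbability$, so the dual model is sub-critical.

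Next comes the contour step. On the event $A_l$, let $S$ be the set of all vertices joined to $\ball{\LatticeOrigin}{l}$ by an open primal path. Because no vertex of $\ball{\LatticeOrigin}{l}$ lies in the (unique) infinite open cluster, every open cluster meeting the ball is finite, and hence $S$ is finite. Every edge of the edge-boundary of $S$ must be closed, since an open boundary edge would enlarge $S$; and by the standard topological fact for planar bond percolation, the dual edges crossing this closed boundary contain an open dual circuit $\gamma$ encircling $\ball{\LatticeOrigin}{l}$. Thus $A_l$ is contained in the event that some open dual circuit surrounds $\ball{\LatticeOrigin}{l}$.

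For the probabilistic estimate, observe that any such circuit $\gamma$ must cross the positive horizontal axis at a dual vertex of the form $z_k = (k+\tfrac12,\tfrac12)$ with $k \ge l$, and, encircling the origin, it must also reach the negative axis, so $z_k$ lies in an open dual cluster of radius at least $k$. Using the cited exponential decay of cluster size in the sub-critical regime — valid here for the dual because $1-\Probability < \CriticalProbability$, and applicable since a cluster of radius at least $k$ has size at least $k$ — each such event has probability at most $c\,e^{-c' k}$, and a union bound over $k \ge l$ gives
$$
\PP(A_l) \;\le\; \sum_{k \ge l} c\, e^{-c' k} \;\le\; c_1\, e^{-c_2\, l},
$$
which is the claim; reinstating a general $\ElementLattice$ by translation invariance completes the argument.

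The main obstacle is this last step, specifically obtaining exponential decay for \emph{every} $\Probability > \CriticalProbability = 1/2$ rather than only for $\Probability$ close to $1$. A naive Peierls count bounds the probability of an open dual circuit of length $n$ by (number of such circuits)$\,\times(1-\Probability)^n$; since the number of self-avoiding circuits through a fixed region grows like $\mu^n$ with connective constant $\mu \le 3$, this converges only when $(1-\Probability)\mu < 1$, i.e.\ roughly $\Probability > 2/3$, and says nothing near $\Probability = 1/2$. Covering the entire sub-critical dual phase instead requires the sharpness of the phase transition — the fact that sub-critical connectivities, equivalently cluster radii, decay exponentially throughout $\{\,1-\Probability < \CriticalProbability\,\}$ — which is exactly the nontrivial input quoted from~\cite{Bollobas:2006ur}, and which I would invoke rather than reprove.
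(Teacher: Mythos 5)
The paper gives no proof of this statement --- it is imported verbatim from \cite{Bollobas:2006ur} --- so there is no internal argument to compare yours against; what you have written is the standard duality--contour proof, and it is essentially correct. The reduction to $\ElementLattice = \LatticeOrigin$ by translation invariance, the passage to the dual lattice at parameter $1-\Probability < \CriticalProbability = 1/2$, the contour step showing that on $\{\ball{\LatticeOrigin}{l} \cap \InfiniteOpenCluster{\Probability} = \emptyset\}$ every cluster meeting the ball is finite (using a.s.\ uniqueness of the infinite cluster) and hence an open dual circuit surrounds the ball, and the union bound over the point where that circuit crosses the positive axis are all sound. Two small points of hygiene: the set $S$ should be taken to be the ball together with all clusters meeting it, so that it is connected in the full lattice graph before the topological boundary lemma is applied (your definition already contains the ball via trivial paths, so this is only a matter of saying it); and for non-integer $l$ the crossing dual vertex is $(k+\tfrac12,\tfrac12)$ with $k \ge \lceil l-\tfrac12\rceil$, which only shifts constants. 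You also correctly isolate the one genuinely deep ingredient --- a Peierls count covers only $\Probability$ near $1$, and extending to all $\Probability>1/2$ requires exponential decay of subcritical (dual) cluster sizes throughout the subcritical phase, i.e.\ sharpness of the transition --- and since the theorem itself is a quoted result, importing that ingredient from the reference rather than reproving it is entirely appropriate. Your bridge from radius to size (``a cluster of radius at least $k$ has at least $k$ vertices'') correctly matches the form of the subcritical decay statement the paper actually quotes.
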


Another widely-studied discrete percolation model, called the {\em site percolation on the square lattice}, is defined as follows. Suppose each vertex, rather than each edge, of the square lattice is declared open with probability $p$ and closed otherwise. An open path, in this case, is a sequence of open vertices connected by edges. An open cluster and an infinite open cluster are defined similarly. 
Let $\OpenClusterSite{p}$ denote the set of all vertices that are connected to the origin by an open path. This set is also called the open cluster containing the origin. Note that $\OpenClusterSite{p}$ is an empty set if the origin itself is closed.
The {\em critical probability for site percolation} is defined as the smallest $p$ such that the open cluster containing the origin is infinite with non-zero probability, i.e., 
$
\CriticalProbabilitySite  := \inf \{ p\; : \; \PP(\{\Card{\OpenClusterSite{p}} = \infty\}) > 0 \}.
$

Similar to bond percolation, it is rather easy to show that $\CriticalProbabilitySite \in (0,1)$. However, the precise value of $\CriticalProbabilitySite$ is not known, although rigorous bounds are available~\cite{Bollobas:2006ur}. In particular, it has been shown that $\CriticalProbabilitySite > 1/2$. Simulation studies suggest that $\CriticalProbabilitySite$ is around 0.59274598~\cite{Lee:2008kf}.
Again, for all $\Probability < \CriticalProbabilitySite$, percolation fails to occur almost surely, while for all $\Probability > \CriticalProbabilitySite$, a unique infinite open cluster exists with probability one. Moreover, Theorem~\ref{theorem:sub_exponential_closed_cluster} holds also for the site percolation case.

\begin{figure}[b]
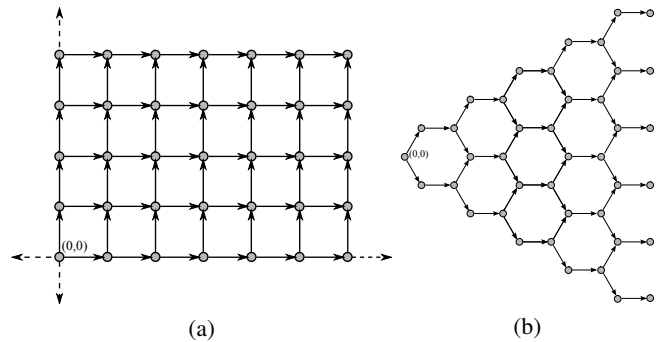

\centering
\begin{minipage}[h]{0.57\linewidth}
\includegraphics[width=\linewidth]{./lattice_directed.pdf}\\
\centerline{\small (a)}
\end{minipage}
\begin{minipage}[h]{0.38\linewidth}
\includegraphics[width=\linewidth]{./lattice_hex_directed.pdf}\\
\centerline{\small (b)}
\end{minipage}
\caption{The directed square lattice is shown in Figure (a). The directed hexagonal lattice is shown in Figure (b).}
\label{figure:lattice_directed}
\end{figure}

Percolation models on directed graphs are of particular interest for the purposes of this paper. Consider the directed square lattice with vertices only in the positive orthant and the edges are directed in the direction of the positive axes (see Figure~\ref{figure:lattice_directed}.(a)). Both site percolation and bond percolation models on the directed square lattice are defined similar to their undirected components. In this case, an {\em open path} is a sequence of open vertices connected with open directed edges. The critical probabilities are defined similarly. It was shown that also in the directed case the critical probabilities take non-trivial values, although their precise values are not known~\cite{Bollobas:2006ur}. Simulations suggest that the critical probability for bond percolation is around 0.644700185 whereas that for site percolation is around 0.70548522~\cite{Jensen:1999vu}.
Finally, note that Theorems~\ref{theorem:discrete_percolation} are \ref{theorem:sub_exponential_closed_cluster} are both valid also for both bond percolation and site percolation on the directed square lattice~\cite{Bollobas:2006ur}.

Square lattice is not the only graph on which the percolation phenomenon occurs. In fact, there are a variety graphs, each of which lead to different percolation thresholds. The interested reader is referred to~\cite{Bollobas:2006ur} for an extensive survey. A graph that is important for the purposes of this paper is the directed hexagonal lattice shown in Figure~\ref{figure:lattice_directed}.(b).
The critical probability is known to be non-trivial both for the bond percolation and the site percolation models on the hexagonal lattice~\cite{Bollobas:2006ur}. In particular, the critical probability for bond and site percolation on the hexagonal lattice is known to be at most $(1+\sqrt{33})/8$ and $\sqrt{3}/2$, respectively (see~\cite{Bollobas:2006ur} and the references therein).
Finally, Theorems~\ref{theorem:discrete_percolation} and \ref{theorem:sub_exponential_closed_cluster} are both valid for both the directed site percolation and the directed bond percolation models on the hexagonal lattice (see, e.g.,~\cite{Durrettt:1984tma}).

\subsection{The Existence of Infinite Collision-free Trajectories}

In this section, it is shown that the existence of infinite collision-free trajectories for the single-integrator bird in a Poisson forest is implied by site percolation on a directed hexagonal lattice, which is then used to derive a lower bound for the critical speed under certain conditions.

\begin{figure}[b]
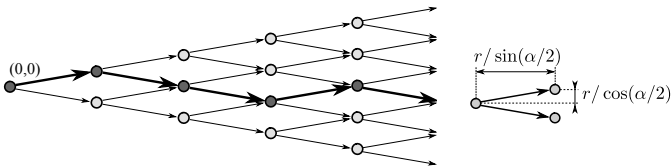

\centering
\begin{minipage}[h]{0.65\linewidth}
\includegraphics[width=\linewidth]{./lattice_trajectories.pdf}
\end{minipage}\hfill
\begin{minipage}[h]{0.3\linewidth}
\includegraphics[width=\linewidth]{./lattice_trajectories_params.pdf}
\end{minipage}
\caption{The set of all output trajectories of the single integrator starting from the origin under some input signal in the aforementioned input set is shown on the left. An example trajectory from this set is shown in bold. The circles are the points in $\OutputSwitchPoints{\speed}$. On a particular edge the input signal, equal to either $-\wmax$ or $\wmax$, is kept constant. On the right, the length traveled in longitudinal and lateral directions is shown when $\Time = \Radius/(\speed \, \sin (\alpha/2))$.}
\label{figure:output_trajectories}
\end{figure}

Recall from Section~\ref{section:model:sing_bird_poisson_forest} that $\speed$ denotes the longitudinal speed of the single-integrator bird and $\Radius$ denotes the radius of each tree in the Poisson forest. Recall also that $\anglespeed := 2\, \tan^{-1}(1/\speed)$. 
Let $\Time = \Radius/(\speed \, \sin (\anglespeed/2))$. 
Define the infinite sequence $(\TimeInterval_0, \TimeInterval_1, \dots)$ of time intervals as $\TimeInterval_k = [k \, T, (k+1)\, T)$ for all $k \in \naturals$. 
Let $\InputSequence = (\w_0, \w_1,\dots)$ be a sequence of lateral speeds for the single integrator bird such that $\w_k \in \{-\wmax, \wmax\}$ for all $k \in \naturals$. Define the input signal $\inputvar{\sigma}$ as $\inputvar{\sigma}(t) = \w_k$ for all $t \in \TimeInterval_k$. Consider  the set of input signals corresponding to the set of all such $\sigma$, i.e., 
$$
\big\{ u_{(\w_0, \w_1, \dots)} \, : \, \w_k \in \{-\wmax, \wmax\} \mbox { for all } k \in \naturals \, \big\}.
$$
Let $\OutputTrajectories{\speed}$ denote the set of all output trajectories of the single-integrator bird that correspond to input signals from the set given above and that start from the origin. Let $\OutputSwitchPoints{\speed}$ denote the set of points that the value of the input signal may switch, i.e., $\OutputSwitchPoints{\speed} := \{y(k \Time) : y \in \OutputTrajectories{\speed}, k \in \naturals\}$. The sets $\OutputTrajectories{\Time}$ and $\OutputSwitchPoints{\speed}$ are depicted in Figure~\ref{figure:output_trajectories}.
Note that the longitudinal and lateral distances traveled by the bird during a time interval of length $\Time$ are $\Radius/\sin(\anglespeed/2)$ and $\Radius/\cos(\anglespeed/2)$, respectively.

The following theorem states the main result of this section. Its proof establishes a key connection between the existence of infinite collision-free trajectories in $\OutputTrajectories{\speed}$ and site percolation on the directed hexagonal lattice.

\begin{theorem} \label{theorem:subcritical_poisson_flight:threshold_bound}
There exists an infinite collision-free trajectory of the single integrator bird, almost surely, whenever the tree density $\rho$, tree radius $\Radius$, and the speed $\nu$ satisfy the following:
$$
\frac{\rho \, \Radius}{\sin (\alpha)} < \log \left( 1/ \sqrt{\CriticalProbabilitySite}\right),
$$
where $\alpha = 2 \, \tan^{-1} (1/\speed)$ and $\CriticalProbabilitySite$ is the critical probability for site percolation on the directed hexagonal lattice. 
\end{theorem}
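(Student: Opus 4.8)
\emph{Proof strategy.} The plan is to build an explicit \emph{independent} site-percolation model on the directed hexagonal lattice, show that its open clusters certify the collision-free trajectories we seek, and then read off the stated bound from the percolation threshold together with the zero-one law of Lemma~\ref{theorem:ergodic_forest:zero_one_law}. I would work entirely within the structured family $\OutputTrajectories{\speed}$ of bang-bang trajectories: its switch points $\OutputSwitchPoints{\speed}$ together with the connecting segments are organized, as in Figure~\ref{figure:lattice_directed}(b), as the directed hexagonal lattice, where each site is a switch point and each directed edge is a trajectory segment oriented in the direction of increasing longitudinal coordinate. Because every edge advances the bird longitudinally, any infinite directed path $y$ automatically satisfies $\lim_{t\to\infty}\Vert y(t)\Vert_2 = \infty$; moreover each segment has length $2\Radius/\sin(\alpha)$, obtained from the per-interval longitudinal and lateral displacements $\Radius/\sin(\alpha/2)$ and $\Radius/\cos(\alpha/2)$.

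The heart of the argument is the definition of open sites. I would associate to each site $v$ a region $R_v \subset \plane$ and declare $v$ \emph{open} when $R_v$ contains no tree, i.e., no point of $\PoissonProcess{\rho}$. The region $R_v$ must balance two competing demands. On one hand it must be large enough that emptiness of $R_v$ guarantees the bird can pass through $v$ and traverse the associated segments while remaining in $\FreeGeneral{\ForestGeneratingProcess}$; concretely, $R_v$ should cover the radius-$\Radius$ clearance tube around those segments and the corner at the switch point, so that no disk $\ball{y_i}{\Radius}$ can meet the path. On the other hand the regions $\{R_v\}$ must be pairwise disjoint, so that the independence of $\PoissonProcess{\rho}$ on disjoint Borel sets makes the open/closed states of distinct sites mutually independent, which is exactly what realizes a bona fide independent site-percolation model. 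With such a choice, concatenating the segments along any infinite directed open path yields a trajectory $y \in \OutputTrajectories{\speed}$ that is collision-free and diverges, so the existence of an infinite directed open path implies the event $\Event_{\ForestGeneratingProcess}(\speed)$.

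It then remains to compute the open probability and conclude. By the Poisson void probability, $\PP(\{v \text{ open}\}) = e^{-\rho\,\vert R_v\vert}$, and a direct evaluation of $\vert R_v\vert$ shows that the supercriticality condition $\PP(\{v\text{ open}\}) > \CriticalProbabilitySite$ is precisely the stated inequality $\rho\,\Radius/\sin(\alpha) < \log(1/\sqrt{\CriticalProbabilitySite})$. In this regime Theorem~\ref{theorem:discrete_percolation}, valid for directed hexagonal site percolation, gives an infinite open cluster, hence an infinite directed open path, with positive probability, so $\PP(\Event_{\ForestGeneratingProcess}(\speed)) > 0$. Since the single-integrator dynamics is translation invariant and the homogeneous Poisson forest is ergodic, Lemma~\ref{theorem:ergodic_forest:zero_one_law} forces $\PP(\Event_{\ForestGeneratingProcess}(\speed)) \in \{0,1\}$, upgrading positivity to probability one.

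The step I expect to be the main obstacle is the region design in the second paragraph: simultaneously making each $R_v$ fat enough — a tube of half-width $\Radius$ about the segments, with the switch-point corners correctly handled — to certify genuine collision-freeness, yet pairwise disjoint to secure independence, and still small enough that the resulting threshold is as sharp as $\rho\,\Radius/\sin(\alpha) < \log(1/\sqrt{\CriticalProbabilitySite})$. This tension is what dictates both the choice of the hexagonal (rather than a denser) lattice and the exact area bookkeeping. A secondary subtlety is that directed percolation only furnishes an infinite open path somewhere in the lattice rather than from a prescribed start; the ergodic zero-one law is precisely the tool that bridges this gap and delivers the almost-sure conclusion.
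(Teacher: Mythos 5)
Your overall strategy is the same as the paper's: restrict attention to the bang-bang family $\OutputTrajectories{\speed}$, build an independent site-percolation model by declaring sites open when associated pairwise-disjoint regions contain no point of $\PoissonProcess{\rho}$, compute the void probability, and invoke the directed-percolation threshold. However, there is a genuine gap at exactly the step you flag as ``the main obstacle,'' and it is not merely bookkeeping: your identification of the lattice is wrong. The switch points of $\OutputSwitchPoints{\speed}$, joined by the trajectory segments, form a (rotated) directed \emph{square} lattice --- every interior switch point has two outgoing and two incoming segments, hence degree four --- whereas the hexagonal lattice is $3$-regular. With one region $R_v$ per switch point you would therefore be doing site percolation on the directed square lattice, whose threshold ($\approx 0.705$) is not $\CriticalProbabilitySite$, and the region needed to certify clearance around both outgoing segments has area $4\Radius^2/\sin\Maneuverability$ rather than $2\Radius^2/\sin\Maneuverability$; the resulting sufficient condition is not the one stated in the theorem.

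The paper's construction resolves this by associating \emph{two} triangular regions with each switch point, each of area $\Radius^2/(\sin(\Maneuverability/2)\cos(\Maneuverability/2)) = 2\Radius^2/\sin\Maneuverability$, with $\Time = \Radius/(\speed\sin(\Maneuverability/2))$ chosen so that these triangles are openly disjoint and cover the radius-$\Radius$ tube around any trajectory in $\OutputTrajectories{\speed}$. The sites of the percolation model are the \emph{triangles}, not the switch points, and two sites are joined by a directed edge when their triangles share a side; it is this adjacency graph of the triangulation that is the directed hexagonal lattice. Emptiness of the triangles along a directed open path then certifies collision-freeness of the corresponding trajectory, disjointness of the triangles gives independence under the Poisson process, and $e^{-2\rho\Radius^2/\sin\Maneuverability} > \CriticalProbabilitySite$ rearranges to $\rho\Radius^2/\sin\Maneuverability < \log\left(1/\sqrt{\CriticalProbabilitySite}\right)$. (The exponent of $\Radius$ in the theorem as printed appears to be a typo; the proof and Theorem~\ref{theorem:subcritical_poisson_regime:exponential_decay} both give $\rho\Radius^2/\sin\Maneuverability$.) Your closing steps are fine --- indeed Theorem~\ref{theorem:discrete_percolation} already yields an infinite open cluster with probability one, so the appeal to Lemma~\ref{theorem:ergodic_forest:zero_one_law} is not even needed --- but the deferred region design is the substance of the proof, and as sketched your version would not produce the stated bound.
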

\begin{proof}
Let $\Trajectory \in \SetTrajectories{\speed}$ be a trajectory of the single-integrator bird.
Since all trees of the Poisson forest-generating process have the same radius, namely $\Radius$, a necessary and sufficient condition for $\Trajectory$ to be collision free is that no points of the Poisson point process falls on the tunnel of width $\Radius$ around $\Trajectory$, defined as $\bigcup_{t \in [0,\infty)} \ball{\Trajectory(t)}{\Radius}$, where $\ball{\Trajectory(t)}{\Radius}$ denotes the Euclidean ball of radius $\Radius$ centered at $\Trajectory(t)$. This region is shown in Figure~\ref{figure:output_trajectory_safe_region} for the particular example trajectory that was depicted in Figure~\ref{figure:output_trajectories}. 

\begin{figure}[b]
\centering
\includegraphics[width=0.8\linewidth]{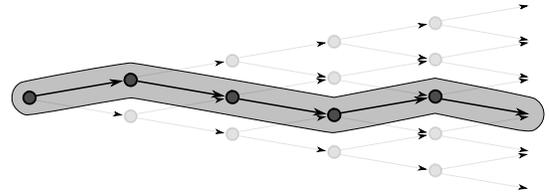}
\caption{The shaded region is the set of all points that should not include a point of the Poisson point process in order for the shown trajectory to be collision free.}
\label{figure:output_trajectory_safe_region}
\end{figure}

\begin{figure}[b]
\centering
\includegraphics[width=0.9\linewidth]{./lattice_safe_traj_model.pdf}
\caption{Triangular regions associated with the switching points in $\OutputSwitchPoints{\speed}$.}
\label{figure:output_trajectory_model}
\end{figure}

\begin{figure}[t]
\centering
\includegraphics[width=0.9\linewidth]{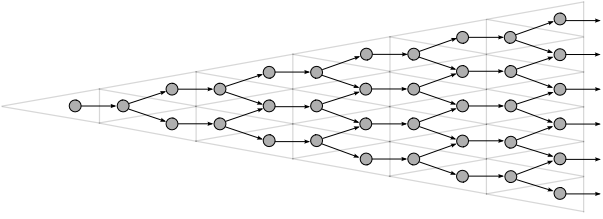}
\caption{The site percolation model on the hexagonal lattice implied by the percolation on the triangular regions associated with the switching points.}
\label{figure:output_trajectory_model_graph}
\end{figure}

This motivates the following percolation model. 
Associate two triangular regions with each point in $\OutputSwitchPoints{\speed}$ as illustrated in Figure~\ref{figure:output_trajectory_model}.
Consider the graph formed by associating a vertex with each triangular region and connecting two vertices with a directed edge if their corresponding triangular regions share an edge. The edge is always directed towards the positive longitudinal axis. See Figure~\ref{figure:output_trajectory_model_graph} for an illustration. 

Declare a vertex {\em open} if its corresponding triangular region contains no point of the Poisson process, and {\em closed} otherwise. The probability of the event that a particular vertex is open is 
$$
\Probability = e^{- \rho \, \Radius^2/(\sin (\alpha/2) \, \cos (\alpha/2))}.
$$

Since the point process is Poisson and the triangular regions are openly disjoint, every vertex is open with probability $\Probability$ independently from every other vertex. Hence, this model is equivalent to directed site percolation on the hexagonal lattice. 

Finally, notice that there exists an infinite collision-free trajectory in $\OutputTrajectories{\speed}$, thus in $\SetTrajectories{\speed}$, whenever there exists an infinite open cluster in this model. 
The latter occurs when 
$$
 e^{- \rho \, \Radius^2/(\sin (\alpha/2) \, \cos (\alpha/2))} > \CriticalProbabilitySite,
$$
where $\CriticalProbabilitySite$ is the critical probability for site percolation on the hexagonal lattice. 
Then, the result follows by taking the natural logarithm of both sides and substituting $\sin (\alpha) / 2$ for $\sin(\alpha/2) \cos(\alpha/2)$.
\end{proof}

In the following, the existence of infinite collision-free trajectories is further investigated. 
Roughly speaking, the following theorem states that, under the conditions of Theorem~\ref{theorem:subcritical_poisson_flight:threshold_bound}, there exists a nearby starting point that leads to an infinite collision-free trajectory with very high probability. 
\begin{theorem} \label{theorem:subcritical_poisson_regime:exponential_decay}
Suppose that the conditions of Theorem~\ref{theorem:subcritical_poisson_flight:threshold_bound} are satisfied and that $\rho \, \Radius^2 / \sin(\Maneuverability) < \log (1/\sqrt{\CriticalProbabilitySite})$. Given $l \in \reals_{>0}$ and $\ElementPlane \in \plane$, let $\Event_\ElementPlane(l)$ denote the event that there exists an infinite collision-free trajectory with speed $\speed$ that starts from some $\ElementPlane'$ with $\Vert \ElementPlane' - \ElementPlane \Vert \le l$. Then, there exists constants $c_1, c_2 \in \reals_{>0}$ such that for any $\ElementPlane \in \plane$ and for all $l \in \reals_{>0}$,
$$
\PP\big(\,\Event_\ElementPlane (l)\,\big) \; \ge \; 1 - c_1 \, e^{-c_2 \, l}
$$
\end{theorem}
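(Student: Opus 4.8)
The plan is to transfer the exponential-decay estimate of Theorem~\ref{theorem:sub_exponential_closed_cluster} from the directed hexagonal site-percolation model to the event $\Event_\ElementPlane(l)$ via the equivalence established in the proof of Theorem~\ref{theorem:subcritical_poisson_flight:threshold_bound}. Recall that in that proof, the switching points $\OutputSwitchPoints{\speed}$ and their associated triangular regions induce a directed hexagonal lattice whose vertices are declared open (the triangle is free of Poisson points) independently with probability $\Probability = e^{-\rho \Radius^2/(\sin(\alpha/2)\cos(\alpha/2))} = e^{-\rho\Radius^2/\sin(\alpha)}$, and an infinite collision-free trajectory exists whenever this lattice contains an infinite open cluster emanating from the starting region. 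The extra hypothesis $\rho\Radius^2/\sin(\Maneuverability) < \log(1/\sqrt{\CriticalProbabilitySite})$ is precisely the condition $\Probability > \CriticalProbabilitySite$, placing the induced model in the super-critical regime, where Theorem~\ref{theorem:sub_exponential_closed_cluster} applies and guarantees a unique infinite open cluster $\InfiniteOpenCluster{\Probability}$.

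The key steps, in order, are as follows. First I would fix $\ElementPlane$ and, using the translation invariance of the dynamics (Lemma~\ref{theorem:ergodic_forest:zero_one_law}) together with the stationarity of the Poisson forest, reduce to the case $\ElementPlane = \LatticeOrigin$ so that the induced hexagonal lattice can be anchored at $\ElementPlane$. Second, I would identify a starting point $\ElementPlane'$ with $\Vert \ElementPlane' - \ElementPlane\Vert \le l$ that lies on an infinite collision-free trajectory with the event that some vertex of the induced lattice within lattice-distance $O(l)$ of the anchor vertex belongs to the infinite open cluster $\InfiniteOpenCluster{\Probability}$: since each lattice cell has bounded physical diameter (controlled by $\Radius$, $\sin(\alpha/2)$, $\cos(\alpha/2)$), a ball $\ball{\ElementLattice}{c\,l}$ in the lattice $L_1$-metric is contained in the Euclidean ball of radius $l$ about $\ElementPlane$ for a suitable constant $c$ depending only on the cell geometry. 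Third, I would invoke Theorem~\ref{theorem:sub_exponential_closed_cluster} to bound $\PP(\{\ball{\ElementLattice}{c\,l}\cap \InfiniteOpenCluster{\Probability} = \emptyset\}) \le c_1' e^{-c_2' c\, l}$, and then note that on the complementary event some vertex in $\ball{\ElementLattice}{c\,l}$ lies in the infinite cluster, which yields an infinite open directed path and hence, by the construction of Theorem~\ref{theorem:subcritical_poisson_flight:threshold_bound}, an infinite collision-free trajectory starting within distance $l$ of $\ElementPlane$. Rescaling the constants gives $\PP(\Event_\ElementPlane(l)) \ge 1 - c_1 e^{-c_2 l}$, with constants independent of $\ElementPlane$ by stationarity.

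The main obstacle I anticipate is the second step: carefully relating an infinite open cluster rooted at an \emph{arbitrary} nearby vertex to an infinite collision-free trajectory that genuinely \emph{starts} near $\ElementPlane$. In the directed model, an infinite open path exists only in the forward longitudinal direction, so membership of a vertex in the infinite cluster must supply a directed open ray; I must verify that the corresponding bird trajectory can be initiated at (or steered to) the physical location of that vertex, which lies within distance $l$ of $\ElementPlane$ by the cell-diameter estimate. This requires checking that the directed hexagonal structure indeed furnishes an infinite forward path from any infinite-cluster vertex and that the triangular-region-to-trajectory correspondence from Theorem~\ref{theorem:subcritical_poisson_flight:threshold_bound} maps such a path to an admissible trajectory in $\OutputTrajectories{\speed}$. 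A secondary technical point is confirming that the constant $c$ relating Euclidean distance to lattice distance, and the exponential constants, can be chosen uniformly in $\ElementPlane$; this follows from translation invariance of both the dynamics and the Poisson forest, so it should not pose a genuine difficulty beyond bookkeeping.
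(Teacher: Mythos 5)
Your proposal is correct and follows essentially the same route as the paper, whose proof is simply the observation that Theorem~\ref{theorem:sub_exponential_closed_cluster} applied to the directed hexagonal site-percolation construction from Theorem~\ref{theorem:subcritical_poisson_flight:threshold_bound} gives the exponential bound; your version merely fills in the lattice-to-Euclidean distance bookkeeping and the directed-cluster-to-trajectory correspondence that the paper leaves implicit. One small slip: $\sin(\alpha/2)\cos(\alpha/2)=\tfrac{1}{2}\sin\alpha$, so the open-vertex probability is $e^{-2\rho R^2/\sin\alpha}$ rather than $e^{-\rho R^2/\sin\alpha}$, which is exactly why the hypothesis carries the $\sqrt{\CriticalProbabilitySite}$; your stated conclusion $\Probability>\CriticalProbabilitySite$ is nonetheless the right one.
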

\begin{proof}
The results easily follows from Theorem~\ref{theorem:sub_exponential_closed_cluster} applied to the construction presented in the proof of Theorem~\ref{theorem:subcritical_poisson_flight:threshold_bound}.
\end{proof}

\section{Super-critical Flight in a Poisson Forest} \label{section:supercritical_poisson_regime}

In this section, we continue our analysis of the single-integrator bird flying in a Poisson forest focusing on the super-critical regime using continuum percolation theory. In particular, we derive an upper bound on the critical speed for this special case and bound the scaling of the collision probability as a function of the time of flight.
That is, we try to answer the following question: {\em For how long can the bird survive without crashing into a tree, when it is flying with a speed that is larger than the critical threshold?}
We show that the survival probability of the bird is exponentially small in time of flight, independent of the planner that governs the bird. 

This section is organized as follows. First, we briefly introduce some important results in continuum percolation theory. Second, we describe a geometric model in which a certain type of percolation implies the non-existence of infinite collision-free trajectories. For the same case, we derive an upper bound on the critical speed, and show that the probability that a collision does not occur for $T$ time units is exponentially small in $T$ when the bird is flying at a speed above this bound.

\subsection{The Theory of Continuum Percolation} \label{section:poisson_regime:percolation_theory}

The canonical model in continuum percolation theory, called the {\em Gilbert disk model}, is a graph formed by connecting each point $\ElementPlane$ of the Poisson process $\PoissonProcess{\lambda}$ to all other points within a disc of diameter $D$ centered at $\ElementPlane$. 
This graph is said to {\em percolate} if it contains a connected component involving infinitely many vertices.
Surprisingly, the existence of such a ``large'' connected component exhibits a non-trivial phase transition. That is, there exists a finite non-zero critical intensity $\lambda_\mathrm{crit}$ such that (i) on one hand, for all $\lambda < \lambda_\mathrm{crit}$ the graph fails to percolate with probability one (ii) on the other hand, for all $\lambda > \lambda_\mathrm{crit}$ percolation does occur, in fact there exists a unique infinite component, with probability one (see~\cite{Meester:1996ue}).
In the rest of this section, we discuss this phenomenon and its implications in detail.
For reasons to become clear shortly, we will adopt the following view of the Gilbert disk model, rather than the graph-centric view described above.

Let $\OccupiedRegion{\lambda}{r}$ denote the region that is occupied by the disks with radius $r$ centered at the points of the Poisson process of intensity $\lambda$, i.e., $\OccupiedRegion{\lambda}{r} = \bigcup_{\ElementPlane \in \PoissonProcess{\lambda}} \ball{\ElementPlane}{r}$, where $\ball{\ElementPlane}{r}$ denotes the disk with radius $r$ centered at $\ElementPlane \in \plane$. The set $\OccupiedRegion{\lambda}{r}$ is also called the {\em occupied region}. An {\em occupied component} is a connected subset of $\OccupiedRegion{\lambda}{r}$ that is maximal with respect to set inclusion. 
Let $\OccupiedComponent{\lambda}{r}{0}$ denote the occupied component that intersects the origin.\footnote{Here, also, the choice of origin is irrelevant, since origin is probabilistically indistinguishable from any other point on the plane due to the stationarity of the Poisson process~\cite{Kingman:1993tg,Meester:1996ue} or any other ergodic spatial point process.} If no disc intersects the origin, then let $\OccupiedComponent{\lambda}{r}{0} = \emptyset$. Define the {\em critical intensity for occupied percolation}, denoted by $\lambda_c(r)$, as the smallest intensity such that the occupied component containing the origin is infinite with non-zero probability, i.e., 
$$
\lambda_c (r) := \inf \left\{ \lambda : \PP\left( \left\{\mu(\OccupiedComponent{\lambda}{r}{0}) = \infty\right\} \right) > 0 \right\}.
$$

It is rather easy to show that $\PP(\{\mu(\OccupiedComponent{\lambda}{r}{0}) = \infty\})$ is a non-decreasing function of $\lambda$ for all fixed $r$.
It can also be shown that $\PP(\{\mu(\OccupiedComponent{\lambda}{r}{0}) = \infty\}) = 0$ for small enough $\lambda$, and $\PP(\{\mu(\OccupiedComponent{\lambda}{r}{0}) = \infty\}) > 0$ for large enough $\lambda$~\cite{Meester:1996ue}.
Thus, for any $r > 0$ the critical intensity satisfies $\lambda_c(r) \in (0, \infty)$. In other words, there exists a non-trivial threshold, $\lambda_c(r)$, called the {\em percolation threshold}, below which the origin participates in an unbounded occupied component with probability zero, and above which the probability of the same event is non-zero. A Poisson process with intensity lower than the percolation threshold is said to be {\em sub-critical}, whereas one with intensity higher than the same threshold is  said to be {\em super-critical}.

An equivalent of Theorem~\ref{theorem:discrete_percolation} in the context of continuum percolation theory is stated below. This result establishes the existence of the aforementioned phase transition phenomenon.

\begin{theorem}[\cite{Meester:1996ue}] \label{theorem:percolation_occupied_region}
In the sub-critical regime, the probability that there exists an unbounded occupied component is zero. In the super-critical regime, there exists a unique unbounded occupied component with probability one.
\end{theorem}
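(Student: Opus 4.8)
The plan is to handle the two regimes separately, in both cases exploiting the ergodicity of the Poisson process under the translation group $\{\TranslateMeasure{\TranslateVector}{ } : \TranslateVector \in \plane\}$ together with the defining property of $\lambda_c(r)$. The common starting point is a zero-one law. The event $\Event := \{\text{there is an unbounded occupied component in } \OccupiedRegion{\lambda}{r}\}$ is invariant under every translation, since translating the whole disk configuration carries unbounded components to unbounded components. Because $\PoissonProcess{\lambda}$ is ergodic, Theorem~\ref{theorem:invariant-sigma-algebra} yields $\PP(\Event) \in \{0,1\}$. More is true: the number $N \in \{0,1,2,\dots\} \cup \{\infty\}$ of distinct unbounded occupied components is itself a translation-invariant random variable, hence almost surely equal to a deterministic constant, a fact I will use repeatedly below.

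In the sub-critical regime $\lambda < \lambda_c(r)$, the definition of the critical intensity gives $\PP(\{\mu(\OccupiedComponent{\lambda}{r}{0}) = \infty\}) = 0$, so the occupied component through the origin is bounded almost surely. To upgrade this to $\PP(\Event) = 0$, I would tile the plane into unit squares $\{Q_i\}_{i \in \lattice}$ and observe that on $\Event$ at least one square meets an unbounded component. By stationarity each square does so with a common probability $q$; a Campbell/Palm computation for the Poisson Boolean model then relates $q$ to the probability that a typical covered point --- equivalently, the origin --- lies in an unbounded component, so $q > 0$ would force $\PP(\{\mu(\OccupiedComponent{\lambda}{r}{0}) = \infty\}) > 0$, a contradiction. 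Hence $q = 0$, and a countable union over $i \in \lattice$ gives $\PP(\Event) = 0$, i.e. $N = 0$ almost surely.

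In the super-critical regime $\lambda > \lambda_c(r)$, the definition of $\lambda_c(r)$ gives $\PP(\{\mu(\OccupiedComponent{\lambda}{r}{0}) = \infty\}) > 0$, so $\PP(\Event) > 0$ and the zero-one law forces $\PP(\Event) = 1$; thus $N \ge 1$ almost surely. The substantive step is uniqueness, namely ruling out $N \ge 2$, for which I would invoke the Burton--Keane argument. The essential tool is the finite-energy property of the Poisson Boolean model: conditioned on the configuration outside a bounded Borel set $\SubsetPlane$, the restriction to $\SubsetPlane$ is still a Poisson process, so one may insert finitely many additional disks inside $\SubsetPlane$ with positive probability. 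To exclude $2 \le N < \infty$, choose a disk $\SubsetPlane$ large enough that it meets all $N$ unbounded components with positive probability; inserting disks inside $\SubsetPlane$ then merges these components into a single one with positive probability, strictly decreasing $N$ and contradicting that $N$ is almost surely constant. To exclude $N = \infty$, I would run the trifurcation count: declare a Poisson point a trifurcation point if a bounded local modification around it splits its unbounded component into exactly three unbounded pieces; finite energy together with $N = \infty$ makes trifurcation points occur with positive density, so the box $[0,n]^2$ contains order $n^2$ of them, yet the tree-like structure they induce forces order $n^2$ disjoint arms to reach the boundary $\partial([0,n]^2)$, which can accommodate only order $n$ of them --- a contradiction. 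Therefore $N = 1$.

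The main obstacle is the uniqueness half: both the combinatorial heart of the trifurcation count and the work needed to transport the Burton--Keane machinery from the lattice to the continuum --- in particular giving a clean definition of trifurcation points for the Boolean model and verifying that the finite-energy insertions achieve exactly the claimed merges and splits. By comparison, the zero-one law and the existence statement are routine consequences of ergodicity and the definition of $\lambda_c(r)$.
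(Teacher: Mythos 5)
The paper does not prove this statement at all: Theorem~\ref{theorem:percolation_occupied_region} is imported verbatim from the cited reference \cite{Meester:1996ue}, so there is no in-paper proof to compare against. Your outline is essentially the standard proof from that reference --- translation invariance of the existence event plus ergodicity for the zero-one law, the definition of $\lambda_c(r)$ for the two regimes, and the Burton--Keane finite-energy/trifurcation argument for uniqueness --- and it is correct in structure, with the genuinely hard continuum adaptations honestly flagged rather than elided. One small remark on the sub-critical step: the Palm/Campbell detour is both unnecessary and slightly misstated, since a ``typical covered point'' under the Palm measure is not interchangeable with the deterministic origin under the stationary measure; the cleaner route is to note that any unbounded occupied component, being a union of closed disks with nonempty interior, contains a point of $\mathbb{Q}^2$, so the existence event is a countable union of the null events $\{\mu(\OccupiedComponent{\lambda}{r}{x}) = \infty\}$ over rational $x$, each of which has probability zero by stationarity and the definition of $\lambda_c(r)$.
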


The {\em vacant region} is defined as $\VacantRegion{\lambda}{r} := \plane \setminus \OccupiedRegion{\lambda}{r}$, i.e., the region that is {\em not} occupied by the disks.
A {\em vacant component} is a connected subset of $\VacantRegion{\lambda}{r}$ that is maximal with respect to set inclusion. 
Let $\VacantComponent{\lambda}{\Radius}{0}$ denote the vacant component that contains the origin. If the origin lies in an occupied component, then $\VacantComponent{\lambda}{r}{0} = \emptyset$. Define {\em critical intensity for vacant percolation} as
$$
\lambda^*_c (r) := \sup \left\{\lambda : \PP\left( \left\{ \mu(\VacantComponent{\lambda}{r}{0} = \infty) > 0 \right\} \right) \right\},
$$
i.e., the largest intensity for which the probability that the origin participates in an unbounded vacant component is non-zero. 
The following results play a central role in continuum percolation theory, establishing an important connection between the occupied and the vacant percolation. 
\begin{theorem}[\cite{Meester:1996ue}] \label{theorem:percolation_equality_of_critical_densities}
The occupied critical intensity and the vacant critical intensity are equal, i.e., $\lambda_c(r) = \lambda_c^*(r)$.
\end{theorem}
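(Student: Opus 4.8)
The plan is to reduce to the unit-radius model and then prove the two inequalities $\lambda_c(r) \le \lambda_c^*(r)$ and $\lambda_c(r) \ge \lambda_c^*(r)$ separately. For the reduction, observe that the homeomorphism $x \mapsto x/r$ of $\plane$ carries the Boolean model with radius $r$ and intensity $\lambda$ to the one with radius $1$ and intensity $\lambda r^2$, and since unboundedness of occupied and vacant components is preserved under this scaling, $\lambda_c(r) = \lambda_c(1)/r^2$ and $\lambda_c^*(r) = \lambda_c^*(1)/r^2$; it therefore suffices to treat $r=1$. A feature exploited throughout is that, because all disks share the same radius, the model has finite range: the occupied/vacant configuration inside two regions separated by Euclidean distance more than $2$ depends on disjoint, hence independent, parts of the Poisson process.

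Both inequalities hinge on a single engine: crossing probabilities of rectangles together with planar duality. For a rectangle $R$, let $\mathrm{LR}_o(R)$ denote the event that $\OccupiedRegion{\lambda}{1}$ contains a connected left-to-right crossing of $R$, and $\mathrm{TB}_v(R)$ the event that $\VacantRegion{\lambda}{1}$ contains a connected bottom-to-top crossing. The key topological fact is the duality identity for a square $S$: almost surely exactly one of $\mathrm{LR}_o(S)$ and $\mathrm{TB}_v(S)$ occurs, so that $\PP(\mathrm{LR}_o(S)) + \PP(\mathrm{TB}_v(S)) = 1$. The bridge from crossings to percolation is a Russo--Seymour--Welsh (RSW) theorem for the Boolean model: if the probability of crossing a square stays bounded away from $0$, then so does the probability of crossing rectangles of any fixed aspect ratio; combined with the finite-range independence above, a block/renormalization (Peierls) argument upgrades uniformly positive rectangle-crossing probabilities into an almost-surely infinite cluster. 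I expect proving RSW in the continuum setting to be the main obstacle, since disks do not respect lattice geometry and the usual conditioning arguments must be adapted to the geometry of the occupied region.

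For $\lambda_c(r) \ge \lambda_c^*(r)$ I would rule out that occupied and vacant percolate simultaneously. Suppose $\lambda > \lambda_c(1)$. By Theorem~\ref{theorem:percolation_occupied_region} the infinite occupied component is unique, and in this supercritical regime the RSW machinery forces the occupied square-crossing probabilities to tend to $1$. Crossing a square annulus in all four ``arms'' then produces, almost surely, occupied circuits around the origin at arbitrarily large radii. Any such circuit separates the origin from infinity within its complement, so $\VacantComponent{\lambda}{1}{0}$ is almost surely bounded; by the stationarity and ergodicity of the Poisson process this precludes any unbounded vacant component, giving $\lambda \ge \lambda_c^*(1)$ and hence $\lambda_c^*(1) \le \lambda_c(1)$.

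For $\lambda_c(r) \le \lambda_c^*(r)$ I would rule out that neither phase percolates. Suppose toward a contradiction that at some intensity $\lambda$ there is neither an unbounded occupied nor an unbounded vacant component. If occupied does not percolate, the RSW/sharp-threshold analysis shows that $\PP(\mathrm{LR}_o(S))$ cannot remain bounded away from $0$ as the side length of $S$ grows; likewise, vacant non-percolation forces $\PP(\mathrm{TB}_v(S)) \to 0$. But the duality identity $\PP(\mathrm{LR}_o(S)) + \PP(\mathrm{TB}_v(S)) = 1$ makes it impossible for both to be small, a contradiction. Thus at every intensity at least one phase percolates, which yields $\lambda_c(1) \le \lambda_c^*(1)$. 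The delicate point, and the reason the two side-length subsequences can be reconciled, is again the RSW input, which converts ``not percolating'' into genuine decay of crossing probabilities rather than mere failure to be bounded below; making this step airtight is where the bulk of the work lies. Combining the two inequalities gives $\lambda_c(1)=\lambda_c^*(1)$, and rescaling completes the proof.
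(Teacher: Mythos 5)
The paper does not prove this statement at all: it is imported verbatim from Meester and Roy \cite{Meester:1996ue} and used as a black box, so there is no internal proof to compare against. Your outline is essentially the standard textbook argument for the two-dimensional fixed-radius Boolean model --- rescale to $r=1$, use rectangle-crossing probabilities, the occupied/vacant duality identity for squares, and an RSW input --- and the first inequality ($\lambda_c^* \le \lambda_c$, via occupied circuits at all scales blocking any unbounded vacant component) is sound modulo the RSW machinery you explicitly defer.

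There is, however, a genuine gap in your argument for $\lambda_c \le \lambda_c^*$. You assert that ``occupied does not percolate'' forces $\PP(\mathrm{LR}_o(S))$ to fail to be bounded away from $0$, and you conclude that at every intensity at least one phase percolates. Both claims are false at criticality: the expected (and, for this model, provable) behavior at $\lambda = \lambda_c$ is that \emph{neither} phase percolates while square-crossing probabilities remain bounded away from both $0$ and $1$ --- exactly as in Bernoulli percolation at $p_c = 1/2$, where there is no infinite cluster yet the square-crossing probability equals $1/2$. So non-percolation alone does not give decay of crossings, and your contradiction does not go through at the single intensity where it is asserted to apply universally. The repair is to argue only on a putative open gap: suppose $\lambda_c^* < \lambda_c$ and pick $\lambda$ strictly between them. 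Then the occupied phase is \emph{strictly} subcritical and the vacant phase is \emph{strictly} beyond $\lambda_c^*$, and in these strict regimes one has exponential decay of the diameters of occupied and vacant components (the continuum analogue of Theorem~\ref{theorem:percolation_exponential_decay}), which genuinely drives both $\PP(\mathrm{LR}_o(S))$ and $\PP(\mathrm{TB}_v(S))$ to $0$ and contradicts $\PP(\mathrm{LR}_o(S)) + \PP(\mathrm{TB}_v(S)) = 1$. This yields $\lambda_c \le \lambda_c^*$ without any claim about what happens at the critical point itself. With that substitution, and with the deferred RSW and duality lemmas supplied (which is where most of the real work in \cite{Meester:1996ue} lives), your outline matches the known proof.
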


\begin{theorem}[\cite{Meester:1996ue}] \label{theorem:percolation_vacant_region}
In the super-critical regime, the probability that there exists an unbounded vacant component is zero. In the sub-critical regime, there is a unique unbounded vacant component with probability one.
\end{theorem}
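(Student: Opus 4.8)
The plan is to organize the proof around the vacant critical intensity $\lambda_c^*(r)$ together with the identity $\lambda_c(r)=\lambda_c^*(r)$ supplied by Theorem~\ref{theorem:percolation_equality_of_critical_densities}, reducing both regime statements to the behavior of $\theta^*(\lambda) := \PP(\{\mu(\VacantComponent{\lambda}{r}{0}) = \infty\})$, and then to prove uniqueness by a Burton--Keane argument dual to the one behind Theorem~\ref{theorem:percolation_occupied_region}. First I would record that $\theta^*$ is non-increasing in $\lambda$: superposing an independent $\PoissonProcess{\lambda'-\lambda}$ onto $\PoissonProcess{\lambda}$ couples the two processes so that $\VacantRegion{\lambda'}{r}\subseteq\VacantRegion{\lambda}{r}$ for $\lambda<\lambda'$, hence the set $\{\lambda:\theta^*(\lambda)>0\}$ is a down-set whose supremum is exactly $\lambda_c^*(r)$. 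The key ingredient is a zero--one reduction: an unbounded vacant component exists with positive probability if and only if $\theta^*(\lambda)>0$. The forward implication is immediate; for the converse I would use the Poisson void probability. Letting $A_n$ be the event that some unbounded vacant component meets $\ball{0}{n}$, if an unbounded vacant component exists almost surely (the event is translation invariant, hence has probability $0$ or $1$ by ergodicity of $\PoissonProcess{\lambda}$), then $\PP(A_n)>0$ for some $n$. Deleting every point inside $\ball{0}{n+r}$ only enlarges the vacant region, so $A_n$ is contained in the analogous event $G$ for the thinned process, and $G$ depends only on points outside $\ball{0}{n+r}$; intersecting $G$ with the independent, positive-probability void event $\{\PoissonProcess{\lambda}(\ball{0}{n+r})=0\}$ makes $\ball{0}{n}$ entirely vacant and connected to that unbounded component, forcing the origin into an unbounded vacant component, so $\theta^*(\lambda)>0$.

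With this lemma the two regime statements follow from the definitions. In the super-critical regime $\lambda>\lambda_c(r)=\lambda_c^*(r)$, so $\theta^*(\lambda)=0$, and the lemma gives $\PP(\{\exists\text{ unbounded vacant component}\})=0$, the first claim. In the sub-critical regime $\lambda<\lambda_c(r)=\lambda_c^*(r)$; monotonicity together with the definition of $\lambda_c^*(r)$ as a supremum yields $\theta^*(\lambda)>0$, so an unbounded vacant component exists with positive probability, and by ergodicity (again a translation-invariant event) with probability one. This establishes existence; it remains to prove uniqueness in the sub-critical regime.

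For uniqueness I would run the continuum Burton--Keane scheme, mirroring the proof of Theorem~\ref{theorem:percolation_occupied_region} with the roles of insertion and deletion exchanged. Let $N$ denote the number of unbounded vacant components; it is translation invariant, so ergodicity forces $N=k$ almost surely for a deterministic $k\in\{0,1,2,\dots,\infty\}$. To exclude $2\le k<\infty$ I would invoke the deletion tolerance of the Boolean model: with positive probability $\ball{0}{R+r}$ contains no point of $\PoissonProcess{\lambda}$, which renders $\ball{0}{R}$ fully vacant and merges into one every unbounded vacant component meeting $\ball{0}{R}$; taking $R$ large enough that at least two unbounded components meet $\ball{0}{R}$ with positive probability, this modification strictly lowers $N$ on a positive-probability event, contradicting the almost-sure constancy of $N$. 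To exclude $k=\infty$ I would adapt the trifurcation-counting argument: a local clearing can create an ``encounter region'' joining three distinct unbounded vacant branches, stationarity endows such regions with a positive spatial density, while a topological bound caps their number inside a box by the order of the number of disjoint vacant branches crossing the box boundary, which is incompatible with a positive density as the box grows. Hence $k\le 1$, and combined with the existence just shown, $k=1$ almost surely in the sub-critical regime.

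The main obstacle is the uniqueness step, and within it the exclusion of $N=\infty$: the trifurcation argument requires careful continuum bookkeeping to define encounter regions measurably, to verify that the positive-probability local clearings genuinely split and merge components as claimed, and to make the combinatorial boundary-crossing bound rigorous. By contrast, the monotonicity, the void-probability reduction lemma, and the resulting dichotomy between the two regimes are comparatively routine once Theorem~\ref{theorem:percolation_equality_of_critical_densities} is granted.
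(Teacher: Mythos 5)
The paper does not actually prove this statement: it is imported verbatim from~\cite{Meester:1996ue} (alongside Theorems~\ref{theorem:percolation_occupied_region}--\ref{theorem:percolation_exponential_decay}) and used as a black box, so there is no in-paper proof to compare against. Judged on its own, your outline reproduces the standard argument from that reference and is structurally sound: the superposition coupling gives monotonicity of $\theta^*$, the void-probability/ergodicity lemma correctly converts ``an unbounded vacant component exists with positive probability'' into $\theta^*(\lambda)>0$ (clearing $\ball{0}{n+r}$, noting that the resulting event is measurable with respect to the exterior configuration and hence independent of the void event, is exactly the right move), and together with Theorem~\ref{theorem:percolation_equality_of_critical_densities} this yields both regime dichotomies. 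Two places need more care than your sketch gives them. First, in ruling out $2\le k<\infty$ you cannot simply intersect ``at least two unbounded vacant components meet $\ball{0}{R}$'' with the void event, since the former depends on the interior points as well; one must condition on the exterior configuration, observe that for a positive-measure set of exteriors some interior realization produces $k$ unbounded components all meeting $\ball{0}{R}$, and then check that emptying the interior both merges those components through the now-vacant ball and creates no new unbounded components (the latter holds because the added vacant set is bounded and, by local finiteness of the Poisson process, only finitely many vacant components of the exterior process meet it). Second, the exclusion of $k=\infty$ via trifurcations is, as you yourself flag, the real technical content of the uniqueness claim and remains only a plan here. Neither point is a wrong turn --- both are carried out in~\cite{Meester:1996ue} along the lines you describe --- but as written the uniqueness half is an outline rather than a proof, which is acceptable given that the paper itself cites the result without proof.
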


Given a set $\SubsetPlane \subset \plane$, let $\mathrm{diam}(\SubsetPlane)$ denote the diameter of $\SubsetPlane$ defined as $\mathrm{diam}(\SubsetPlane) := \sup_{\ElementPlane_1, \ElementPlane_2 \in \SubsetPlane } \Vert \ElementPlane_1 - \ElementPlane_2\Vert $.
\begin{theorem}[\cite{Meester:1996ue}] \label{theorem:percolation_exponential_decay}
Let $\OccupiedComponent{\lambda}{\Radius}{\ElementPlane}$ and $\VacantComponent{\lambda}{\Radius}{\ElementPlane}$ denote occupied and vacant components that contains $\ElementPlane \in \plane$. Then, there exists constants $c_1,c_2,c_1',c_2' > 0$ such that in the sub-critical regime
$
\PP(\{\mathrm{diam}(\OccupiedComponent{\lambda}{\Radius}{\ElementPlane}) \ge \alpha\}) \le c_1 \, e^{-  c_2 \alpha},
$
and in the super-critical regime
$
\PP(\{\mathrm{diam}(\VacantComponent{\lambda}{\Radius}{\ElementPlane}) \ge \alpha\}) \le c_1' \, e^{-c_2' \alpha}.
$
\end{theorem}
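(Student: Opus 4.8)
The plan is to prove the two assertions separately, since each concerns the exponential decay of a cluster in precisely the phase where \emph{that} cluster type fails to percolate: occupied components in the sub-critical regime (where, by Theorem~\ref{theorem:percolation_occupied_region}, no unbounded occupied component exists) and vacant components in the super-critical regime (where, by Theorem~\ref{theorem:percolation_vacant_region}, no unbounded vacant component exists). In both cases the unifying strategy is a static coarse-graining (renormalization) that maps the continuum Boolean model to a dependent lattice model of finite range of dependence, which is then stochastically dominated by an independent Bernoulli field living in the appropriate phase; exponential decay of cluster diameters then transfers from the lattice, where it is already available as Theorem~\ref{theorem:sub_exponential_closed_cluster} together with its well-known sub-critical counterpart.

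For the occupied sub-critical bound I would first record the geometric fact that $\mathrm{diam}(\OccupiedComponent{\lambda}{\Radius}{\ElementPlane}) \ge \alpha$ forces a chain of pairwise-overlapping disks, each of radius $\Radius$, joining $\ElementPlane$ to a point at Euclidean distance $\alpha$; such a chain must visit at least order $\alpha/\Radius$ consecutive cells of a square grid whose side is comparable to $\Radius$. Tiling $\plane$ by such cells and declaring a cell \emph{bad} when it carries part of a long occupied crossing yields a site field with finite range of dependence. The crux is to choose the cell size (the block scale) so that, for \emph{every} $\lambda < \lambda_c(\Radius)$, the induced bad-cell field is dominated by a sub-critical Bernoulli field; then an occupied component of diameter at least $\alpha$ produces a bad lattice path of length of order $\alpha$, whose probability is exponentially small in $\alpha$ by the sub-critical lattice estimate.

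For the vacant super-critical bound I would use the planar duality between vacant components and encircling occupied structure, noting that Theorem~\ref{theorem:percolation_equality_of_critical_densities} places us strictly inside the occupied super-critical phase. Coarse-graining into blocks of fixed side $L$, I declare a block \emph{good} when suitable occupied box-crossing and connection events hold; for $L$ large the good-block field dominates a super-critical Bernoulli site field with density near one. A vacant component $\VacantComponent{\lambda}{\Radius}{\ElementPlane}$ of diameter at least $\alpha$ cannot be encircled by an occupied circuit at any intermediate scale, which forces a connected cluster of \emph{bad} blocks of size of order $\alpha$; since the bad blocks form a sub-critical field, this event is exponentially improbable by the same lattice estimate. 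The RSW/box-crossing inputs needed to certify that blocks are good with probability close to one in the super-critical regime are the technical heart of this step.

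The main obstacle, common to both halves, is upgrading exponential decay from the easy regime of very small (respectively very large) intensity---where a direct branching-process domination or an independence argument suffices---to the \emph{entire} sub-critical (respectively super-critical) phase up to the sharp threshold $\lambda_c(\Radius)$. This is exactly the continuum analogue of the sharpness of the phase transition, and it is what guarantees that a finite-size renormalization criterion can be met for all $\lambda$ on the correct side of $\lambda_c(\Radius)$, rather than only deep inside the phase. Establishing this sharpness (the Menshikov / Aizenman--Barsky type machinery for the Boolean model) is the deep ingredient; I would invoke it as developed in~\cite{Meester:1996ue}, on which the statement of this theorem rests.
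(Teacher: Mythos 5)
The paper does not actually prove this theorem: it is imported verbatim from \cite{Meester:1996ue} as background, so there is no internal proof to compare yours against. Judged on its own terms, your sketch follows the standard coarse-graining route and correctly identifies where the real work lies, but it is an outline rather than a proof: the two ingredients you explicitly defer --- sharpness of the phase transition for the Boolean model (needed so that a finite-size renormalization criterion can be verified for \emph{every} $\lambda<\lambda_c(\Radius)$ rather than only deep inside the phase) and the box-crossing estimates that certify good blocks throughout the super-critical regime --- are precisely the substance of the proof in \cite{Meester:1996ue}. Reducing the quoted theorem to the hard theorems of the same book it is quoted from is a fair reading of the literature, but it does not constitute an independent argument.

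Two smaller technical remarks. For the sub-critical occupied bound, the cleanest path in \cite{Meester:1996ue} is not block renormalization but the equality of the percolation threshold with the threshold for finite expected cluster size (the Menshikov-type result you allude to), after which exponential decay of the diameter follows from a subadditivity/BK-type argument on crossings of concentric annuli; your chain-of-overlapping-disks observation is the correct geometric starting point for either route, and note that the resulting constants $c_1,c_2$ necessarily degenerate as $\lambda\uparrow\lambda_c(\Radius)$. For the super-critical vacant bound, your duality argument is valid only in the plane --- which is the setting here --- and it implicitly invokes Theorem~\ref{theorem:percolation_equality_of_critical_densities} to place the occupied process strictly above its own threshold; it would strengthen the write-up to state explicitly the step that converts ``large vacant diameter'' into a lattice event, namely that an occupied circuit in the annulus $\ball{\ElementPlane}{2s}\setminus\ball{\ElementPlane}{s}$ confines $\VacantComponent{\lambda}{\Radius}{\ElementPlane}$ to $\ball{\ElementPlane}{2s}$, so that $\mathrm{diam}(\VacantComponent{\lambda}{\Radius}{\ElementPlane})\ge\alpha$ forbids such circuits at order $\alpha$ disjoint scales (or, in the single-scale version, forces a $*$-connected path of bad blocks of length of order $\alpha$).
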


Continuum percolation is known to depend on the intensity $\lambda$ of the Poisson process and the radius $r$ only through the expected number of points in a disk of radius $r$, i.e., $d:= \lambda \, \pi\, r^2$. Strictly speaking, this number is the expected degree of a vertex in the graph formed by the Gilbert disk model.
Define the {\em critical degree} as $\criticaldegree(r) :=  \lambda_c (r) \,\pi\, r^2$.
The exact value of the critical degree (hence, the percolation threshold) is not known. 
However, it was shown using rigorous analysis that the critical degree is between $2.184$ and $10.558$~\cite{Bollobas:2006ur}. Using Monte-Carlo integration techniques that offer rigorous bounds on the confidence interval, it was shown that the critical degree is between $4.508$ and $4.515$ with 99.99\% confidence~\cite{Balister:2005uj}. These results agree with the earlier Monte-Carlo simulation studies which suggest that the critical degree lies between $4.51218$ and $4.51228$~\cite{Quintanilla:2000uc}.

Another model that is of particular interest for the purposes of this paper is one in which a square of side length $l$, instead of a disk, is placed on each point of the Poisson process. Due to the obvious analogy, this model will be called the {\em Gilbert square model}, which has also been studied extensively in the literature.
In particular, Theorems~\ref{theorem:percolation_occupied_region}-\ref{theorem:percolation_exponential_decay} were shown to be valid for this model~\cite{Bollobas:2006ur}, 
and its critical degree was shown to be between $4.392$ and $4.398$ with 99.99\% confidence~\cite{Balister:2005uj}.

\subsection{The Primary Shadow Model}

In this section, we develop a random geometric model (much like the Gilbert model) in which percolation of a certain kind implies the non-existence of a reaching planner. 

Recall that $\ReachableStates{\speedratio}{z}{t}$ denotes set of all states that the dynamical system described by Equation~(\ref{eqn:system}) can reach before or on time $t$, starting from the state $z$. For the dynamics described by Equation~(\ref{eqn:sing_bird}), this set can be calculated easily. 
This cone-shaped region was depicted in Figure~\ref{figure:single_integrator_bird_reachable_set}.

Consider a forest composed of a single tree with radius $\Radius \in \reals_{>0}$ located at $\ElementPlane \in \plane$. Let $\LeftPrimaryShadow{\speed}{\ElementPlane}{\Radius}$ denote the set of all states starting from which the bird can not escape collision with this tree. The set $\LeftPrimaryShadow{\speed}{\ElementPlane}{\Radius}$ is illustrated in Figure~\ref{figure:left_shadow}.
Let $\RightPrimaryShadow{\speed}{\ElementPlane}{\Radius}$ denote the largest set of states that the bird can not reach if it starts at some state that is outside $\RightPrimaryShadow{\speed}{\ElementPlane}{\Radius}$. The set $\RightPrimaryShadow{\speed}{\ElementPlane}{\Radius}$, illustrated in Figure~\ref{figure:right_shadow}, is a mirror image of $\LeftPrimaryShadow{\speed}{\ElementPlane}{\Radius}$.
The sets $\LeftPrimaryShadow{\speed}{\ElementPlane}{\Radius}$ and $\RightPrimaryShadow{\speed}{\ElementPlane}{\Radius}$ are called the {\em left primary shadow region} and the {\em right primary shadow region}, respectively, of the tree with radius $\Radius$ located at $\ElementPlane$. Define also the set $\PrimaryShadow{\speed}{\ElementPlane}{\Radius} := \LeftPrimaryShadow{\speed}{\ElementPlane}{\Radius} \cup \RightPrimaryShadow{\speed}{\ElementPlane}{\Radius}$, which is called its {\em primary shadow region}. 

\begin{figure}[b]
\centering
\includegraphics[width = 0.6\linewidth]{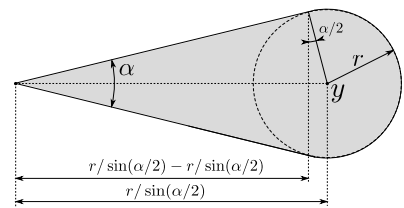}
\caption{The left primary shadow region $\LeftPrimaryShadow{\speedratio}{\ElementPlane}{\Radius}$ is shaded in grey. The figure also illustrates the length of the region in terms of $r$ and $\alpha$.}
\label{figure:left_shadow}
\end{figure}

\begin{figure}[b]
\centering
\includegraphics[width = 0.6\linewidth]{./right_shadow.pdf}
\caption{The right primary shadow region $\RightPrimaryShadow{\speedratio}{z}{r}$ is shaded in grey. }
\label{figure:right_shadow}
\end{figure}

Consider a particular realization of the Poisson forest-generating process. Let $\{y_i\}_{i \in \naturals}$ denote the locations of the trees in this realization. Recall that each tree has radius $r$. The {\em occupied primary shadow region} is defined as $\bigcup_{i \in \naturals} \PrimaryShadow{\speed}{y_i}{r}$ and the {\em vacant primary shadow region} is defined as $\plane \setminus \bigcup_{i \in \naturals} \PrimaryShadow{\speed}{y_i}{r}$. The occupied/vacant primary shadow region is said to percolate if it contains an unbounded connected component. In the sequel, we refer to this percolation model as the {\em primary shadow model} with tree density $\rho$, tree radius $\Radius$, and speed $\speed$. The following lemma establishes an important connection between vacant percolation in the primary shadow model and the existence of infinite collision-free trajectories for the single-integrator bird flying in a Poisson forest.

\begin{lemma}
Suppose that the dynamics governing the bird is described by Equation~\eqref{eqn:sing_bird} and that the forest-generating process is Poisson. Then, a necessary condition for the existence an infinite collision-free trajectory under these assumptions is that the vacant primary shadow region percolates. 
\end{lemma}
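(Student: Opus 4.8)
The plan is to prove the implication directly: assuming an infinite collision-free trajectory $\Trajectory \in \SetTrajectories{\speed}$ exists for the given realization of the Poisson forest, I would exhibit an unbounded connected component of the vacant primary shadow region $\plane \setminus \bigcup_i \PrimaryShadow{\speed}{y_i}{\Radius}$ that contains a tail of $\Trajectory$. Since $\PrimaryShadow{\speed}{y_i}{\Radius} = \LeftPrimaryShadow{\speed}{y_i}{\Radius} \cup \RightPrimaryShadow{\speed}{y_i}{\Radius}$, the whole argument reduces to showing that, apart from a bounded initial segment, the curve traced by $\Trajectory$ never enters any left or right primary shadow.

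First I would dispose of the left shadows. By definition, $\LeftPrimaryShadow{\speed}{y_i}{\Radius}$ is exactly the set of states from which \emph{every} dynamically feasible continuation eventually collides with the tree at $y_i$. Hence if $\Trajectory(t) \in \LeftPrimaryShadow{\speed}{y_i}{\Radius}$ for some $t \ge 0$, then the genuine (feasible) continuation of $\Trajectory$ collides with tree $i$, contradicting the hypothesis that $\Trajectory$ is collision-free. Therefore $\Trajectory(t) \notin \bigcup_i \LeftPrimaryShadow{\speed}{y_i}{\Radius}$ for all $t \ge 0$, with no exceptions.

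Next come the right shadows, which I expect to be the delicate step. By definition $\RightPrimaryShadow{\speed}{y_i}{\Radius}$ is the maximal set of states that cannot be reached, without colliding with the tree at $y_i$, starting from any state outside it. Consequently, if $\Trajectory(0) \notin \RightPrimaryShadow{\speed}{y_i}{\Radius}$ but $\Trajectory(t_0) \in \RightPrimaryShadow{\speed}{y_i}{\Radius}$ for some $t_0 > 0$, then on $[0,t_0]$ the trajectory must have passed through tree $i$, again contradicting collision-freeness. Thus the only right shadows $\Trajectory$ can \emph{ever} occupy are those already containing its starting point $\Trajectory(0)$, and once it leaves such a region it can never re-enter it, by the identical argument. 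Because each $\RightPrimaryShadow{\speed}{y_i}{\Radius}$ lies within a fixed bounded distance of $y_i$ and the Poisson forest is locally finite almost surely, only finitely many right shadows contain $\Trajectory(0)$; since each is bounded while $\Trajectory$ is unbounded, the trajectory exits each of them in finite time. Taking $t_1$ to be the last such exit time, I obtain $\Trajectory(t) \notin \bigcup_i \RightPrimaryShadow{\speed}{y_i}{\Radius}$ for all $t \ge t_1$.

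Combining the two observations, for all $t \ge t_1$ the point $\Trajectory(t)$ lies in the vacant primary shadow region. The tail $\Trajectory([t_1,\infty))$ is the continuous image of a connected set, hence connected, and it is unbounded because $\lim_{t\to\infty}\Vert \Trajectory(t)\Vert_2 = \infty$ for an infinite trajectory. A connected, unbounded subset of the vacant region forces the ambient connected component containing it to be unbounded as well, which is precisely what it means for the vacant primary shadow region to percolate. I anticipate the main obstacle to be exactly the right-shadow step: one must carefully invoke the reachability-based definition of $\RightPrimaryShadow{\speed}{y_i}{\Radius}$ both to exclude re-entry and to argue that only finitely many right shadows are relevant near the start, so that the discarded initial portion of $\Trajectory$ is genuinely bounded and the surviving tail still escapes to infinity.
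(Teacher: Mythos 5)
Your proof is correct and rests on the same two structural facts the paper's proof uses --- a state in a left primary shadow dooms every feasible continuation, and a right primary shadow cannot be entered collision-free from outside --- but you run the implication directly where the paper argues by contraposition. The paper assumes the vacant primary shadow region has no unbounded component and performs a three-way case analysis on where the bird starts (left shadow, vacant region, right shadow), deriving an eventual collision in each case; you instead take the infinite collision-free trajectory as given and show that its tail $y([t_1,\infty))$ lies entirely in the vacant region, so the connected component containing that tail is unbounded. Your route buys two things: it exhibits the unbounded vacant component explicitly rather than merely asserting its existence, and it confronts a point the paper's case (iii) glosses over --- the trajectory may initially sit in several overlapping right shadows, and one needs the almost-sure local finiteness of the Poisson process, the boundedness of each right shadow, and the divergence of the trajectory to conclude that only finitely many are relevant and that all are permanently exited by some finite time $t_1$. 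The cost is precisely that extra bookkeeping, which the paper's contrapositive avoids by only ever tracking one region at a time. Both arguments share the same implicit reliance on the reachability-based definitions of the shadow regions (in particular that the right shadow is unreachable \emph{without collision} from outside), so neither is more rigorous on that front; your version is, if anything, slightly more complete.
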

\begin{proof}
Suppose that the vacant primary shadow region has no unbounded component. 
Distinguish between three cases: the bird starts its flight from (i) a left primary shadow region, (ii) a vacant primary shadow region, or (iii) a right primary shadow region. In all three cases, we show that any trajectory of the bird collides with a tree eventually. 

First, if the bird starts inside the left primary shadow region of some tree, then it must collide with the same tree eventually, merely by the definition of a left primary shadow region.

Second, if the bird starts at a vacant primary shadow region, then it must eventually enter either a left primary shadow region or a right primary shadow region, since the vacant primary shadow region has no unbounded component. However, it can not enter a right primary shadow region, since, by definition, right primary shadow regions are those that the trajectories of the bird can not reach from outside. So, it must eventually enter the left primary shadow region of some tree. But, then the bird must eventually collide with the same tree.

Finally, third, if the bird starts at a right primary shadow region, then it must eventually exit this region since the right primary shadow of any tree has bounded length. After this exit, by definition, the bird can not enter a new right primary shadow region. Hence, it must enter either into a vacant primary shadow region or it must enter a right primary shadow region. However, we have shown above that in either case, the bird must eventually collide with some tree.
\end{proof}

\subsection{The Non-existence of Infinite Collision-free Trajectories}

The lemma below provides a joint bound on $\rho$, $\Radius$, and $\speed$ such that when the model is operating above this bound vacant percolation fails to occur, almost surely, in the primary shadow model with tree density $\rho$, tree radius $\Radius$, and speed $\speed$.

\begin{lemma} \label{lemma:non_reaching_percolation_threshold}
Vacant percolation fails to occur, almost surely, in the primary shadow model with tree density $\rho$, tree radius $\Radius$, and speed $\speed$, whenever
$$
\frac{ \rho \, r^2}{\sin \alpha}   > \frac{\criticaldegree}{2},
$$
where $\alpha = 2 \, \tan^{-1} (1/\speed)$ and $\criticaldegree$ is the critical degree in the Gilbert square model with squares of unit side length.
\end{lemma}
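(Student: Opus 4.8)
The plan is to realize a super-critical Gilbert square model \emph{inside} the occupied primary shadow region and then invoke the occupied/vacant duality. The guiding principle is monotonicity: if the occupied shadow region contains the occupied region of a super-critical square model, then the vacant shadow region is contained in the vacant region of that model, and the latter has no unbounded component almost surely by Theorem~\ref{theorem:percolation_vacant_region} (which, as noted in Section~\ref{section:poisson_regime:percolation_theory}, holds verbatim for the Gilbert square model).

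First I would fix the geometry. To each tree located at $y_i$ I associate a square $Q_i$ contained in its primary shadow region $\PrimaryShadow{\speed}{y_i}{\Radius}$, of side length $s$ chosen so that $s^2 = 2\Radius^2/\sin\alpha = \Radius^2/\big(\sin(\alpha/2)\cos(\alpha/2)\big)$. Because the displacement of $Q_i$ relative to $y_i$ depends only on $\Radius$ and $\alpha$ and not on the realization, the centers form a rigid translate of the underlying Poisson process, and hence are themselves a Poisson process of the same intensity $\rho$. Consequently $\bigcup_i Q_i$ is an instance of the Gilbert square model with squares of side $s$ and intensity $\rho$. Since each $Q_i \subseteq \PrimaryShadow{\speed}{y_i}{\Radius}$, the union $\bigcup_i Q_i$ is contained in the occupied primary shadow region, so the vacant primary shadow region is contained in the vacant region of the square model.

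Next I would translate the density hypothesis into super-criticality of the square model. By the scaling of the Gilbert square model, its critical intensity for squares of side $s$ satisfies $\lambda_c(s)\,s^2 = \criticaldegree$, where $\criticaldegree$ is the critical degree for unit squares; hence the model is super-critical exactly when $\rho\,s^2 > \criticaldegree$. Substituting $s^2 = 2\Radius^2/\sin\alpha$ turns this into $\rho\,\Radius^2/\sin\alpha > \criticaldegree/2$, which is precisely the hypothesis. In the super-critical regime Theorem~\ref{theorem:percolation_vacant_region} gives that the vacant region of the square model has no unbounded component with probability one, and by the inclusion established above the vacant primary shadow region has none either, which is the claim.

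The main obstacle is the geometric step: verifying that a square of side exactly $s = \Radius\sqrt{2/\sin\alpha}$ can be inscribed in the primary shadow region, and pinning down the constant, in particular the factor of two, which I expect to arise from combining the left region $\LeftPrimaryShadow{\speed}{y_i}{\Radius}$ and the right region $\RightPrimaryShadow{\speed}{y_i}{\Radius}$ whose lengths are read off from Figures~\ref{figure:left_shadow} and~\ref{figure:right_shadow}. One must also confirm that the resulting squares inherit the independence and overlap structure of the Gilbert square model, which follows from the complete independence of the Poisson process, and record the scaling identity $\lambda_c(s)\,s^2 = \criticaldegree$ for the square model. The percolation-theoretic steps are then immediate from the cited theorems.
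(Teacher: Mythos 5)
Your overall strategy---embed a super-critical Gilbert square model inside the occupied primary shadow region, then use containment of the vacant regions together with Theorem~\ref{theorem:percolation_vacant_region}---is exactly the paper's strategy, and your bookkeeping of the critical intensity is consistent with the target inequality. However, the step you yourself flag as ``the main obstacle'' is a genuine gap, and it cannot be repaired as stated: a square of side $s=\Radius\sqrt{2/\sin\alpha}$ does \emph{not} fit inside $\PrimaryShadow{\speed}{y}{\Radius}$ once $\speed$ is large. The reason is that the primary shadow region of a tree at $y$ is confined to the lateral strip of width $2\Radius$ about the tree's axis: from any state whose lateral offset from $y$ exceeds $\Radius$, the single-integrator bird can simply fly straight ($u\equiv 0$) and never meet the tree, so such states are in neither the left nor the right primary shadow. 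Meanwhile your prescribed side length satisfies $s>2\Radius$ as soon as $\sin\alpha<1/2$, i.e.\ for all $\speed>2+\sqrt{3}$, and $s\to\infty$ as $\speed\to\infty$. Since the lemma must hold precisely in this high-speed regime, the inscribed-square construction breaks down where it is most needed.

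The paper circumvents this by inscribing an axis-aligned \emph{rectangle} of side lengths $\Radius/\sin(\alpha/2)$ (longitudinal) and $\Radius/\cos(\alpha/2)$ (lateral) --- long and thin in the same way the shadow region is --- and then applying an \emph{anisotropic} linear map $(z_1,z_2)\mapsto\bigl(z_1\sin(\alpha/2)/\Radius,\ z_2\cos(\alpha/2)/\Radius\bigr)$, which by Lemma~\ref{lemma:scale_poisson_process} sends the Poisson process of intensity $\rho$ to one of intensity $\rho\Radius^2/(\sin(\alpha/2)\cos(\alpha/2))=2\rho\Radius^2/\sin\alpha$ and turns the rectangles into unit squares. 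Vacant percolation (an unbounded vacant component) is preserved under this linear change of coordinates, so the comparison with the unit-square Gilbert model at intensity $2\rho\Radius^2/\sin\alpha>\criticaldegree$ goes through. If you replace your inscribed square by this rectangle and add the anisotropic scaling step, the rest of your argument (containment of vacant regions, the scaling identity for the critical degree, and the appeal to Theorem~\ref{theorem:percolation_vacant_region}) is sound.
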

First, note the following intermediate result. 
\begin{lemma} \label{lemma:scale_poisson_process}
Let $\poissonforest{\rho}$ be a homogeneous Poisson process on $\plane$ with intensity $\rho$. Let $l_1,l_2 > 0$, and define the function $f : \plane \to \plane$ as $f((z_1,z_2)) = (l_1\,z_1, l_2\,z_2)$. Then, $f(\poissonforest{\rho})$ is a homogeneous Poisson process with rate $\rho /( l_1 \, l_2)$.
\end{lemma}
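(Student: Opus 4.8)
The plan is to verify directly the two defining properties of a homogeneous Poisson process, as stated in Section~\ref{section:model:sing_bird_poisson_forest}, for the image process $f(\poissonforest{\rho})$ with the claimed intensity $\rho/(l_1 l_2)$. The key observation is that, since $l_1, l_2 > 0$, the map $f$ is a linear bijection of $\plane$ onto itself with continuous inverse $f^{-1}((w_1,w_2)) = (w_1/l_1, w_2/l_2)$; in particular $f$ is a homeomorphism, so it carries Borel sets to Borel sets bijectively and bounded sets to bounded sets. Consequently, for every $A \in \BorelSets{\plane}$ the counting identity
$$
f(\poissonforest{\rho})(A) = \poissonforest{\rho}(f^{-1}(A))
$$
holds: a point of the image process lies in $A$ exactly when the corresponding point of $\poissonforest{\rho}$ lies in $f^{-1}(A)$. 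All subsequent steps reduce statements about $f(\poissonforest{\rho})$ to statements about $\poissonforest{\rho}$ through this identity.

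First I would establish the independence property. Given pairwise disjoint Borel sets $A_1, \dots, A_n$, their preimages $f^{-1}(A_1), \dots, f^{-1}(A_n)$ are again pairwise disjoint Borel sets, because $f$ is a bijection. By property (i) in the definition of $\poissonforest{\rho}$, the random variables $\poissonforest{\rho}(f^{-1}(A_i))$ are mutually independent, and hence so are the random variables $f(\poissonforest{\rho})(A_i)$ by the counting identity above.

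Next I would verify the marginal Poisson law. Fix a bounded $A \in \BorelSets{\plane}$; then $f^{-1}(A)$ is a bounded Borel set, so property (ii) for $\poissonforest{\rho}$ gives that $\poissonforest{\rho}(f^{-1}(A))$ is a Poisson random variable with mean $\rho\,\mu(f^{-1}(A))$. It remains to compute this mean. Since $f^{-1}$ scales the two coordinate axes by $1/l_1$ and $1/l_2$, the standard scaling behavior of Lebesgue measure under a diagonal linear map (equivalently, the change-of-variables formula with constant Jacobian determinant $1/(l_1 l_2)$) yields $\mu(f^{-1}(A)) = \mu(A)/(l_1 l_2)$. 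Substituting, $f(\poissonforest{\rho})(A)$ is Poisson with mean $\big(\rho/(l_1 l_2)\big)\,\mu(A)$, which is exactly property (ii) for a Poisson process of intensity $\rho/(l_1 l_2)$.

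The argument contains no serious obstacle; the only point requiring genuine care is the measure computation $\mu(f^{-1}(A)) = \mu(A)/(l_1 l_2)$, which I would either invoke from the elementary scaling property of Lebesgue measure or justify first on axis-aligned rectangles and then extend to all Borel sets by a standard monotone-class argument. A minor bookkeeping step, which I would state but not belabor, is that $f(\poissonforest{\rho})$ is a bona fide point process, i.e., a measurable map into the space of boundedly finite counting measures; this follows because $f$ is a homeomorphism, so the bounded finiteness of $\poissonforest{\rho}$ is transferred to $f(\poissonforest{\rho})$.
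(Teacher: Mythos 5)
Your proof is correct, but it takes a more elementary route than the paper. The paper disposes of this lemma in one line by invoking the mapping theorem for Poisson processes (Kingman, p.~18), observing only that $f$ is a continuous bijection with no atoms, so that the image process is Poisson with mean measure equal to the pushforward $\rho\,\mu \circ f^{-1}$, which for a diagonal linear map is $\bigl(\rho/(l_1 l_2)\bigr)\,\mu$. You instead unroll what is essentially the proof of that mapping theorem in this special case: you verify directly the two defining properties from Section~\ref{section:model:sing_bird_poisson_forest} --- mutual independence of counts on disjoint Borel sets (via the counting identity $f(\poissonforest{\rho})(A) = \poissonforest{\rho}(f^{-1}(A))$ and the fact that preimages of disjoint sets are disjoint) and the Poisson marginal with mean $\rho\,\mu(f^{-1}(A)) = \bigl(\rho/(l_1 l_2)\bigr)\mu(A)$ from the Jacobian of $f^{-1}$. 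Every step is sound, including the side remarks on measurability and bounded finiteness. What the paper's approach buys is brevity and generality (the mapping theorem covers arbitrary measurable maps without atoms, not just diagonal scalings); what your approach buys is self-containedness --- a reader needs no external theorem, only the definition of a homogeneous Poisson process and the scaling of Lebesgue measure under a linear map.
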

\begin{proof}
The proof follows easily from the mapping theorem for Poisson processes noting that $f$ in this case is a continuous mapping with no atoms (see pp. 18 in~\cite{Kingman:1993tg}).
\end{proof}
\begin{proof}{\em (Proof of Lemma~\ref{lemma:non_reaching_percolation_threshold})}
Recall that $\speedratio$ is the speed in the dynamical system given by Equation~(\ref{eqn:system}).
Recall also that $\alpha := 2\, \tan^{-1}(1/\speed)$.
Given any $z \in \poissonforest{\rho}(\omega)$, notice that a rectangle with side lengths $r/\sin (\alpha/2)$ and $r/\cos (\alpha/2)$ fits into the primary shadow region $\PrimaryShadow{\speedratio}{\{z\}}{r}$ (see Figure~\ref{figure:rectangle_in_shadow_region}).

\begin{figure}[b]
\centering
\includegraphics[width=\linewidth]{./rectangle_in_primary_shadow_region.pdf}
\caption{A rectangle that can fit inside the primary shadow region $\PrimaryShadow{\speedratio}{\ElementPlane}{\Radius}$.}
\label{figure:rectangle_in_shadow_region}
\end{figure}

Clearly, the vacant percolation occurs in the primary shadow model with parameters $\rho$, $r$, and $\nu$ only if the vacant percolation occurs in the model where rectangles of side lengths $r/\sin (\alpha/2)$ and $r/\cos(\alpha/2)$ are attached to the each point of the Poisson process with intensity $\rho$. 

Consider a scaled process in which each point of the Poisson process $z = (z_1, z_2) \in \poissonforest{\rho}$ is mapped to $\big(\frac{z_1}{r/\sin(\alpha/2)}, \frac{z_2}{r/\cos (\alpha/2)}\big)$. By Lemma~\ref{lemma:scale_poisson_process}, the scaled process is also a homogeneous Poisson process with rate 
$
\frac{\rho \, r^2}{\cos(\alpha/2) \, \sin (\alpha/2)} = \frac{\rho \, r^2}{(1/2) \, \sin(\alpha)}. 
$
In the scaled process, each point of the Poisson process is associated with a unit square. Then, by Theorem~\ref{theorem:percolation_vacant_region} applied to the Gilbert square model, there is no unbounded vacant component in this scaled model, thus there is no unbounded component in the vacant shadow region, whenever 
$
\frac{\rho \, r^2}{\sin (\alpha)} > \frac{\criticaldegree}{2}.
$
\end{proof}

Hence, we state below the first main result of this section, which follows easily from Lemma~\ref{lemma:non_reaching_percolation_threshold}.
\begin{theorem} \label{theorem:poisson_regime:super_criticality}
Suppose that the dynamics governing the bird is described by Equation~\eqref{eqn:sing_bird} and the forest-generating process is Poisson with tree density $\rho$ and tree radius $\Radius$.
Then, there exists no infinite collision-free trajectory, almost surely, whenever the tree density $\rho$, tree radius $\Radius$, and speed $\speedratio$ satisfy the following:
$$
\frac{\rho \, r^2 }{ \sin \alpha} >  \frac{\criticaldegree}{2},
$$
where $\alpha = 2 \tan^{-1} (1/\speed)$ and $\criticaldegree$ is the critical degree in the Gilbert square model with squares of unit side length.
\end{theorem}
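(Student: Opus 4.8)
The statement of Theorem~\ref{theorem:poisson_regime:super_criticality} is essentially a corollary of Lemma~\ref{lemma:non_reaching_percolation_threshold} together with the necessary-condition lemma proven just before it (the one stating that existence of an infinite collision-free trajectory requires vacant percolation of the primary shadow region). The plan is therefore to chain these two results directly, without any new geometric or probabilistic construction.

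First I would invoke Lemma~\ref{lemma:non_reaching_percolation_threshold}, which asserts that under the hypothesis
$$
\frac{\rho \, r^2}{\sin \alpha} > \frac{\criticaldegree}{2},
$$
with $\alpha = 2\tan^{-1}(1/\speed)$ and $\criticaldegree$ the critical degree for the Gilbert square model, vacant percolation fails to occur in the primary shadow model with parameters $\rho$, $\Radius$, and $\speed$, almost surely. That is, the vacant primary shadow region has no unbounded connected component, with probability one. Second, I would invoke the earlier (unnumbered) lemma establishing that vacant percolation of the primary shadow region is a \emph{necessary} condition for the existence of an infinite collision-free trajectory of the single-integrator bird in the Poisson forest. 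Taking the contrapositive, if the vacant primary shadow region does not percolate, then there is no infinite collision-free trajectory.

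Combining the two: under the stated inequality, vacant percolation fails almost surely (Lemma~\ref{lemma:non_reaching_percolation_threshold}), and on every sample path where it fails there can be no infinite collision-free trajectory (by the necessary-condition lemma). Since the set of sample paths where vacant percolation fails has probability one, the set of sample paths admitting no infinite collision-free trajectory also has probability one. This yields the conclusion that no infinite collision-free trajectory exists, almost surely.

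I do not expect any real obstacle here, precisely because the two substantive ingredients are already in place: the geometric reduction (fitting a rectangle of side lengths $r/\sin(\alpha/2)$ and $r/\cos(\alpha/2)$ inside each primary shadow region, rescaling to unit squares via Lemma~\ref{lemma:scale_poisson_process}, and applying the Gilbert-square vacant percolation result, Theorem~\ref{theorem:percolation_vacant_region}) was carried out inside the proof of Lemma~\ref{lemma:non_reaching_percolation_threshold}, and the dynamical-systems argument (the three-case analysis on left/vacant/right shadow regions showing any trajectory must eventually collide) was carried out in the necessary-condition lemma. The only thing to be slightly careful about is the logical direction: the necessary-condition lemma gives ``infinite collision-free trajectory exists $\Rightarrow$ vacant region percolates,'' so I must apply its contrapositive, and I must note that an almost-sure statement about the absence of percolation transfers to an almost-sure statement about the absence of trajectories because the event is contained set-theoretically. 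Thus the proof is a two-line deduction, and the ``work'' lives entirely in the lemmas it cites.
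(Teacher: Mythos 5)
Your proposal is correct and is exactly the route the paper takes: the paper states that the theorem ``follows easily from Lemma~\ref{lemma:non_reaching_percolation_threshold},'' i.e., it chains the almost-sure failure of vacant percolation in the primary shadow model with the contrapositive of the necessary-condition lemma, just as you describe. Your explicit remarks on the logical direction and the transfer of the almost-sure statement are a faithful unpacking of what the paper leaves implicit.
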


In the context of Problem~\ref{problem:main}, Theorem~\ref{theorem:poisson_regime:super_criticality} can be interpreted as follows. Whenever $\rho \, \Radius^2/ \sin(\Maneuverability) > \criticaldegree/2$, it is impossible to maintain flight indefinitely, with probability one, despite the knowledge of all the trees in the forest and regardless of the planner governing the motion of the bird. 

The following theorem strengthens Theorem~\ref{theorem:poisson_regime:super_criticality}, stating that the chances of maintaining flight with speed $\speed$ for some finite time $T$ converges to zero exponentially fast both with increasing $T$ and also with increasing $\speed$, again regardless of the planner governing the bird's motion, in the regime described in the statement of Theorem~\ref{theorem:poisson_regime:super_criticality}. 
\begin{theorem} \label{theorem:poisson_regime:exponential_decay}
Suppose that the conditions of Theorem~\ref{theorem:poisson_regime:super_criticality} are satisfied and that $\rho \, r^2 / \sin(\Maneuverability)> \criticaldegree/2$.
Let $\Event_\ElementPlane(\Time)$ denote the event that there exists a trajectory with speed $\speed$ that starts from $\ElementPlane$ and avoids collision with trees for the first $\Time$ time units.
Then, for any $\ElementPlane \in \plane$, there exists constants $c_1, c_2 > 0$ such that for all $\Time >0$
$$
\PP(\Event_{\ElementPlane}(\Time)) \le c_1 \, e^{-c_2 \, \Time}.
$$
\end{theorem}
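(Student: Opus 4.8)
The plan is to combine the exponential decay of vacant-component diameters in the super-critical Gilbert square model (Theorem~\ref{theorem:percolation_exponential_decay}, together with the anisotropic scaling already set up in the proof of Lemma~\ref{lemma:non_reaching_percolation_threshold}) with the geometric fact that a bird surviving for time $\Time$ must traverse a vacant primary shadow component whose diameter grows linearly in $\Time$.

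First I would make the ``escape'' argument of the preceding (unnumbered) lemma quantitative. Since the single-integrator bird advances longitudinally at the constant rate $\speed$, and since both the left and right primary shadow regions of a single tree have bounded longitudinal extent (call it $L_0 = L_0(\Radius,\alpha)$, read off from Figure~\ref{figure:left_shadow}), two facts follow. (i) If the bird is inside the left primary shadow of a tree at some time, it reaches that tree's longitudinal position, hence collides, within time $L_0/\speed$; therefore a trajectory that is collision-free on $[0,\Time]$ cannot lie in any left primary shadow during $[0,\,\Time - L_0/\speed]$. (ii) Because a right primary shadow cannot be entered from outside, the only right shadows the bird can ever occupy are those containing its start point $\ElementPlane$, and each of these has its downstream boundary within longitudinal distance $L_0$ of $\ElementPlane$, so monotone forward progress forces the bird out of all of them by time $L_0/\speed$ (and it can never re-enter). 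Consequently, on the interval $[L_0/\speed,\ \Time - L_0/\speed]$ the trajectory lies entirely in the vacant primary shadow region.

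Being connected and contained in the vacant region, this middle portion of the trajectory lies in a single vacant primary shadow component $V$; its longitudinal extent is $\speed\Time - 2L_0$, so $\mathrm{diam}(V) \ge \speed\Time - 2L_0$, and $V$ meets a fixed-radius neighborhood of $\ElementPlane$. I would then transfer this to a tail bound exactly as in the short proof of Theorem~\ref{theorem:subcritical_poisson_regime:exponential_decay}. The rectangle-in-shadow inclusion from the proof of Lemma~\ref{lemma:non_reaching_percolation_threshold} shows every vacant shadow component is contained in a vacant component of the rectangle model, which under the fixed ($\alpha$-dependent) scaling is the super-critical Gilbert square model; Theorem~\ref{theorem:percolation_exponential_decay} supplies an exponentially decaying tail for the diameter of the vacant component containing any fixed point. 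A union bound over a bounded neighborhood of $\ElementPlane$ (equivalently, stationarity plus covering that neighborhood by finitely many unit cells) upgrades this to a bound on the event that \emph{some} vacant component meeting the neighborhood has diameter at least $\speed\Time - 2L_0$. Since the scaling factors and $L_0$ are constants depending only on $\Radius$ and $\alpha$, this yields a bound of the form $c_1 e^{-c_2\speed\Time}$; absorbing $\speed$ into the rate gives $\PP(\Event_\ElementPlane(\Time)) \le c_1 e^{-c_2\Time}$, with the rate increasing in $\speed$ as claimed, while for $\Time \le 2L_0/\speed$ the inequality holds trivially after enlarging $c_1$.

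The main obstacle I anticipate is the second step of the escape argument: rigorously establishing that the bird spends all but a bounded initial-and-final portion of its survival time inside a \emph{single} vacant primary shadow component. This needs the ``bounded lingering'' estimates in left and right shadows (monotone longitudinal progress at rate $\speed$ together with bounded shadow length) and a careful treatment of possibly overlapping right shadows at the start point. The subsequent passage to the fixed-point diameter tail, via the rectangle inclusion, the scaling of Lemma~\ref{lemma:non_reaching_percolation_threshold}, and a union bound over a bounded region, is routine.
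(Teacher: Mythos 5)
Your proposal follows essentially the same route as the paper's proof: both reduce the survival event to the longitudinal extent of a vacant component in the primary shadow model, pass through the rectangle-in-shadow inclusion and the anisotropic scaling of Lemma~\ref{lemma:non_reaching_percolation_threshold} to the super-critical Gilbert square model, and invoke the exponential diameter tail of Theorem~\ref{theorem:percolation_exponential_decay}, using the fact that the bird advances longitudinally at rate $\speed$ so that surviving for time $\Time$ forces a vacant component of diameter of order $\speed\Time$. Your treatment is in fact somewhat more careful than the paper's on the geometric step (bounded lingering in left/right primary shadows and the union bound over a neighborhood of the start point, which the paper compresses into a single closing sentence), but the underlying argument is the same.
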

\begin{proof}
Consider, again, the scaled process described in the proof of Lemma~\ref{lemma:non_reaching_percolation_threshold}. Given a set $\SubsetPlane \subseteq \plane$, recall that $\mathrm{diam}(\SubsetPlane)$ denotes the diameter of $\SubsetPlane$. Let $V'_{\ElementPlane}$ denote the vacant component that contains $\ElementPlane$, in the scaled process. By Theorem~\ref{theorem:percolation_exponential_decay}, there exists a constant $a'$ such that 
$$
\PP(\{\mathrm{diam}(W_\mathrm{max}') \ge \beta\}) \le e^{a' \beta}.
$$
Let $V_\ElementPlane$ denote the vacant region that contains $\ElementPlane$ in the primary shadow model. Let $\Length_\ElementPlane$ denote the length of this region in the longitudinal direction. Then, 
$$
\PP \left(\left\{ \Length_\ElementPlane \ge \beta \, \frac{r}{\sin (\alpha/2)}\right\}\right) \le e^{-a' \beta}.
$$
Define $\speed_0$ such that $\rho \, r^2 / \sin(\alpha(\speed)) > \criticaldegree/2$ for all $\speed > \speed_0$, where $\alpha(\speed) = 2\,\tan^{-1}(1/\speed)$.  Notice that for any such $\speed_0$, there exists some constant $\gamma > 0$, independent of $\speed$, such that $ \gamma \, \speedratio > 1/\sin (\alpha/2) = 1/\sin(\tan^{-1}(1/\speed))$, for all $\speed > \speed_0$. Then, for all $T > 0$ and all $\speed > \speed_0$,
\begin{align*}
\PP\big(\{ \Length_\ElementPlane \ge r\,\beta\,\gamma\,\speedratio\}\big) \;\; \le  \;\; \PP \left(\left\{ \Length_\ElementPlane \ge \beta \, \frac{r}{2 \, \sin \alpha} \right\}\right) \;\; \le \;\; e^{-a' \beta}.
\end{align*}
Finally, by a suitable change of variables, there exists a constant $a$, independent of $T$, such that 
$$
\PP \big(\{ \Length_\ElementPlane \ge T\,\speedratio\} \big) \;\; \le \;\; e^{-a \, T}.
$$
Since $T\,\speedratio$ is the distance (in longitudinal direction) that the bird can fly with speed $\speedratio$ in time $T$, the result follows.
\end{proof}

Finally, we are ready to provide a proof for Theorem~\ref{theorem:non_ergodic_counter_example}. 
\begin{proof}{\em (Proof of Theorem~\ref{theorem:non_ergodic_counter_example})} 
Take some $\rho$, $\Radius$, and $\speed$ such that $\rho \, \Radius^2 / \sin(\Maneuverability) < \log (1 / \sqrt{\CriticalProbabilitySite})$, where $\Maneuverability = 2\, \tan^{-1}(\speed)$ and $\CriticalProbabilitySite$ is the threshold for site percolation on the directed hexagonal lattice. 
Let $\rho'$ be large enough such that $\rho' \, \Radius^2 / \sin(\alpha) > \criticaldegree/2$, where $\criticaldegree$ is the critical degree for continuum percolation in the Gilbert square model with unit squares. 
Consider the mixed Poisson forest-generating process constructed as follows. The radii of the all trees are equal to $\Radius$. With probability $q$ the process generates the locations of the trees according to a homogeneous Poisson process with intensity $\rho$, and with probability $1-q$ it generates trees according to that with intensity $\rho'$. 
On one hand, when the trees are distributed according to the former process, there exists an infinite collision-free trajectory for the bird, with probability one, by Theorem~\ref{theorem:subcritical_poisson_flight:threshold_bound}. On the other hand, when the trees are distributed to the latter process, there exists no such trajectory, with probability one, by Theorem~\ref{theorem:poisson_regime:super_criticality}.
Then, the probability that there exists an infinite collision-free trajectory for the bird flying with speed $\speed$ in the forest generated by the mixed forest-generating process is $q$.
\end{proof}

\section{An Equivalent Geometric Model} \label{section:equivalent_model}

In the previous section, it was shown that the failure of vacant percolation in the primary shadow model implies the non-existence of infinite collision-free trajectories. In this section, we provide an improved model in which vacant percolation is {\em equivalent} to the existence of such trajectories. That is, vacant percolation occurs in the improved model if and only if there exists an infinite collision-free trajectory for the single-integrator bird through the Poisson forest.

Consider a forest with two trees that are located at $\ElementPlane_{1},\ElementPlane_{2} \in \plane$. Suppose both trees have radius $\Radius$. Recall that the set of all states starting from which collision with the tree located at $\ElementPlane_{i}$ is inevitable is denoted by $\LeftPrimaryShadow{\speed}{\ElementPlane_{i}}{\Radius}$, where $i \in \{1,2\}$. 
Let $\LeftShadow{\speed}{\{\ElementPlane_{1},\ElementPlane_{2}\}}{\Radius}$ denote the set of all states starting from which collision is inevitable with either the tree located at $\ElementPlane_{1}$ or that located at $\ElementPlane_{2}$. 
In general, the set $\LeftShadow{\speed}{\{\ElementPlane_{1}, \ElementPlane_{2}\}}{\Radius}$ is not the same as the set $\LeftPrimaryShadow{\speed}{\ElementPlane_{1}}{\Radius} \cup \LeftPrimaryShadow{\speed}{\ElementPlane_{2}}{\Radius}$. Consider for instance the case depicted in Figure~\ref{figure:counter_example}. If the single-integrator bird starts in the region that is shaded in dark grey, then collision with either one of the trees is inevitable, although this region is neither in $\LeftPrimaryShadow{\speed}{\ElementPlane_{1}}{\Radius}$ nor in $\LeftPrimaryShadow{\speed}{\ElementPlane_{2}}{\Radius}$.

\begin{figure}[b]
\centering
\includegraphics[width=0.7\linewidth]{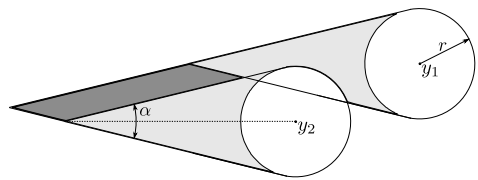}
\caption{The gray-shaded region is the set of all states starting from which collision with either the tree located at $y_1$ or that located at $y_2$ is inevitable. The dark grey region is the set of states that are neither in the primary left shadow region of the tree located at $y_1$ nor in that located at $y_2$.
}
\label{figure:counter_example}
\end{figure}

Given a left primary shadow region $\LeftPrimaryShadow{\speed}{\ElementPlane}{\Radius}$, define the {\em top boundary} and the {\em bottom boundary} of $\LeftPrimaryShadow{\speed}{\ElementPlane}{\Radius}$ as shown in Figure~\ref{figure:top_bottom_boundaries}. Given two left primary shadow regions, say $\LeftPrimaryShadow{\speed}{\ElementPlane_1}{\Radius}$ and $\LeftPrimaryShadow{\speed}{\ElementPlane_2}{\Radius}$ as above, define the {\em left induced shadow region}, denoted by $\InducedShadow{\LeftPrimaryShadow{\speed}{\ElementPlane_1}{\Radius}}{\LeftPrimaryShadow{\speed}{\ElementPlane_2}{\Radius}}$, as the set of all states starting from which the bird must go into either $\LeftPrimaryShadow{\speed}{\ElementPlane_1}{\Radius}$ or $\LeftPrimaryShadow{\speed}{\ElementPlane_2}{\Radius}$, thus eventually collide with either the tree located at $y_1$ or that located at $y_2$. See Figure~\ref{figure:induced_shadow_1}.
Notice that a left induced shadow region is formed only if the top boundary of one shadow intersects the bottom boundary of another shadow as depicted in the figure. If the opposite boundaries do not intersect, then the left induced shadow region is empty.

\begin{figure}[b]
\centering
\includegraphics[height=1.7cm]{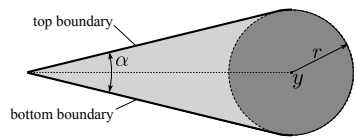}
\caption{The top and the bottom boundaries of a primary left shadow region.}
\label{figure:top_bottom_boundaries}
\end{figure}

\begin{figure}[b]
\centering
\includegraphics[width=0.85\linewidth]{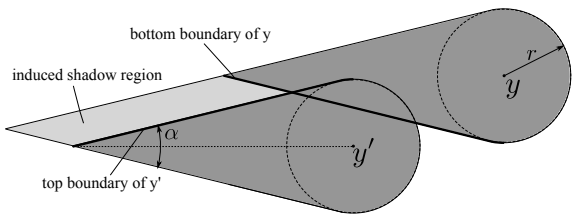}
\caption{The induced shadow region.}
\label{figure:induced_shadow_1}
\end{figure}

The top and bottom boundaries are defined for a left induced shadow region similarly as illustrated in Figure~\ref{figure:induced_shadow_boundaries}. In general, given two left shadows, say $\ShadowDummy_1$ and $\ShadowDummy_2$, either primary or induced, the set $\InducedShadow{\ShadowDummy_1}{\ShadowDummy_2}$ is the set of all states starting from which the bird must either enter $\ShadowDummy_1$ or $\ShadowDummy_2$. The set $\InducedShadow{\ShadowDummy_1}{\ShadowDummy_2}$ is nonempty whenever opposite boundaries of $\ShadowDummy_1$ and $\ShadowDummy_2$ intersect. See Figure~\ref{figure:induced_induced_shadow} for an illustration.

\begin{figure}[b]
\centering
\includegraphics[width=0.85\linewidth]{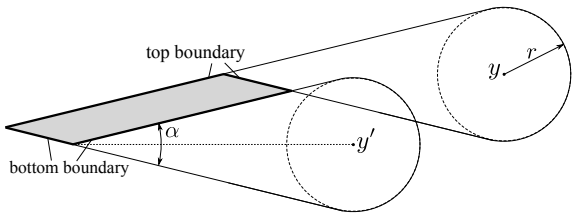}
\caption{The induced shadow resulting from the intersection of the bottom boundary of $\LeftPrimaryShadow{\speedratio}{\ElementPlane}{\Radius}$ with the top boundary of $\LeftPrimaryShadow{\speedratio}{\ElementPlane}{\Radius}$ is shown in light grey. The top and the bottom boundaries of the induced shadow region are shown in bold and labeled in the figure.}
\label{figure:induced_shadow_boundaries}
\end{figure}

\begin{figure}[b]
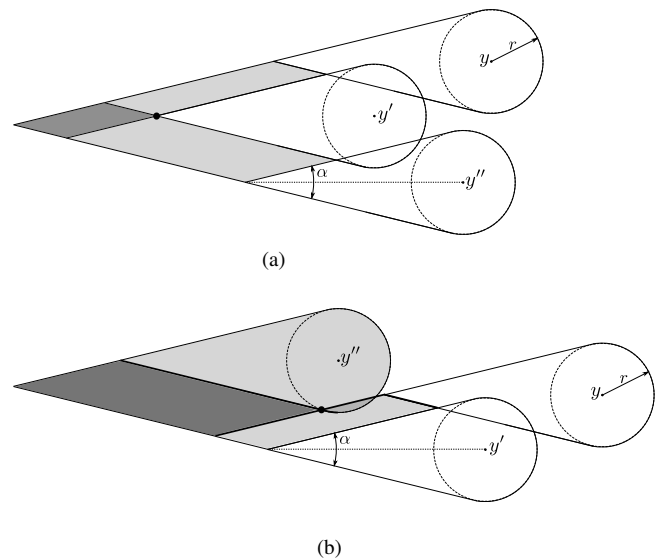

\begin{minipage}[h]{0.6\linewidth}
\begin{center}
\subfigure[]{
\centering
\includegraphics[height = 3cm]{./induced_induced_shadow.pdf}
}
\subfigure[]{
\centering
\includegraphics[height = 3cm]{./induced_induced_shadow_2.pdf}
}
\end{center}
\end{minipage}
\caption{In Figure (a), two induced shadows (shown in light grey) intersect to produce a new induced shadow region. Their intersection point is shown as a black dot. In Figure (b), an intersection of an induced region with a primary shadow region is shown. The intersection point of the top boundary of the induced shadow region and the bottom boundary of the primary shadow region is shown as a black dot. The resulting induced shadow is in dark grey.}
\label{figure:induced_induced_shadow}
\end{figure}

Given a finite or a countably infinite set $\SetTreeLocations \subset \plane$ of tree locations, let $\LeftShadow{\speed}{\SetTreeLocations}{\Radius}$ denote the set of all states starting from which collision with some tree in $\SetTreeLocations$ is inevitable. 
If $\SetTreeLocations$ is a finite set, then $\LeftShadow{\speed}{\SetTreeLocations}{\Radius}$ can be constructed in finite time by first constructing the induced shadow regions of all pairs, then constructing the induced shadow regions of all shadows, and so on, until there is not new induced shadow. This procedure is illustrated in Algorithm~\ref{algorithm:construct_shadows}. If $\SetTreeLocations$ is a countably infinite set, Algorithm~\ref{algorithm:construct_shadows} provides a countably enumerable procedure.

\IncMargin{0.05in}
\begin{algorithm}
$\mathrm{ActiveShadows} \leftarrow \bigcup_{\ElementPlane \in \SetTreeLocations} \{\LeftPrimaryShadow{\speedratio}{\ElementPlane}{\Radius}\}$\;
$\mathrm{AllShadows} \leftarrow \mathrm{ActiveShadows}$\;
\While{$\mathrm{ActiveShadows} \neq \emptyset$}{
	$\mathrm{NewShadows} \leftarrow \emptyset$\;
	\For{all $\ShadowDummy_1 \in \mathrm{ActiveShadows}$} {
		\For{all $\ShadowDummy_2 \in \mathrm{ActiveShadows}$} {
			$\mathrm{NewShadows} \leftarrow \mathrm{NewShadows}\cup \InducedShadow{\ShadowDummy_1}{\ShadowDummy_2}$\;
		}
	}
	$\mathrm{AllShadows} \leftarrow \mathrm{AllShadows} \cup \mathrm{NewShadows}$\;
	$\mathrm{ActiveShadows} \leftarrow \mathrm{NewShadows}$\; 
}
\Return{$\mathrm{AllShadows}$\;}
\caption{Geometric construction of $\LeftShadow{\speed}{\SetTreeLocations}{\Radius}$.}
\label{algorithm:construct_shadows}
\end{algorithm}

The set $\LeftShadow{\speed}{\SetTreeLocations}{\Radius}$ will be called the {\em occupied left shadow region}. The  {\em vacant left shadow region} is defined as $\plane \setminus \LeftShadow{\speed}{\SetTreeLocations}{\Radius}$. This model is called the {\em left shadow region model} with tree density $\rho$, tree radius $r$, and speed $\speed$, when the set $\SetTreeLocations$ is generated by a Poisson point process with intensity $\rho$. The notions of components, unbounded components, and percolation for both the occupied and the vacant regions are naturally extended to the left shadow region model. 
The left shadow region model is a novel percolation model that exhibits a number of interesting properties, some of which are unique even in the context of percolation theory. 

The first important property of the left shadow region model is its connection with the existence of infinite collision-free trajectories for the single-integrator bird. 
The following theorem is provided without a proof. Its proof can be carried out by an induction on the size of $\SetTreeLocations$. 

\begin{theorem} \label{theorem:infinite_shadows_and_left_shadow_percolation}
There exists an infinite collision-free trajectory of the single-integrator bird in a forest composed of trees with radius $\Radius$ located at points in $\SetTreeLocations$ if and only if the vacant left shadow region is non-empty.
\end{theorem}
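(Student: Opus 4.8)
The plan is to reduce the stated equivalence to the definition of the left shadow region, after isolating the one feature of the single-integrator dynamics that makes the word ``infinite'' free of charge. First I would record the elementary observation that \emph{every} trajectory of the single-integrator bird is automatically infinite: writing $\Trajectory(t) = (\Trajectory_1(t), \Trajectory_2(t))$, Equation~\eqref{eqn:sing_bird} forces $\Trajectory_1(t) = \Trajectory_1(0) + \speed\,t$ with $\speed > 0$, so $\Vert \Trajectory(t)\Vert_2 \ge \Trajectory_1(0) + \speed\,t \to \infty$ and hence $\liminf_{t\to\infty}\Vert \Trajectory(t)\Vert_2 = \infty$. Consequently ``there exists an infinite collision-free trajectory'' is equivalent to ``there exists a collision-free trajectory,'' and the theorem collapses to the claim that a collision-free trajectory exists if and only if $\plane \setminus \LeftShadow{\speed}{\SetTreeLocations}{\Radius} \neq \emptyset$.

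With this reduction in hand, the two implications follow directly from the defining property of $\LeftShadow{\speed}{\SetTreeLocations}{\Radius}$ as the set of initial states from which collision with some tree in $\SetTreeLocations$ is \emph{inevitable}, since the complement of this set is precisely the set of states admitting an escaping trajectory. For the forward direction I would take any collision-free trajectory $\Trajectory$ and note that from $\Trajectory(0)$ collision is escapable (witnessed by $\Trajectory$ itself), so $\Trajectory(0) \notin \LeftShadow{\speed}{\SetTreeLocations}{\Radius}$, placing it in the vacant left shadow region. For the converse I would pick any $z$ in the vacant region; by definition collision is not inevitable from $z$, so at least one trajectory issuing from $z$ avoids every tree forever, and by the first paragraph this trajectory is an infinite collision-free trajectory.

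The substantive content — and the part the authors defer to ``induction on the size of $\SetTreeLocations$'' — is the verification that the inevitability set is exactly the set produced by the geometric construction of Algorithm~\ref{algorithm:construct_shadows}, namely the union of the primary shadows $\LeftPrimaryShadow{\speed}{\ElementPlane}{\Radius}$ and all iterated induced shadows $\InducedShadow{\cdot}{\cdot}$; only this makes the left shadow region model a computable percolation model. I would prove it by induction on $\Card{\SetTreeLocations}$. The base case $\Card{\SetTreeLocations}=1$ is the definition of the primary shadow. For the inductive step, the inclusion ``construction $\subseteq$ inevitability'' is the easy half: if $z$ lies in an induced shadow $\InducedShadow{\ShadowDummy_1}{\ShadowDummy_2}$ of two shadows from which collision is already known to be inevitable, then by definition every trajectory from $z$ must enter $\ShadowDummy_1$ or $\ShadowDummy_2$, so collision is inevitable from $z$ as well.

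The hard part, and the main obstacle, is the reverse inclusion: showing that if $z$ avoids all primary and induced shadows then one can thread an escape trajectory through the gaps. The crux is the geometric fact — visible in Figures~\ref{figure:induced_shadow_1}--\ref{figure:induced_induced_shadow} — that a pair of left shadows funnels all forward trajectories into their union \emph{exactly} when the top boundary of one crosses the bottom boundary of the other, which is precisely the condition generating a nonempty induced shadow. Arguing contrapositively, if $z$ is covered by no shadow then the uncovered portion of the forward cone from $z$ stays connected out to longitudinal infinity, and one selects at each longitudinal level a lateral position in this uncovered set whose slope respects the admissible bound $|\dot{\Trajectory}_2|\le 1$, yielding a valid collision-free control. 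Finally, to pass from finite to countably infinite $\SetTreeLocations$ I would combine monotonicity (enlarging $\SetTreeLocations$ only enlarges the inevitability set) with a compactness argument: escape trajectories for an exhausting sequence of finite subforests are uniformly Lipschitz, since $|\dot{\Trajectory}_1| = \speed$ and $|\dot{\Trajectory}_2| \le 1$, so Arzel\`a--Ascoli furnishes a subsequential limit avoiding every tree simultaneously. This diagonal/compactness passage, together with the care needed to ensure the limiting trajectory does not merely graze a tree boundary, is where I expect the real difficulty to lie, rather than in the finite geometric bookkeeping.
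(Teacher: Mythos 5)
The paper itself omits the proof of this theorem, offering only the remark that it can be carried out by induction on the size of $\SetTreeLocations$; your proposal follows exactly that route, and your supporting observations (every single-integrator trajectory is automatically infinite because its longitudinal coordinate grows at rate $\speed>0$, the vacant region is by definition the set of states from which escape is possible, and a monotonicity-plus-Arzel\`a--Ascoli diagonal argument handles countably infinite $\SetTreeLocations$) are all correct and consistent with the paper's intent. The step you rightly identify as the crux --- threading a collision-free control through the uncovered portion of the forward cone when the start is covered by no primary or induced shadow, and ruling out grazing in the limit --- is precisely the geometric content the paper leaves implicit, so your sketch is, if anything, more complete than the paper's own treatment.
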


An interesting corollary of this theorem is stated below.

\vspace{-0.1in}

\begin{corollary} \label{corollary:unbounded_left_shadow_region}
The vacant left shadow region is either empty or is unbounded.
\end{corollary}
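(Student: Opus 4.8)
The plan is to read both alternatives off Theorem~\ref{theorem:infinite_shadows_and_left_shadow_percolation} together with the time-invariance of the single-integrator dynamics of Equation~\eqref{eqn:sing_bird}. One branch of the dichotomy is vacuous: if the vacant left shadow region $\plane \setminus \LeftShadow{\speed}{\SetTreeLocations}{\Radius}$ is empty, there is nothing to prove. So I would assume it is non-empty and show that it must then be unbounded.

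First I would unwind the definition of the occupied region. Because $\LeftShadow{\speed}{\SetTreeLocations}{\Radius}$ is exactly the set of states from which collision with some tree is inevitable, a point $z \in \plane$ lies in the vacant region precisely when there is a trajectory of the single-integrator bird starting at $z$ that avoids every tree for all $t \ge 0$. For the dynamics of Equation~\eqref{eqn:sing_bird} the longitudinal coordinate obeys $\dot{x}_1 = \speed > 0$, so any such collision-free trajectory $y$ satisfies $\Vert y(t) \Vert_2 \to \infty$ and is automatically infinite; equivalently, once the vacant region is known to be non-empty, Theorem~\ref{theorem:infinite_shadows_and_left_shadow_percolation} directly furnishes an infinite collision-free trajectory $y \in \SetTrajectories{\speed}$.

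The key step is then a tail argument. Given such an infinite collision-free $y$, I would observe that for every $s \ge 0$ the time-shifted curve $t \mapsto y(s+t)$ is again a dynamically feasible, collision-free trajectory, now starting at $y(s)$, using that the vector field and the admissible input set $[-\wmax,\wmax]$ in Equation~\eqref{eqn:sing_bird} are time-independent and that the free region does not vary with time. Hence collision is not inevitable from $y(s)$, so the entire orbit $\{y(t) : t \ge 0\}$ is contained in the vacant left shadow region. Since $\lim_{t \to \infty} \Vert y(t) \Vert_2 = \infty$, this orbit is an unbounded subset of the vacant region, which therefore is unbounded, completing the proof.

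The argument is short, and the only point demanding care is the legitimacy of the tail step: restricting a feasible input/trajectory pair to $[s,\infty)$ and re-anchoring time must yield an admissible element of $\SetTrajectories{\speed}$. This is immediate for the single-integrator model precisely because its dynamics are time-invariant, but it should be stated explicitly so that the inference ``$y(s) \notin \LeftShadow{\speed}{\SetTreeLocations}{\Radius}$'' is fully justified. No percolation machinery beyond Theorem~\ref{theorem:infinite_shadows_and_left_shadow_percolation} is required.
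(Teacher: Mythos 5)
Your argument is correct and follows essentially the same route as the paper's own proof: take a point in the non-empty vacant region, produce an infinite collision-free trajectory from it, note that every point of that trajectory is again in the vacant region, and conclude unboundedness from the trajectory's divergence. The only difference is that you explicitly justify, via time-invariance of the dynamics, the step the paper dismisses with ``clearly'' (that each $y(s)$ lies in the vacant region), which is a welcome bit of extra care but not a different proof.
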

\begin{proof}
Assume that the vacant left shadow region is non-empty. Let $\ElementPlane \in \plane$ be a point in the vacant left shadow region. Then, by Theorem~\ref{theorem:infinite_shadows_and_left_shadow_percolation}, there exists an infinite collision-free trajectory of the bird starting from $\ElementPlane$. Clearly, any point on this trajectory is in the vacant left shadow region also. Hence, the vacant left shadow region is unbounded, since the set of these points extend to infinity as the trajectory is infinite.
\end{proof}

Finally, the following theorem easily follows from the results presented in Theorems~\ref{theorem:subcritical_poisson_flight:threshold_bound}, \ref{theorem:poisson_regime:super_criticality}, and \ref{theorem:infinite_shadows_and_left_shadow_percolation} and Corollary~\ref{corollary:unbounded_left_shadow_region}.

\vspace{-0.1in}

\begin{theorem} \label{theorem:left_shadow_region_percolation}
Consider the left shadow region model with tree density $\rho$, tree radius $\Radius$, and speed $\speed$. For any $\rho$, $\Radius$, and $\speed$, there exists a finite threshold $\speed_\mathrm{crit}$, possibly zero, such that 
\begin{itemize}
\item for all $\speed > \speed_\mathrm{crit}$, there exists {\em no} vacant left shadow region, i.e., the vacant left shadow region is {\em empty}, almost surely, 
\item for all $\speed < \speed_\mathrm{crit}$, there exists an {\em unbounded} vacant left shadow region, almost surely.
\end{itemize}
Moreover, the critical speed $\speed_\mathrm{crit}$ is guaranteed to be non-zero whenever $\rho \, r^2 < 2 \log (1/\CriticalProbabilitySite)$ and to be equal to zero whenever $\rho \, r^2 > \criticaldegree/2$, where $\CriticalProbabilitySite$ is the critical probability for site percolation on the directed hexagonal lattice and $\criticaldegree$ is the critical degree for continuum percolation of unit squares.
\end{theorem}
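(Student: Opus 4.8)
The plan is to transfer the phase-transition structure already available for infinite collision-free trajectories onto the vacant left shadow region, and then to pin down the two quantitative conditions using the sub- and super-critical bounds of Sections~\ref{section:subcritical_poisson_regime} and~\ref{section:supercritical_poisson_regime}. The single-integrator dynamics of Equation~\eqref{eqn:sing_bird} is clearly translation invariant, and at a lower longitudinal speed the reachable cone only opens wider, so it has non-decreasing path sets with decreasing speed; since the Poisson forest-generating process is ergodic, Theorem~\ref{corollary:ergodic_forest} applies and furnishes a critical speed $\speed_\mathrm{crit} \in [0,\infty]$ such that an infinite collision-free trajectory exists almost surely for $\speed < \speed_\mathrm{crit}$ and fails to exist almost surely for $\speed > \speed_\mathrm{crit}$.

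First I would translate this into the language of the left shadow region model. By Theorem~\ref{theorem:infinite_shadows_and_left_shadow_percolation}, for every realization of the tree locations the vacant left shadow region is non-empty if and only if an infinite collision-free trajectory exists. Hence for $\speed > \speed_\mathrm{crit}$ the vacant left shadow region is empty almost surely, while for $\speed < \speed_\mathrm{crit}$ it is non-empty almost surely; Corollary~\ref{corollary:unbounded_left_shadow_region} then upgrades ``non-empty'' to ``unbounded,'' which is exactly the two bullet points. To see that $\speed_\mathrm{crit}$ is finite rather than $\infty$, observe that $\sin\alpha \to 0$ as $\speed \to \infty$, because $\alpha = 2\tan^{-1}(1/\speed) \to 0$; thus $\rho r^2/\sin\alpha \to \infty$ and eventually exceeds $\criticaldegree/2$, so by Theorem~\ref{theorem:poisson_regime:super_criticality} no infinite collision-free trajectory exists for such large $\speed$, forcing $\speed_\mathrm{crit} < \infty$.

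For the two explicit conditions I would exploit that $\sin\alpha \in (0,1]$ as $\speed$ ranges over $\reals_{>0}$, with the maximum $\sin\alpha = 1$ attained at $\alpha = \pi/2$, i.e.\ at $\speed = 1$. If $\rho r^2 > \criticaldegree/2$, then $\rho r^2/\sin\alpha \ge \rho r^2 > \criticaldegree/2$ for every $\speed$, so Theorem~\ref{theorem:poisson_regime:super_criticality} rules out infinite collision-free trajectories at all speeds and forces $\speed_\mathrm{crit} = 0$. Conversely, to obtain $\speed_\mathrm{crit} > 0$ I would apply Theorem~\ref{theorem:subcritical_poisson_flight:threshold_bound} at the most favorable speed: choosing $\speed$ so that $\sin\alpha$ is maximal makes the subcritical condition $\rho r^2/\sin\alpha < \log(1/\sqrt{\CriticalProbabilitySite})$ the easiest to satisfy, and under the stated smallness hypothesis on $\rho r^2$ it holds, producing an infinite collision-free trajectory almost surely at that speed. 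By the monotonicity of $\PP(\Event_\ForestGeneratingProcess(\speed))$ in $\speed$ (Lemma~\ref{theorem:ergodic_forest:monotonicity}) the same conclusion then holds for every smaller speed, so $\speed_\mathrm{crit}$ is bounded below by a positive number.

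The limiting computations $\sin\alpha \to 0$ and the bound $\sin\alpha \le 1$ are routine; the only genuinely load-bearing step is the reduction itself, namely combining Theorem~\ref{theorem:infinite_shadows_and_left_shadow_percolation} with Corollary~\ref{corollary:unbounded_left_shadow_region} so that the almost-sure existence/non-existence dichotomy for trajectories becomes the empty/unbounded dichotomy for the vacant left shadow region, together with checking that the single-integrator dynamics meets the hypotheses of Theorem~\ref{corollary:ergodic_forest} so that a single well-defined threshold $\speed_\mathrm{crit}$ exists at all. I would expect the main care to be needed in matching the numerical constant in the non-zero criterion to the sub-critical bound of Theorem~\ref{theorem:subcritical_poisson_flight:threshold_bound}, since that constant is obtained by substituting the extremal value $\sin\alpha = 1$ into that bound.
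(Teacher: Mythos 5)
Your route is exactly the paper's: the printed proof of Theorem~\ref{theorem:left_shadow_region_percolation} is a one-line citation of Theorems~\ref{theorem:subcritical_poisson_flight:threshold_bound}, \ref{theorem:poisson_regime:super_criticality}, \ref{theorem:infinite_shadows_and_left_shadow_percolation} and Corollary~\ref{corollary:unbounded_left_shadow_region}, and your write-up is the natural fleshing-out of that citation. The two bullet points (via Theorem~\ref{corollary:ergodic_forest} transferred through Theorem~\ref{theorem:infinite_shadows_and_left_shadow_percolation} and upgraded by Corollary~\ref{corollary:unbounded_left_shadow_region}), the finiteness of $\speed_\mathrm{crit}$ from $\sin\alpha \to 0$, and the criterion $\rho\,\Radius^2 > \criticaldegree/2 \Rightarrow \speed_\mathrm{crit}=0$ from $\sin\alpha \le 1$ are all correct and are what the paper intends.

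The one place you were right to be suspicious is the numerical constant in the non-zero criterion, and there the match fails. Substituting the extremal value $\sin\alpha = 1$ (attained at $\speed = 1$) into the sub-critical condition of Theorem~\ref{theorem:subcritical_poisson_flight:threshold_bound} yields $\rho\,\Radius^2 < \log\bigl(1/\sqrt{\CriticalProbabilitySite}\bigr) = \tfrac{1}{2}\log\bigl(1/\CriticalProbabilitySite\bigr)$, whereas the theorem asserts non-zero critical speed whenever $\rho\,\Radius^2 < 2\log\bigl(1/\CriticalProbabilitySite\bigr)$ --- a condition that is four times more permissive. Since the sub-critical bound is monotone-optimized at $\sin\alpha=1$ and the paper supplies no other tool for the sub-critical direction, the stated constant $2\log(1/\CriticalProbabilitySite)$ cannot be recovered from the cited results; your argument proves the theorem with $\tfrac{1}{2}\log(1/\CriticalProbabilitySite)$ in its place, and the discrepancy appears to be an error (or an uncorrected algebra slip from $\log(1/\sqrt{\CriticalProbabilitySite})$) in the theorem statement rather than a gap in your proof. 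State explicitly which constant you are proving, and note that you then invoke Lemma~\ref{theorem:ergodic_forest:monotonicity} to push the conclusion from $\speed=1$ down to all $\speed<1$, giving $\speed_\mathrm{crit}\ge 1>0$.
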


Theorem~\ref{theorem:left_shadow_region_percolation} points out an interesting property of the left shadow region model. On one hand, in the sub-critical regime the model includes an unbounded vacant region. On the other hand, the vacant left shadow region is empty in the super-critical regime. Thus, when crossing the critical speed, which is guaranteed to be non-trivial when $\rho \, r^2$ is small enough, i.e., the forest is sparse enough, the once {\em unbounded} vacant shadow region suddenly completely disappears.
This property of the left shadow region model is unique even in the context of percolation theory, to the best of the authors' knowledge.

The authors conjecture, however, that there is some regime where the unbounded vacant left shadow region is not unique, which currently stands as an interesting open problem. We also conjecture that the uniqueness of the vacant left shadow region exhibits a phase transition of its own.

The occupied and the vacant right shadow region can be defined similar to the left shadow region. Clearly, the occupied or vacant percolation occurs in the right shadow model if and only if same kind of percolation occurs in the left shadow model. 
Recall that the primary shadow region for a particular tree was defined as the union of the left and right primary shadow regions of the same tree. 
Define the top boundary of a primary shadow of a particular tree as the union of the top boundaries of the left and right shadow regions for the same tree. The bottom boundary of a primary shadow is defined similarly. The induced shadow is defined as the union of the induced shadow regions for the left and right shadow regions, when two opposite boundaries of two distinct shadow regions intersect.
Define the {\em occupied shadow region} as the region generated by Algorithm~\ref{algorithm:construct_shadows} but with primary shadow regions (instead of primary left shadows). The {\em vacant shadow region} is defined as the part of the infinite plane that is not included in the occupied shadow region.
A standard argument using the stationarity of the forest-generating process shows that, with probability one, the occupied shadow region percolates if and only if the occupied left shadow region percolates.

\section{Computational Experiments} \label{section:computational}

In this section, the single-integrator bird flying in a Poisson forest is analyzed computationally. For this case, rigorous lower and upper bounds for the critical speed were derived in Sections~\ref{section:subcritical_poisson_regime} and Section~\ref{section:supercritical_poisson_regime}, respectively, and an equivalent percolation model was provided in Section~\ref{section:equivalent_model}. In this section, the same percolation model is used to approximately generate the phase diagram through Monte-Carlo simulations. 

The phase diagram is constructed in the $\speed$-$\rho$ plane, where $\speed$ is the speed of the single-integrator bird in the longitudinal axis and $\rho$ is the tree density of the Poisson forest.
Throughout the simulation study the tree radius is fixed to $\Radius = 1$. This assumption is without loss of any generality as noted in the proposition below. The same phase diagram applies to any $\Radius'$ by scaling the tree density $\rho$ with $\Radius'^2$.
\begin{proposition}
There exists a vacant region in the (primary) shadow model with tree density $\rho > 0$, tree radius $r > 0$, and speed $\speed > 0$ if and only if there exists a vacant region in the shadow model with tree density $\rho'$, tree radius $\Radius'$, and $\speed$ satisfying $\rho \, \Radius^2 = \rho' \, \Radius'^2$.
\end{proposition}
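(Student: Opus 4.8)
The plan is to exhibit an explicit scaling of the plane that converts one shadow model into the other while pushing the first Poisson process forward onto the second. The single-integrator bird at speed $\speed$ has feasible trajectories whose tangent direction never deviates from the longitudinal axis by more than $\anglespeed/2 = \tan^{-1}(1/\speed)$, since the longitudinal speed is $\speed$ and the lateral speed is bounded by $\wmax = 1$. Viewed as geometric curves (ignoring time parametrization, which is irrelevant to collisions and to the shadow regions), the family of feasible paths is therefore described by a cone condition on the tangent that depends only on $\anglespeed$, hence only on $\speed$. The crucial observation is that this condition is invariant under uniform dilations of the plane: a dilation preserves directions, so it carries feasible paths to feasible paths at the \emph{same} speed $\speed$ (a feasible curve dilated by $\kappa$ is retraced at speed $\speed$ under the time reparametrization $s \mapsto s/\kappa$).

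Set $\kappa := \Radius'/\Radius > 0$ and let $f : \plane \to \plane$ be the dilation $f(w) = \kappa \, w$. First I would show that $f$ carries the shadow structure of the configuration with tree radius $\Radius$ onto that with tree radius $\Radius' = \kappa\,\Radius$, at the same speed $\speed$. For the primary shadow model this is immediate from the geometry: each primary shadow region is determined by $\Radius$ and $\anglespeed$ up to translation (see Figures~\ref{figure:left_shadow}--\ref{figure:right_shadow}), so $f(\PrimaryShadow{\speed}{\ElementPlane}{\Radius}) = \PrimaryShadow{\speed}{f(\ElementPlane)}{\Radius'}$, and the occupied primary shadow region of $\{\ElementPlane_i\}$ maps onto that of $\{f(\ElementPlane_i)\}$. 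For the full shadow model I would instead use the intrinsic characterization behind Theorem~\ref{theorem:infinite_shadows_and_left_shadow_percolation}: a point $\ElementPlane$ lies in the occupied shadow region precisely when every feasible path issuing from $\ElementPlane$ eventually meets some disk $\ball{\ElementPlane_i}{\Radius}$. Because $f$ bijects feasible paths from $\ElementPlane$ onto feasible paths from $f(\ElementPlane)$ and maps $\ball{\ElementPlane_i}{\Radius}$ onto $\ball{f(\ElementPlane_i)}{\Radius'}$, inevitability of collision is preserved; thus $f$ maps the occupied shadow region of the first configuration onto that of the second, and consequently the vacant shadow region onto the vacant shadow region. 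As $f$ is a homeomorphism, it sends nonempty vacant regions to nonempty vacant regions.

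It remains to match the random inputs. Applying Lemma~\ref{lemma:scale_poisson_process} with $l_1 = l_2 = \kappa$, the image $f(\poissonforest{\rho})$ of the intensity-$\rho$ process is a homogeneous Poisson process of intensity $\rho/\kappa^2 = \rho \, \Radius^2/\Radius'^2$, which equals $\rho'$ precisely under the hypothesis $\rho \, \Radius^2 = \rho' \, \Radius'^2$. Hence $f$ transports the shadow model with parameters $(\rho,\Radius,\speed)$ onto the shadow model with parameters $(\rho',\Radius',\speed)$: tree locations map to a correctly distributed Poisson process, the common speed $\speed$ is unchanged, and by the previous paragraph the occupied and vacant regions correspond under $f$. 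Since a homeomorphism sends a vacant region to a vacant region, a vacant region exists in the first model on a given sample point if and only if one exists in the second on its image, and the two models are equal in distribution; the equivalence of the two existence events follows.

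The main obstacle is the verification in the second paragraph that the \emph{entire} shadow construction commutes with the dilation $f$ — in particular that the inductively generated induced shadow regions of Algorithm~\ref{algorithm:construct_shadows} transform correctly, not merely the primary shadows. The cleanest route, which I would take, is to sidestep reasoning about the induction directly and instead invoke the intrinsic description of the occupied shadow region as the set of states from which collision is inevitable; scale invariance of the feasible-path family then makes commutation with $f$ immediate. Everything else is the routine bookkeeping of the Poisson scaling lemma.
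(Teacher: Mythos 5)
Your proof is correct and follows essentially the same route as the paper's: a dilation of the plane by $\Radius'/\Radius$ pushes the intensity-$\rho$ Poisson process forward to one of intensity $\rho' = \rho\,\Radius^2/\Radius'^2$ (Lemma~\ref{lemma:scale_poisson_process}) while carrying the radius-$\Radius$ shadow geometry at speed $\speed$ onto the radius-$\Radius'$ geometry at the same speed. The paper's own proof is a two-sentence sketch that takes the scale-invariance of the shadow construction for granted; your explicit verification of that invariance via the cone condition on feasible paths fills in detail the paper omits, but it is not a different argument.
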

\begin{proof}
The results follows from the fact that the points of the Poisson process with intensity $\rho$ can be scaled by $\Radius'^2/\Radius^2$ to obtain the Poisson process with intensity $\rho'$. Since we obtain the same process in each cases, vacant region exists in the model driven with the first one if and only if it exists in that driven by the second one.
\end{proof}

In the computational experiments presented in this section, the existence of infinite collision-free trajectories through the Poisson forest is approximately verified by testing the existence of trajectories that traverse a large bounded region with width $\Width$ and length $\Length$.
Consider a a single-integrator bird flying at speed $\speed$ through a Poisson forests with tree density $\rho$ and tree radius $\Radius = 1$.
Then, roughly speaking, there exists a trajectory that traverses a region of width $\Width$ and length $\Length$, with arbitrarily high probability, whenever there exist an infinite collision-free trajectory through the Poisson forest, given that $\Width$ and $\Length$ are large enough. This statement can be made more precise using the result presented in Theorem~\ref{theorem:subcritical_poisson_regime:exponential_decay}.

Let $\Region{\Width}{\Length}$ denote the rectangular region with width $\Width$ and length $\Length$ centered at the origin.
To test the existence of a trajectory through $\Region{\Width}{\Length}$ in the Poisson forest, one can construct the shadow region generated by the trees in $\Region{\Width}{\Length}$. Then, clearly, there exists collision-free trajectory that traverses $\Region{\Width}{\Length}$ from left to right if and only if there exists a vacant shadow region in the shadow region model constructed using the trees in $\Region{\Width}{\Length}$. In fact, the latter occurs if and only if the largest-width occupied shadow component (in $\Region{\Width}{\Length}$) has width greater than or equal to $\Width$.

In the experiments, the width $\Width$ is set to 500 and the length $\Length$ is varied depending on the tree density $\rho$ such that the expected number of trees in $\Region{\Width}{\Length}$ is equal to 50,000, i.e., $\rho \, \Width \, \Length = 50,000$. In each experiment, the trees in $\Region{\Width}{\Length}$ is generated according to a Poisson process with intensity $\rho$, and the width of the largest-width occupied shadow component is noted after dividing by $\Width$. From now on, this quantity will be called the normalized maximum width. To obtain a statistical distribution of the normalized maximum width, the experiment is repeated 200 times for each of several tree density and speed pairs ($\rho$-$\speed$ pairs). 
The set of all such $\rho$-$\speed$ pairs is depicted in Figure~\ref{figure:computational_experiment:domain}.
To give the reader an idea, three realizations of the Poisson forest are shown together with the occupied shadow region for selected values of the tree density and speed in Figure~\ref{figure:computational_experiment:examples}.

\begin{figure}[b]
\begin{center}
\includegraphics[width=0.9\linewidth,trim=0in 2.35in 0in 2in, clip=true]{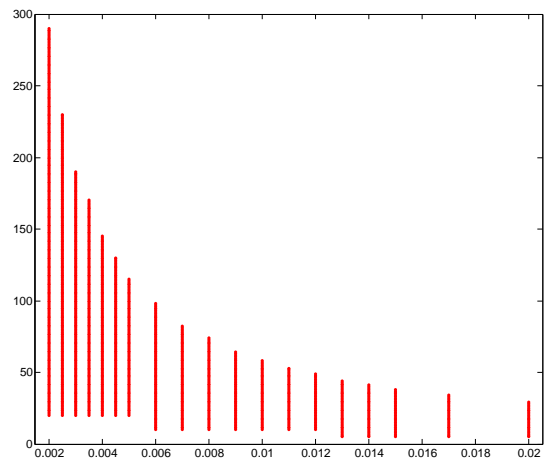}
\end{center}
\caption{The tree density (x-axis) and speed (y-axis) pairs that were considered in the computational experiment. }
\label{figure:computational_experiment:domain}
\end{figure}

\begin{figure*}
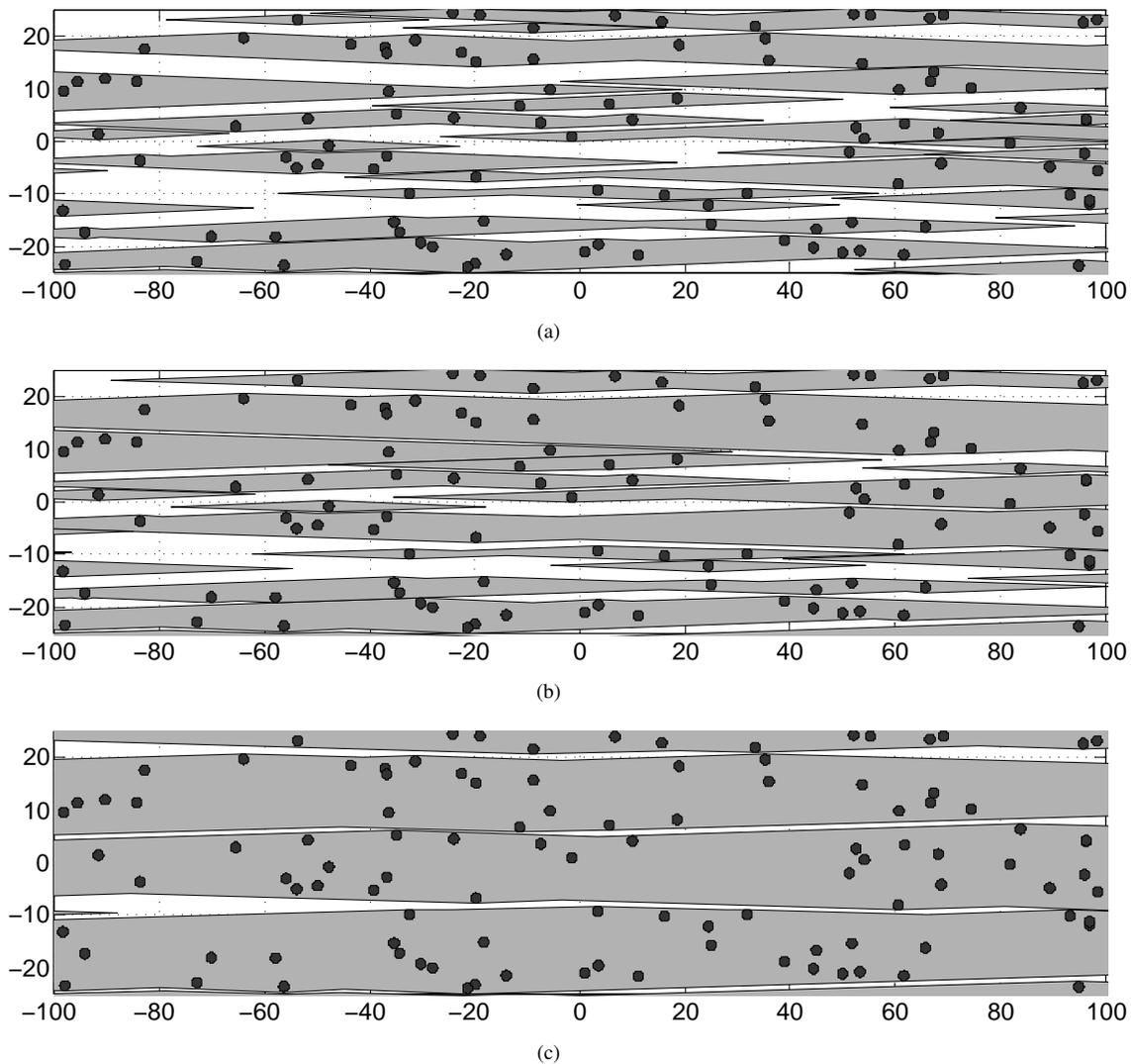

\begin{center}
\subfigure[]{\includegraphics[trim=0 4.45in 0 4.5in, clip=true, width=0.92\linewidth]{./region_rho_0_01_v_25.pdf}}
\subfigure[]{\includegraphics[trim=0 4.45in 0 4.5in, clip=true, width=0.92\linewidth]{./region_rho_0_01_v_30.pdf}}
\subfigure[]{\includegraphics[trim=0 4.45in 0 4.5in, clip=true, width=0.92\linewidth]{./region_rho_0_01_v_35.pdf}}
\end{center}
\caption{A 200 meter by 50 meter portion of a particular realization of the Poisson forest with a tree density of $0.01$ is shown. The trees are shaded in dark grey. In Figures (a), (b), and (c), the shadow regions are shown in light grey for velocities of $25$, $30$, and $35$, respectively. The white region is the set of all points starting from which flight with the corresponding speed can be maintained for at least till the end of the region that has a length of 10,000 meters.}
\label{figure:computational_experiment:examples}
\end{figure*}

In Figure~\ref{figure:computational_experiment:select_scatter}, the distribution of the normalized maximum width, obtained from 200 independent trials for each tree density and speed pair, is illustrated for three select values of tree density, namely $\rho = 0.003, 0.01, 0.017$, and several values of speed. 
In Figure~\ref{figure:computational_experiment:all_data}, normalized maximum width averaged over all trials is shown for each all of $\rho$-$\speed$ pairs that were considered in the experiments. The data is linearly interpolated for all other intermediate values.

It is clear from the figures that the maximum width changes rapidly, as predicted by the theory, near the critical speed. The scatter plots for different values of $\speed$ seem to be remarkably similar. In the authors' experience, similar distributions are obtained across different tree densities when the expected number of trees is kept constant. Moreover, if the region $\Region{\Width}{\Length}$ is scaled so that the expected number of trees falling in this region increases, the distributions of the maximum width seem to be more concentrated around their mean and the transition at the critical speed seems to be more rapid.

Finally, in Figure~\ref{figure:computational_experiment:bounds_and_data}, the lower and upper bounds as predicted by Theorems~\ref{theorem:subcritical_poisson_flight:threshold_bound} and \ref{theorem:poisson_regime:super_criticality} are given together with contour plots of the averaged normalized maximum width.

\begin{figure*}
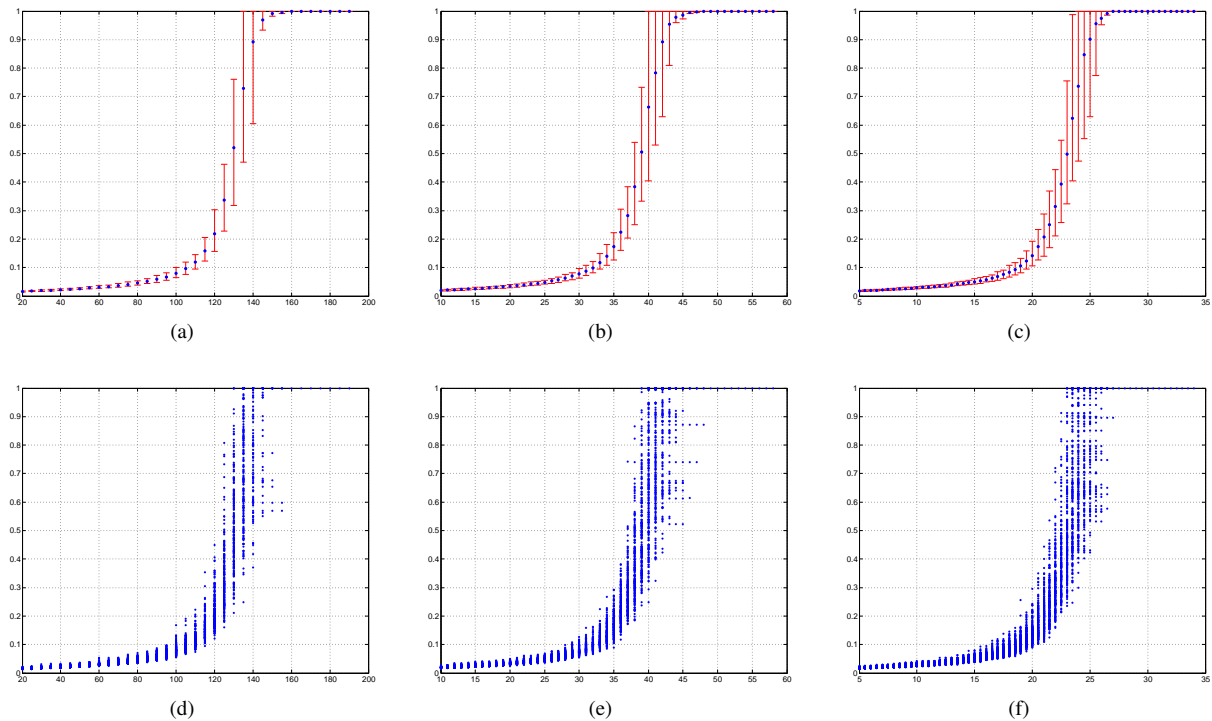
\centering
\subfigure[]{\includegraphics[width=0.3\linewidth, trim=0in 2.45in 0in 2in, clip=true]{./rho_0_003_re_stats.pdf}}
\subfigure[]{\includegraphics[width=0.3\linewidth, trim=0in 2.45in 0in 2in, clip=true]{./rho_0_01_stats.pdf}}
\subfigure[]{\includegraphics[width=0.3\linewidth, trim=0in 2.45in 0in 2in, clip=true]{./rho_0_017_re_stats.pdf}}\\
\subfigure[]{\includegraphics[width=0.3\linewidth, trim=0in 2.45in 0in 2in, clip=true]{./rho_0_003_re_scatter.pdf}}
\subfigure[]{\includegraphics[width=0.3\linewidth, trim=0in 2.45in 0in 2in, clip=true]{./rho_0_01_scatter.pdf}}
\subfigure[]{\includegraphics[width=0.3\linewidth, trim=0in 2.45in 0in 2in, clip=true]{./rho_0_017_re_scatter.pdf}}
\caption{
Normalized maximum width versus the speed is shown for tree densities of 0.003, 0.010, and 0.017 in plots (a), (b), and (c), respectively. Average normalized maximum width values are shown in blue and the 10\% and 90\% percentiles of its distribution are shown by red bars.
The distribution of the normalized maximum width are shown in scatter plots for the same tree densities in Figures (d), (e), and (f), respectively, for various values of speed.}
\label{figure:computational_experiment:select_scatter}
\end{figure*}

\begin{figure*}
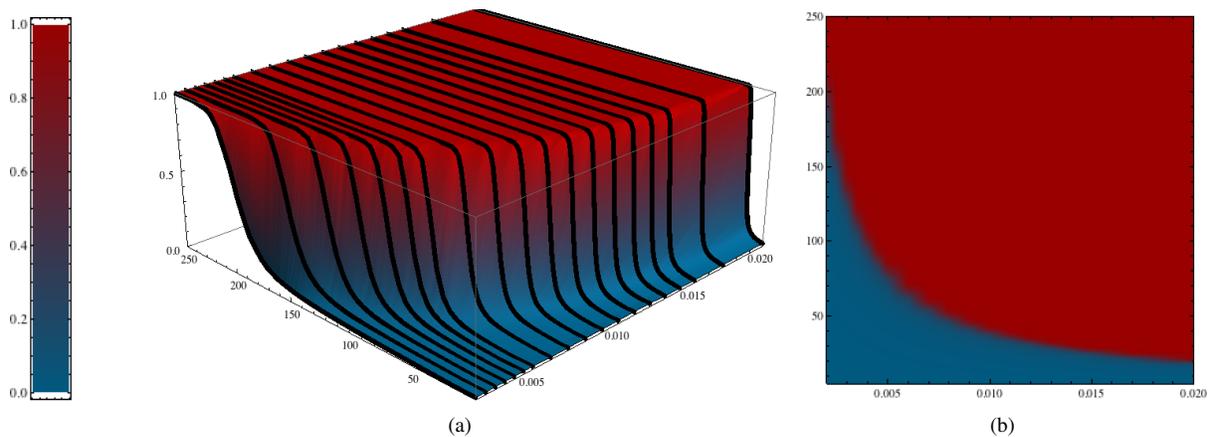

\begin{center}
\includegraphics[trim = 0 0.2in 0 0, clip=true, width=0.05\linewidth]{./max_width_all_data_legend.png}\qquad
\subfigure[]{\includegraphics[width=0.488\linewidth]{./max_width_all_data.png}}
\subfigure[]{\includegraphics[width=0.294\linewidth]{./max_width_all_data_density.png}}
\end{center}
\caption{Normalized maximum width is shown for several values of the tree density and speed pairs in Figure (a). The black lines are linearly interpolated and extrapolated values obtained through computational experiments. The surface is a linear interpolation of the data represented by the black lines. The surface is color coded according to normalized maximum width. A legend for the color coding is given on the left. The top view of the surface presented in Figure (a) is shown in Figure (b), where the rapid transition from small values of the normalized maximum width to large values can be observed clearly.}
\label{figure:computational_experiment:all_data}
\end{figure*}

\begin{figure}
\centerline{\includegraphics[width=0.9\linewidth]{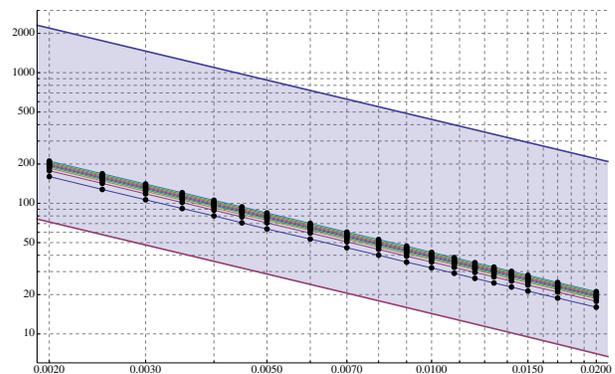}}
\caption{Rigorous upper and lower bounds for the critical speed are shown together with the level sets of normalized maximum width obtained using computational experiments. Black dots represent the tree density and speed pairs corresponding to the averaged maximum width at several values ranging from 0.1 to 0.9 with increments of 0.1. The axes are in logarithmic scale showing tree density (x axis) versus the velocity (y axis).}
\label{figure:computational_experiment:bounds_and_data}
\end{figure}

\section{Conclusion} \label{section:conclusion}

This paper considered a novel class of 
motion planning problems in stochastically-generated environments, where the statistics of the obstacle generation process is known, but the location and the shape of the obstacles are not known {\em a priori}. 
The problem was motivated by a bird flying through a dense forest environment. As a model, a planar environment forest environment in which the trees are randomly-placed disks with a random radius, was proposed, and a dynamics of the bird was parametrized by a speed parameter. 

In the case when the stochastic point process generating the location and sizes of the trees is ergodic, it was shown that the existence of infinite trajectories, i.e., those trajectories that diverge towards infinity, that are collision free is tied to a novel phase transition result: there exists a critical speed such that (i) when the bird is flying just above the critical speed, there exists no infinite collision-free trajectory, with probability one, (ii) when the bird is flying just below the critical speed, there exists at least one infinite collision-free trajectory, with probability one. 
The special case in which the bird is governed by a simple single-integrator dynamics and the locations of the trees is generated by a Poisson process and the radii of the trees are the same is also considered. Lower and upper bounds on the critical speed for this case are derived using discrete and continuum percolation theory, respectively. 
Moreover, for the same case, an equivalent percolation model is derived. Using this model, the phase diagram of high-speed flight is computed approximately through Monte-Carlo simulations. 

There are many directions for future work. In particular, we will consider the case when the bird has only limited range, i.e., not provided with the information of all the trees in the forest, and search for positive results, e.g., the probability that a bird with limited range can fly a particular distance in the sub-critical regime. 
The theory presented in this paper can be used to analyze a wide range of planning problems in which obstacles or other agents are encountered according to a spatio-temporal stochastic process. Our future work will also include identifying these problems that have practical applications.

\bibliography{karaman.frazzoli.tro11}
\bibliographystyle{unsrt}

\end{document}